\newtheorem{lemma}{Lemma}
\newtheorem{theorem}{Theorem}
\newtheorem{proposition}{Proposition}
\newtheorem{assumption}{Assumption}
\newtheorem{definition}{Definition}
\newtheorem{corollary}{Corollary}
\newtheorem{remark}{Remark}
\newtheorem{fact}{Fact}
\title{Towards Quantifying the Preconditioning Effect of Adam}
\author[1]{Rudrajit Das\thanks{Part of this work was done as a student researcher at Google DeepMind.}}
\author[2]{Naman Agarwal}
\author[1]{Sujay Sanghavi}
\author[1,3]{Inderjit S. Dhillon}
\affil[1]{UT Austin}
\affil[2]{Google DeepMind}
\affil[3]{Google}
\date{}
\begin{document}

\maketitle

\begin{abstract}
    \noindent There is a notable dearth of results characterizing the preconditioning effect of Adam and showing how it may alleviate the curse of ill-conditioning -- an issue plaguing gradient descent (GD). In this work, we perform a detailed analysis of Adam's preconditioning effect for quadratic functions and quantify to what extent Adam can mitigate the dependence on the condition number of the Hessian. Our key finding is that Adam can suffer less from the condition number but at the expense of suffering a dimension-dependent quantity. Specifically, for a $d$-dimensional quadratic with a diagonal Hessian having condition number $\kappa$, we show that the effective condition number-like quantity controlling the iteration complexity of Adam without momentum is $\mathcal{O}(\min(d, \kappa))$.  For a diagonally dominant Hessian, we obtain a bound of $\mathcal{O}(\min(d \sqrt{d \kappa}, \kappa))$ for the corresponding quantity. Thus, when $d < \mathcal{O}(\kappa^p)$ where $p = 1$ for a diagonal Hessian and $p = 1/3$ for a diagonally dominant Hessian, Adam can outperform GD (which has an $\mathcal{O}(\kappa)$ dependence). On the negative side, our results suggest that Adam can be worse than GD for a \textit{sufficiently non-diagonal} Hessian even if $d \ll \mathcal{O}(\kappa^{1/3})$; we corroborate this with empirical evidence. Finally, we extend our analysis to functions satisfying per-coordinate Lipschitz smoothness and a modified version of the Polyak-\L ojasiewicz condition.
\end{abstract}

\section{Introduction}
Adaptive methods such as \textit{Adam} \citep{kingma2014adam} are often favored over (stochastic) gradient descent in training deep neural networks. There are some papers theoretically showing certain kinds of meaningful benefits of adaptive methods from the optimization perspective; for instance, with sparse gradients \citep{duchi2011adaptive,zhou2018convergence}, exhibiting robustness to saddle points \citep{staib2019escaping,antonakopoulos2022adagrad}, easier tuning of hyper-parameters \citep{faw2022power,faw2023beyond,wang2023convergence}, exhibiting robustness to unbounded smoothness \citep{crawshaw2022robustness,faw2023beyond,wang2022provable}, etc. We survey related work in more detail in \Cref{sec:rel-wrk-app}. However, given that the most widely used adaptive method, namely, Adam (update rule stated in \Cref{prelim}) is frequently motivated as an adaptive diagonal preconditioner, \textit{there is a conspicuous absence of results characterizing its preconditioning effect, let alone demonstrating any benefits with respect to the condition number in any setting}. In this work, we seek to quantify the preconditioning effect of Adam with exact gradients (i.e., in the deterministic setting) and answer the following question:
\begin{center}
    \enquote{\textit{Compared to gradient descent, how much better can Adam perform in terms of dependence on the condition number?}}
\end{center}
Quadratic functions have a fixed Hessian, and so they are a natural first step to understand the preconditioning effect of Adam -- given that there seem to be no results on this \textit{in any setting}. We focus on $d$-dimensional quadratics of the form:
\begin{equation}
    \label{eq:f}
    f(\bm{x}) = \frac{1}{2}(\bm{x} - \bm{x}_{*})^{\top} \bm{Q} (\bm{x} - \bm{x}_{*}),
\end{equation}
where $\bm{x}, \bm{x}_{*} \in \mathbb{R}^d$ and the Hessian $\bm{Q} \in \mathbb{R}^{d \times d}$ is symmetric positive-definite\footnote{The addition of a constant term in $f(\bm{x})$ does not change any insight.}. 
Note that $\nabla f(\bm{x}) = \bm{Q} (\bm{x} - \bm{x}_{*})$ and $\text{arg min}_{\bm{x} \in \mathbb{R}^d} f(\bm{x}) = \bm{x}_{*}$. 
{Let $\bm{D}$ be the diagonal matrix containing the diagonal elements of $\bm{Q}$.}
Also, let the eigen-decomposition of $\bm{Q}$ be $\bm{U} \bm{\Lambda} \bm{U}^\top$; here $\bm{U}$ is a unitary matrix containing the eigenvectors of $\bm{Q}$ and $\bm{\Lambda}$ is a diagonal matrix containing the corresponding eigenvalues of $\bm{Q}$. 
Let $\kappa$ be the condition number of $\bm{Q}$ (i.e., the ratio of the maximum and minimum eigenvalues of $\bm{Q}$). 
It is well known in the literature that the iteration complexity of gradient descent (without momentum) scales as $\widetilde{\mathcal{O}}(\kappa)$\footnote{Here and subsequently in this section, $\widetilde{\mathcal{O}}$ hides (poly)-logarithmic factors depending on the error level to which we wish to converge.}; this can be prohibitive when $\kappa$ is extremely large or the Hessian is \enquote{ill-conditioned}. A key message of our paper is that Adam can suffer less from $\kappa$ but at the expense of suffering a quantity depending on the dimension $d$. 
With this preface, we will now state our \textbf{main results and contributions}.
\vspace{0.2 cm}
\\
\noindent \textbf{1.} In \Cref{thm3-diag}, we show that for \textit{diagonal} $\bm{Q}$, the iteration complexity of Adam is $\widetilde{\mathcal{O}}\big(\min(d, \kappa)\big)$ with constant probability (e.g., ${3}/{4}$) over random initialization. \textit{So when $d < \mathcal{O}(\kappa)$, Adam can outperform gradient descent (GD)} with constant probability. 
\vspace{0.2 cm}
\\
\noindent \textbf{2.} 
Let $\overline{\kappa}$ be the condition number of $\bm{D}^{-1/2} \bm{Q} \bm{D}^{-1/2}$; this a diagonally preconditioned version of $\bm{Q}$ known as \textit{Jacobi preconditioning} \citep{demmel1997applied,wathen2015preconditioning}. Also, let $\kappa_\text{diag}$ be the condition number of $\bm{D}$; this is just the ratio of the maximum and minimum diagonal elements of $\bm{Q}$ and it is $\leq \kappa$. In \Cref{thm3} and the discussion after it, we show that for \textit{general} (non-diagonal) $\bm{Q}$, the complexity of Adam is $\widetilde{\mathcal{O}}\big(\min\big(d \overline{\kappa} \sqrt{d \overline{\kappa} \kappa_\text{diag}}, \overline{\kappa} \kappa_\text{diag}\big)\big)$ with constant probability over random initialization.
\textit{So when $d < \mathcal{O}\big(\frac{1}{\overline{\kappa}} \big(\frac{\kappa^2}{\kappa_\textup{diag}}\big)^{{1}/{3}}\big)$, Adam can outperform GD} with constant probability. 
\vspace{0.2 cm}
\\
\noindent \textbf{3.} For \textit{diagonally dominant} $\bm{Q}$ (see \Cref{diag-dom}), we show that $\overline{\kappa} = \mathcal{O}(1)$ and $\kappa = \mathcal{O}(\kappa_\textup{diag})$ in \Cref{prop:bar-kappa}. So for such $\bm{Q}$, \textit{Adam can outperform GD when $d < \mathcal{O}\big({\kappa^{1/3}}\big)$}. 
\vspace{0.2 cm}
\\
\noindent \textbf{4.} On the negative side, our result suggests that when $\overline{\kappa}$ is large enough, which is possible for sufficiently \textit{non-diagonal $\bm{Q}$}, \textit{Adam can be worse than GD}; see the discussion right after \Cref{rmk-diag-dom}. We corroborate this with empirical evidence in \Cref{fig:adam-expt}. Also somewhat surprisingly, we show that Adam may \textit{not} converge to zero function value asymptotically. We characterize the asymptotic point(s) to which it may converge in \Cref{sec-fixed-pt}.
\vspace{0.2 cm}
\\
\noindent \textbf{5.} In \Cref{sec-smooth-plc}, we turn our attention to functions that satisfy per-coordinate Lipschitz smoothness (\Cref{def-1-jan20}) and a modified version of the Polyak-\L ojasiewicz (PL) condition (\Cref{new-PLC}). As per \Cref{thm-adanorm-plc} and the discussion in \Cref{rmk11-jan29}, Adam can be better than GD when an initialization-dependent quantity which implicitly grows with the dimension is smaller than the ratio of the maximum and minimum per-coordinate smoothness constants.
\vspace{0.2 cm}
\\
\noindent Our main results for quadratic functions are summarized in \Cref{tab-sum}; they are directly relevant to linear regression and wide deep neural networks that evolve as linear models \citep{lee2019wide}, where the objective function is quadratic. 
\vspace{0.2 cm}
\\
\textbf{High-level overview of proof techniques:} Key ingredients of our analysis include envisioning the evolution of Adam's iterates as a \enquote{pseudo-linear} system (\Cref{rmk-plin}), applying judicious manipulations to \enquote{cancel out} the effect of $\kappa$ as much as possible (\Cref{rmk-sym-2}) and deriving a decay rate with a two-stage analysis procedure (\Cref{rmk-2-stage}). We discuss the proof outline of \Cref{thm3} (result for general $\bm{Q}$) in \Cref{pf-sketch}.

\begin{table}[!htb]
\begin{minipage}{\textwidth}
  \centering
\begin{tabular}{|c|c|c|}
\hline
\textbf{Nature of Hessian $\bm{Q}$} & \textbf{Adam} (w/o momentum) & \textbf{GD}
\\
\hline
Diagonal & $\mathcal{O}\Big(\min(d, \kappa) \log (\frac{1}{\varepsilon}) \log \log (\frac{1}{\varepsilon})\Big)$ & $\mathcal{O}\Big(\kappa \log (\frac{1}{\varepsilon})\Big)$ 
\\
\hline
Diagonally Dominant (Def. \ref{diag-dom}) & $\mathcal{O}\Big(\min\big(d \sqrt{d \kappa}, \kappa\big) \log (\frac{1}{\varepsilon}) \log \log (\frac{1}{\varepsilon})\Big)$ & $\mathcal{O}\Big(\kappa \log (\frac{1}{\varepsilon})\Big)$
\\
\hline
General & $\mathcal{O}\Big(\min\big(d \overline{\kappa} \sqrt{d \overline{\kappa} \kappa_\text{diag}}, \overline{\kappa} \kappa_\text{diag}\big) \log (\frac{1}{\varepsilon}) \log \log (\frac{1}{\varepsilon})\Big)$ & $\mathcal{O}\Big(\kappa \log (\frac{1}{\varepsilon})\Big)$ 
\\
\hline
\end{tabular}
\end{minipage}
\caption{
Iteration complexities of Adam w/o momentum and GD to converge to $\varepsilon$ error for minimizing $f(\bm{x}) = \frac{1}{2}(\bm{x} - \bm{x}_{*})^{\top} \bm{Q} (\bm{x} - \bm{x}_{*})$ (\cref{eq:f}). Here $d$ is the dimension, $\kappa$ is the condition number of $\bm{Q}$, $\overline{\kappa}$ is the condition number of $\bm{D}^{-1/2} \bm{Q} \bm{D}^{-1/2}$ and $\kappa_\text{diag} \leq \kappa$ is the condition number of $\bm{D}$, where $\bm{D}$ is the diagonal matrix containing the diagonal elements of $\bm{Q}$. The results for Adam are with constant probability (e.g., $3/4$) over random initialization. 
}
\label{tab-sum}
\end{table}

\section{Related Work}
\label{sec:rel-wrk-app}
There is copious amount of work on the convergence of adaptive gradient methods \citep{ward2020adagrad,reddi2018convergence,chen2018convergence,zaheer2018adaptive,zhou2018convergence,li2019convergence,huang2021super,da2020general,shi2021rmsprop,defossez2022simple,zhang2022adam}. Some papers show certain kinds of meaningful benefits of adaptive methods from the optimization perspective, and we focus on these papers. For e.g., adaptive methods can outperform non-adaptive methods when the gradients are sparse \citep{duchi2011adaptive,zhou2018convergence}.  \cite{staib2019escaping,antonakopoulos2022adagrad} show that adaptive methods are robust to the presence of saddle points. \cite{faw2022power,faw2023beyond,wang2023convergence} show that setting hyper-parameters is easier for adaptive methods in the sense that we do not need to be privy to problem-dependent parameters such as the smoothness constant, noise variance, etc. Moreover, adaptive methods enjoy convergence even with unbounded smoothness \citep{crawshaw2022robustness,faw2023beyond,wang2022provable}. \cite{pan2023toward} attempt to justify the success of adaptive methods over SGD in deep networks by \textit{empirically} showing that the update direction of adaptive methods is associated with a much smaller directional sharpness value than SGD.
Another rather weak motivation for the effectiveness of Adam is that it can approximately mimic natural gradient descent \citep{amari1998natural}; as pointed out by \cite{martens2020new,balles2018dissecting} there are issues with this view.

\section{Preliminaries}
\label{prelim}
Suppose we wish to minimize $f: \mathbb{R}^d \xrightarrow{} \mathbb{R}$ with access to exact gradients. Let $\bm{x}_k$ be the iterate of Adam in the $k^{\text{th}}$ iteration and let $\bm{g}_k = \nabla f(\bm{x}_k)$. For each $i \in \{1,\ldots,d\}$, let 
\begin{equation}
    \label{eq:2-dec28}
    {m}_{k}^{(i)} = \beta_1 {m}_{k-1}^{(i)} + {g}_k^{(i)} \text{ with } {m}_{0}^{(i)} = 0 \text{ and } {v}_{k}^{(i)} = \beta_2 {v}_{k-1}^{(i)} + \big({g}_k^{(i)}\big)^2 \text{ with } {v}_{0}^{(i)} = 0,
\end{equation}
where $\beta_1, \beta_2 \in (0,1)$ are known as the first moment/momentum and second moment hyper-parameters\footnote{For a vector $\bm{y}$, we denote its $i^\text{th}$ coordinate by $y^{(i)}$.}. Before stating the exact update rule that we analyze in this work, we will state its \enquote{nearest neighbor} to the vanilla version of Adam proposed by \cite{kingma2014adam}. For this nearest neighbor, the update rule of the $i^{\text{th}}$ coordinate in the $k^{\text{th}}$ iteration is:
\begin{equation}
    \label{eq:1-feb8}
    x^{(i)}_{k+1} = x^{(i)}_{k} - \frac{\alpha {m}_k^{(i)}}{\sqrt{{v}_k^{(i)} + \delta}},
\end{equation} 
where $\alpha$ is the step-size and $\delta$ is a small correction term in the update's denominator (to avoid division by zero). This is very closely related to the vanilla version of Adam proposed by \cite{kingma2014adam}; we discuss this close connection in \Cref{connection-app}. For the variant that we propose and analyze, $\delta$ in \cref{eq:1-feb8} is replaced by a coordinate-dependent and iteration-dependent term; it is as follows:
\begin{equation}
    \label{eq:3-dec28}
    x^{(i)}_{k+1} = x^{(i)}_{k} - \frac{\alpha {m}_k^{(i)}}{\sqrt{v_k^{(i)} + \max\Big(\big({g}_0^{(i)}\big)^2, \phi^2\Big)\delta_{k}^{2}}}.
\end{equation}
Note that $\delta$ in \cref{eq:1-dec28} is replaced by $\max\big(\big|{g}_0^{(i)}\big|, \phi\big)\delta_{k}$. It is worth pointing out that the \textit{coordinate-dependent $\delta$} that we set above plays a crucial role in improving the dependence with respect to the condition number (see Remarks \ref{per-coordinate-eps-diag} and \ref{per-coordinate-eps-gen}).

Our results for Adam in this paper are \textit{without first moment/momentum, i.e., with $\beta_1 = 0$}\footnote{Adam with $\beta_1 = 0$ is the same as \enquote{RMSProp} \citep{hinton2012neural}.}. The key difference of Adam from GD-style algorithms is the square root diagonal preconditioner involving the exponential moving average of squared gradients -- \textit{this is the aspect that we focus on}. 

Also, recall the update rule of gradient descent (GD) with step-size $\alpha$ is $x^{(i)}_{k+1} = x^{(i)}_{k} - {\alpha {g}_k^{(i)}}$.

\subsection{Notation}
For any $n \in \mathbb{N}$, we define $[n] := \{1,\ldots,n\}$. Vectors and matrices are in bold font. Let $\bm{e}_i$ denote the $i^\text{th}$ canonical vector, i.e., the vector of all zeros except a one in the $i^\text{th}$ coordinate. 
We denote the $i^{\text{th}}$ coordinate of a vector $\bm{v} \in \mathbb{R}^p$ by $v^{(i)}$, where $i \in [p]$. For a matrix $\bm{M} \in \mathbb{R}^{m \times n}$, we denote its $(i,j)^{\text{th}}$ element by $M[i,j]$, where $i \in [m]$ and $j \in [n]$. We denote the maximum and minimum singular values/eigenvalues of $\bm{M}$ by $\sigma_\text{max}(\bm{M})$ and $\sigma_\text{min}(\bm{M})$/$\lambda_\text{max}(\bm{M})$ and $\lambda_\text{min}(\bm{M})$, respectively. 
$\text{cond}(\bm{M})$ denotes the condition number of $\bm{M}$, i.e., $\frac{\sigma_\text{max}(\bm{M})}{\sigma_\text{min}(\bm{M})}$. 
We denote the Hadamard (i.e., element-wise) product of two matrices $\bm{A}, \bm{B} \in \mathbb{R}^{m \times n}$ by $\bm{A} \circ \bm{B}$. $\textup{Unif}[a, b]$ denotes the uniform distribution over $[a,b]$.

\subsection{Important Definitions}
\label{definitions}
Refer to the problem setting in \cref{eq:f}. Recall that $\bm{U} \bm{\Lambda} \bm{U}^\top$ is the eigen-decomposition of $\bm{Q}$ and $\bm{D}$ is the diagonal matrix containing the diagonal elements of $\bm{Q}$ (i.e., $\bm{D} = \bm{Q} \circ \bm{\text{{I}}}$). 
We will define some important quantities here.
\begin{itemize}
    \item Let $\lambda_i = {\Lambda}[i,i]$ be the $i^\text{th}$ eigenvalue of $\bm{Q}$. Further, let $\lambda_{\max}(\bm{Q}) = \kappa$ and $\lambda_{\min}(\bm{Q}) = 1$ for simplicity; so, $\text{cond}(\bm{Q}) = \kappa$.
    \item Let $q_i = Q[i,i] = D[i,i]$ be the $i^\text{th}$ diagonal element of $\bm{Q}$ (and $\bm{D}$). Further, let $q_\text{max} = \max_{i \in [d]} q_i$ and $q_\text{min} = \min_{i \in [d]} q_i$. Also, define $\kappa_\text{diag} := \big(\frac{q_\text{max}}{q_\text{min}}\big)$. Then, $\text{cond}(\bm{D}) = \kappa_\text{diag}$. It holds that $\kappa_\text{diag} \leq \kappa$ \footnote{For any $i \in [d]$, $Q[i,i] = \bm{e}_i^\top \bm{Q} \bm{e}_i \in [1, \kappa]$ (as $\lambda_{\min}(\bm{Q}) = 1$ and $\lambda_{\max}(\bm{Q}) = \kappa$). So, $q_\text{max} \leq \kappa$ and $q_\text{min} \geq 1$.} with equality holding when $\bm{Q}$ is diagonal and equal to $\bm{D}$.
    \item Define $\bar{\bm{Q}} := \bm{D}^{-1/2} \bm{U} \bm{\Lambda}^{1/2}$ and let $\text{cond}(\bar{\bm{Q}}) = \sqrt{\overline{\kappa}}$. Note that  $\bar{\bm{Q}}\bar{\bm{Q}}^\top = \bm{D}^{-1/2} \bm{Q} \bm{D}^{-1/2}$ and $\text{cond}(\bm{D}^{-1/2} \bm{Q} \bm{D}^{-1/2}) = \text{cond}(\bar{\bm{Q}}\bar{\bm{Q}}^\top) = \overline{\kappa}$.  
    \item Define $\widehat{\bm{Q}} := \bm{D}^{-1} \bm{U} \bm{\Lambda}^{1/2} = \bm{D}^{-1/2} \bar{\bm{Q}}$. Let $\sigma_\text{min}(\widehat{\bm{Q}}) = \rho_1$ and $\sigma_\text{max}(\widehat{\bm{Q}}) = \rho_2$. Also, define $\widehat{\kappa} := \big(\frac{\rho_2}{\rho_1}\big)$. Then, $\text{cond}(\widehat{\bm{Q}}) = \widehat{\kappa}$ and it holds that $\widehat{\kappa} \leq \sqrt{\kappa_\text{diag} \overline{\kappa}}$.\footnote{This is because $\widehat{\kappa} = \text{cond}(\widehat{\bm{Q}}) = \text{cond}(\bm{D}^{-1/2} \bar{\bm{Q}}) \leq \text{cond}(\bm{D}^{-1/2}) \text{cond}(\bar{\bm{Q}}) = \sqrt{\kappa_\text{diag} \overline{\kappa}}$.}
\end{itemize}
Observe that if $\bm{Q}$ is diagonal (in which case, $\bm{U} = \bm{\text{{I}}}$ and $\bm{Q} = \bm{D} = \bm{\Lambda}$), $\overline{\kappa} = 1$, $\rho_1 = \frac{1}{\sqrt{\kappa}}$, $\rho_2 = 1$ and hence, $\widehat{\kappa} = \sqrt{\kappa} = \sqrt{\kappa_\text{diag}}$. Also, $\bm{D}^{-1/2} \bm{Q} \bm{D}^{-1/2}$ is a \textit{diagonally preconditioned} version of $\bm{Q}$ and this type of preconditioning is known as \textit{Jacobi preconditioning} \citep{wathen2015preconditioning}.

\section{Main Results for Quadratic Functions}
\label{adam-v-gd}
Here we will compare the convergence of Adam and GD (both without momentum) for quadratic functions. We first state the folklore iteration complexity of GD with step-size $\alpha ={1}/{\kappa}$; recall that the step-size of GD must not exceed two times the inverse of the maximum eigenvalue of the Hessian to ensure convergence.
\begin{theorem}[\textbf{GD: Folklore}]
    \label{thm5}
    Suppose our initialization is $\bm{x}_0$. Set $\alpha ={1}/{\kappa}$. Then, $f(\bm{x}_{\widehat{K}}) < \frac{\varepsilon^2}{2}$ in $$\widehat{K} = \mathcal{O}\Big({\kappa} \log \Big(\frac{\sqrt{\kappa}\|\bm{x}_{0} - \bm{x}_{*}\|_2}{\varepsilon}\Big) \Big)$$ iterations of GD.
\end{theorem}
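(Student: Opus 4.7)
The plan is to reduce the GD dynamics on this quadratic to a linear recursion in the error vector $\bm{e}_k := \bm{x}_k - \bm{x}_*$ and then extract geometric decay from the spectrum of $\bm{Q}$. Since $\nabla f(\bm{x}_k) = \bm{Q}(\bm{x}_k - \bm{x}_*) = \bm{Q} \bm{e}_k$, the update rule immediately gives $\bm{e}_{k+1} = (\bm{I} - \alpha \bm{Q}) \bm{e}_k$, hence by induction $\bm{e}_k = (\bm{I} - \alpha \bm{Q})^k \bm{e}_0$. This is the standard linear contraction, and it is the whole reason GD works cleanly on quadratics.

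Next I would bound the operator norm of $\bm{I} - \alpha \bm{Q}$. With $\alpha = 1/\kappa$ and the convention $\lambda_{\min}(\bm{Q}) = 1$, $\lambda_{\max}(\bm{Q}) = \kappa$, the eigenvalues of $\bm{I} - \alpha \bm{Q}$ lie in $[1 - 1, 1 - 1/\kappa] = [0, 1 - 1/\kappa]$, so $\|\bm{I} - \alpha \bm{Q}\|_2 = 1 - 1/\kappa$. Therefore $\|\bm{e}_k\|_2 \leq (1 - 1/\kappa)^k \|\bm{e}_0\|_2 \leq e^{-k/\kappa} \|\bm{x}_0 - \bm{x}_*\|_2$, using the standard inequality $1 - t \leq e^{-t}$.

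Then I would convert this error-norm bound into a function-value bound. Because $f(\bm{x}_k) \leq \tfrac{1}{2} \lambda_{\max}(\bm{Q}) \|\bm{e}_k\|_2^2 = \tfrac{\kappa}{2} \|\bm{e}_k\|_2^2$, we get
\[
f(\bm{x}_k) \leq \frac{\kappa}{2} e^{-2k/\kappa} \|\bm{x}_0 - \bm{x}_*\|_2^2.
\]
Demanding the right-hand side be below $\varepsilon^2/2$ and solving for $k$ yields $k \geq \kappa \log\!\big(\sqrt{\kappa}\, \|\bm{x}_0 - \bm{x}_*\|_2 / \varepsilon\big)$, which is exactly the stated $\widehat{K}$.

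There is no real obstacle here, since everything is linear and the step size $\alpha = 1/\kappa$ was chosen precisely so that $\bm{I} - \alpha \bm{Q}$ is a contraction with rate $1 - 1/\kappa$. The only judgement call is where to absorb constants (for example, whether to use $(1-1/\kappa)^k \leq e^{-k/\kappa}$ or to track $(1-1/\kappa)^k$ directly), but either route produces the $\mathcal{O}(\kappa \log(\cdot))$ complexity quoted in the theorem.
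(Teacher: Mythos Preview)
Your proof is correct and is precisely the standard argument: linearize the error recursion, read off the contraction rate $1-1/\kappa$ from the spectrum of $\bm{I}-\alpha\bm{Q}$, and convert to function value via $f(\bm{x}_k)\le \tfrac{\kappa}{2}\|\bm{e}_k\|_2^2$. The paper does not actually prove this theorem---it is stated as folklore---so there is no alternative route to compare against; your writeup is exactly the kind of argument the folklore label is pointing to.
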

Note that the iteration complexity of GD depends on the condition number of $\bm{Q}$, viz., $\kappa$. We shall state the corresponding iteration complexity of Adam (with $\beta_1 = 0$) for diagonal and general $\bm{Q}$ separately. Let us look at the diagonal case first.

\begin{theorem}[\textbf{Adam: Diagonal $\bm{Q}$}]
    \label{thm3-diag}
    Let $\bm{Q} = \bm{\Lambda}$ be diagonal. Suppose $\|\bm{x}_{*}\|_\infty \leq B$ and our initialization $\bm{x}_0$ is sampled randomly such that ${x}_0^{(i)} \underset{\textup{iid}}{\sim} \textup{Unif}[-2B, 2B]$ $\forall$ $i \in [d]$. There exist $\alpha$, $\beta_2$, $\delta_k$ and $\phi$ such that with a probability of at least $p$ over the randomness of $\bm{x}_0$, $f(\bm{x}_{{K}^{*}}) < \frac{\varepsilon^2}{2}$ in
    \begin{equation*}
        K^{*} = \mathcal{O}\Bigg(\kappa_\textup{Adam} {\log\Big(\frac{\sqrt{d \kappa} B}{\varepsilon}\Big) {\log\Big(\log\Big(\frac{\sqrt{d \kappa} \|\bm{x}_{0} - \bm{x}_{*}\|_\infty}{\varepsilon}\Big)\Big)}}\Bigg)
    \end{equation*}
    iterations of {Adam}, where $\kappa_\textup{Adam} := \min\Big(\frac{d}{\log (1/p)} + 1, \kappa\Big)$.
\end{theorem}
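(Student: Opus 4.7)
The plan is to exploit the diagonal structure of $\bm{Q}$ to decouple Adam's iteration into $d$ independent one-dimensional recursions that share the same dynamical shape modulo their initialization scale, and then combine a two-stage contraction argument per coordinate with a randomization argument that lower bounds the worst starting scale.

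Let $z_k^{(i)} := x_k^{(i)} - x_*^{(i)}$; since $\bm{Q}=\bm{\Lambda}$ we have $g_k^{(i)} = \lambda_i z_k^{(i)}$ and $v_k^{(i)} = \lambda_i^2 \sum_{j=0}^{k}\beta_2^{k-j}(z_j^{(i)})^2$. If $\phi$ is chosen small enough that $\phi \leq \lambda_i |z_0^{(i)}|$ for every $i$ (feasible under the high-probability lower bound on $\min_i|z_0^{(i)}|$ derived below), the $\max$ in the denominator of the Adam update is attained by $(g_0^{(i)})^2$, so $\lambda_i$ cancels from numerator and denominator and each coordinate follows the $\lambda$-free recursion
\begin{equation*}
    z_{k+1}^{(i)} = z_k^{(i)}\left(1 - \frac{\alpha}{\sqrt{S_k^{(i)} + (z_0^{(i)})^2 \delta_k^2}}\right), \qquad S_k^{(i)} := \sum_{j=0}^{k}\beta_2^{k-j}(z_j^{(i)})^2.
\end{equation*}
This is the ``cancel out $\kappa$'' maneuver flagged in the paper's proof overview: every coordinate follows identical dynamics and the spectrum of $\bm{Q}$ enters only through the cancellation.

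I would then carry out a two-stage contraction analysis on this 1D recursion. In Stage 1, while $|z_k^{(i)}| \gtrsim |z_0^{(i)}|\delta_k$, the EMA term dominates the denominator; for a suitably chosen $\beta_2$, $\sqrt{S_k^{(i)}}$ stays proportional to $|z_0^{(i)}|$ as long as $|z_k^{(i)}|$ is an $\Omega(1)$ fraction of $|z_0^{(i)}|$, so the recursion linearizes to a near-geometric contraction with per-step factor $1 - \Theta(\alpha/|z_0^{(i)}|)$. Taking $\alpha \asymp \min_i |z_0^{(i)}|$ keeps every coordinate stable (no overshoot), and the slowest coordinate (the one with the largest $|z_0^{(i)}|$, of order $B$) reaches the Stage 2 threshold in $O(\max_i|z_0^{(i)}|/\alpha)$ iterations. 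In Stage 2, $|z_k^{(i)}| \ll |z_0^{(i)}|\delta_k$ and the denominator stabilizes at $|z_0^{(i)}|\delta_k$, turning the recursion into a clean geometric $z_{k+1}^{(i)} = z_k^{(i)}(1 - \alpha/(|z_0^{(i)}|\delta_k))$. With a slowly shrinking $\delta_k$ schedule, the $\log(1/\varepsilon)$ factor comes from the number of geometric halvings, and the $\log\log(1/\varepsilon)$ overhead comes from the bootstrap refinement needed to drive every coordinate simultaneously below the per-coordinate target $\varepsilon/\sqrt{d\kappa}$, which suffices since $f(\bm{x}) = \tfrac{1}{2}\sum_i \lambda_i(z^{(i)})^2 \leq \tfrac{1}{2} \kappa d \cdot (\varepsilon/\sqrt{d\kappa})^2 = \varepsilon^2/2$.

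The dimension dependence in $\kappa_{\textup{Adam}}$ comes from an elementary probabilistic lower bound on $\min_i|z_0^{(i)}|$. Since $|x_*^{(i)}|\leq B$ and $x_0^{(i)} \sim \textup{Unif}[-2B,2B]$ are independent, $z_0^{(i)}$ has density at most $1/(4B)$ near zero, hence $\Pr(|z_0^{(i)}|<t)\leq t/(2B)$. By independence across coordinates,
\begin{equation*}
    \Pr\Big(\min_i |z_0^{(i)}| \geq t\Big) \geq \Big(1 - \tfrac{t}{2B}\Big)^d \geq p \quad \text{whenever} \quad t \leq \frac{B\log(1/p)}{d}.
\end{equation*}
Plugging this lower bound into the slowest-coordinate rate from Stage 1 (a factor of $\max_i|z_0^{(i)}|/\alpha \asymp B/\min_i |z_0^{(i)}|$) produces the $d/\log(1/p)$ term. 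The alternative $\kappa$ term in the $\min$ comes from a separate, deterministic analysis of the same $\lambda$-free recursion viewed as Jacobi-preconditioned GD: in the diagonal case $\overline{\kappa}=1$ and $\kappa_{\textup{diag}}=\kappa$, so the standard GD analysis applied to the preconditioned system gives an $O(\kappa)$ rate regardless of the initialization. Taking the better of the two yields $\kappa_{\textup{Adam}} = \min(d/\log(1/p)+1,\kappa)$.

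The main obstacle is the Stage 1 control: $S_k^{(i)}$ carries memory of earlier (large) iterates, so $\sqrt{S_k^{(i)}}$ need not equal $|z_k^{(i)}|$ but rather a quantity of order $|z_0^{(i)}|$ that slowly tracks the decaying iterate; establishing rigorously that this gap produces the claimed per-step contraction $\Theta(\alpha/|z_0^{(i)}|)$ likely requires a potential-function argument tracking both $S_k^{(i)}/(|z_0^{(i)}|)^2$ and the ratio $|z_k^{(i)}|/|z_0^{(i)}|$ and showing that the $\delta_k$ schedule synchronizes the Stage 1 $\to$ Stage 2 transition across all $d$ coordinates simultaneously. A secondary subtlety is that $\phi$ must simultaneously satisfy $\phi \leq \lambda_i|z_0^{(i)}|$ for every $i$, which is precisely why the theorem is probabilistic rather than deterministic.
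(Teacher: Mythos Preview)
Your decoupling into $\lambda$-free one-dimensional recursions and the probabilistic lower bound on $\min_i|z_0^{(i)}|$ are both correct and match the paper. However, there is a genuine gap in how you obtain the $\kappa$ branch of $\kappa_{\textup{Adam}} = \min(d/\log(1/p)+1,\kappa)$.

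You propose choosing $\phi$ small enough that $\phi \leq \lambda_i|z_0^{(i)}|$ for every $i$, so that the $\max$ in the denominator is always attained by $(g_0^{(i)})^2$ and the $\lambda_i$'s cancel completely. Under that choice the recursion is genuinely $\lambda$-free and cannot by itself produce a $\kappa$-dependent rate; your claim that ``the standard GD analysis applied to the preconditioned system gives an $O(\kappa)$ rate'' is internally inconsistent, since in the diagonal case $\overline{\kappa}=1$ and Jacobi-preconditioned GD would give $O(1)$, not $O(\kappa)$. The paper does the opposite: it sets $\phi = B$, so that $\phi_i = \phi/\lambda_i \geq B/\kappa$ furnishes a \emph{floor} on the correction term independent of the random draw of $z_0^{(i)}$. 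Both branches of the $\min$ then emerge from a single analysis: the descent window $\tilde{K}_i$ is controlled by $\max(|z_0^{(i)}|, \phi_i) \geq \max(\min_j|z_0^{(j)}|, B/\kappa)$, and the final complexity reads $\max(B,\phi)/\max(\widetilde{p}B/d, \phi/\kappa)$, which with $\phi=B$ collapses to $\min(d/\widetilde{p},\kappa)$. The $\kappa$ term is not a separate GD argument; it is precisely the $\phi$-floor rescuing coordinates whose $|z_0^{(i)}|$ happens to be tiny.

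A secondary point: your two-stage decomposition (``EMA dominates'' then ``correction term dominates'') is not the paper's mechanism. The paper's two stages are a bootstrap: Stage~1 uses the crude bound $S_k^{(i)} \leq (z_0^{(i)})^2/(1-\beta_2)$ to obtain $(z_k^{(i)})^2 \leq \gamma^k(z_0^{(i)})^2$ with $\beta_2$ tuned so that $\beta_2-\gamma = 1-\beta_2$; Stage~2 feeds this back to get $S_k^{(i)} \leq \beta_2^{k+1}(z_0^{(i)})^2/(1-\beta_2)$, yielding a per-step factor $1 - c\,\beta_2^{-(k+1)/2}$ and hence the doubly-exponential decay $|z_k^{(i)}| \leq \exp\big(-2(\beta_2^{-k/2}-1)\big)|z_0^{(i)}|$. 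The $\log\log(1/\varepsilon)$ factor comes directly from inverting this bound, not from any cross-coordinate synchronization. The $\delta_k$ schedule is chosen so that the correction term decays at the \emph{same} rate $\beta_2^k$ as the refined EMA bound, so there is no regime switch to track. Your phase-transition picture might be made to work, but it would need additional care, and it does not by itself explain the $\log\log$.
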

The detailed version and proof of \Cref{thm3-diag} can be found in \Cref{pf-2}. Observe that the \textit{condition number-like quantity} $\kappa_\textup{Adam}$ appearing in the iteration complexity of Adam is $\mathcal{O}(\min(d, \kappa))$ with constant probability (e.g., ${3}/{4}$). 

It is true that the dependence on $\varepsilon$ for Adam is slightly worse than GD (\Cref{thm5}). However, logarithmic dependence on $\nicefrac{1}{\varepsilon}$ is pretty benign and the \textit{problem-dependent constant factors} multiplying the $\log (\nicefrac{1}{\varepsilon})$ and $\log (\nicefrac{1}{\varepsilon}) \log \log (\nicefrac{1}{\varepsilon})$ terms for GD and Adam -- viz., $\kappa$ and $\kappa_\textup{Adam}$ -- dictate the overall convergence speed. Hence, \textit{we are interested in these problem-dependent constant factors} and we ignore the dependence on $\varepsilon$ in all subsequent discussions.

\begin{remark}
    \label{cor-2}
    When $d < \mathcal{O}(\kappa)$, $\kappa_\textup{Adam} < \kappa$ with constant probability over the randomness of $\bm{x}_0$.
\end{remark}
\begin{remark}
    \label{per-coordinate-eps-diag}
    The {coordinate-dependent $\delta$} that we chose in \cref{eq:3-dec28} is critical to $\kappa_\textup{Adam}$ being $\mathcal{O}(\min(d, \kappa))$. Specifically, without the $\big({g}_0^{(i)}\big)^2$ terms, our bound for $\kappa_\textup{Adam}$ is $\mathcal{O}(\kappa)$.
\end{remark}
Let us now move on to the case of general (non-diagonal) $\bm{Q}$. We begin by introducing an assumption on the nature of random initialization.
\begin{assumption}[\textbf{Random Initialization}]
    \label{dist}
    Let $\bm{T} = \bm{D}^{-1} \bm{Q}$ and $\|\bm{T} \bm{x}_{*}\|_\infty \leq B$. Our initialization $\bm{x}_0$ is sampled randomly such that each coordinate of $\bm{T} \bm{x}_0$ $\in [-2B, 2B]$. Also, let $\widehat{\bm{g}}_0 = \bm{T} (\bm{x}_0 - \bm{x}_{*})$. For any $t \in [0,1]$, $\mathbb{P}\Big(\min_{i \in [d]} |\widehat{g}_0^{(i)}| > \frac{2 t B}{d}\Big) \geq \exp\Big(-\frac{\zeta t^\theta}{1 - \frac{\zeta t^\theta}{d}}\Big)$ for some $\theta \in (0,1]$ and $\xi \geq 1$.
\end{assumption}
The above assumption on the distribution of $\min_{i \in [d]} |\widehat{g}_0^{(i)}|$ is \textit{weaker} than assuming each coordinate of $\bm{T} \bm{x}_0$ is drawn i.i.d. from $\textup{Unif}[-2B, 2B]$. If we assumed the latter, then we would have $\mathbb{P}\Big(\min_{i \in [d]} |\widehat{g}_0^{(i)}| > \frac{2 t B}{d}\Big) \geq \exp\Big(-\frac{t}{1 - \frac{t}{d}}\Big)$ (see \cref{eq:86-jan8} in the proof of \Cref{lem5-jan3}) which is $\geq \exp\Big(-\frac{\zeta t^\theta}{1 - \frac{\zeta t^\theta}{d}}\Big)$ for $t \in [0,1]$ as $\theta \in (0,1]$ and $\xi \geq 1$. 

{Ideally, we would have liked to derive results with $\bm{x}_0$ being sampled uniformly at random like the diagonal case (\Cref{thm3-diag}). Unfortunately, this seems difficult in general but when $\bm{Q}$ is close to diagonal, this is possible; we discuss these things in \Cref{unif-hard}.}

Let us now look at the result for a general $\bm{Q}$ under \Cref{dist}.
\begin{theorem}[\textbf{Adam: General $\bm{Q}$}]
    \label{thm3}
    Suppose our initialization $\bm{x}_0$ satisfies \Cref{dist}. Fix some $p \in [\frac{1}{e}, 1)$. There exist $\alpha$, $\beta_2$, $\delta_k$ and $\phi$ such that with a probability of at least $p$ over the randomness of $\bm{x}_0$, $f(\bm{x}_{{K}^{*}}) < \frac{\varepsilon^2}{2}$ in
    \begin{equation*}
        K^{*} = \mathcal{O}\Bigg(\kappa_\textup{Adam} \log\Big(\frac{\sqrt{d} B}{\rho_1 \varepsilon}\Big){\log\Big(\log\Big(\frac{\|\bm{\Lambda}^{1/2} {\bm{U}}^\top (\bm{x}_0 - \bm{x}_{*})\|_2}{\varepsilon}\Big)\Big)}\Bigg)
    \end{equation*}
    iterations of {Adam}, where $\kappa_\textup{Adam} := \overline{\kappa} \min \Big(\Big(\frac{\widehat{\kappa} d^{3/2}}{\log^{1/\theta} ({1}/{p})}\Big) \zeta^{1/\theta} \big(1 + \frac{\log (1/p)}{d}\big)^{1/\theta}, \kappa_\textup{diag}\Big)$. 
\end{theorem}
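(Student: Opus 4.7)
The plan is to adapt the pseudo-linear-system and $\kappa$-cancellation ideas flagged in the overview to the non-diagonal setting. With $\beta_1=0$ the Adam iteration reads $\bm{r}_{k+1} = (\bm{I} - \alpha\bm{D}_k\bm{Q})\bm{r}_k$ for $\bm{r}_k := \bm{x}_k - \bm{x}_*$, where $\bm{D}_k$ is the diagonal matrix with entries $D_k[i,i] = 1/\sqrt{v_k^{(i)} + \max((g_0^{(i)})^2, \phi^2)\delta_k^2}$. Passing to the symmetric coordinates $\bm{z}_k := \bm{\Lambda}^{1/2}\bm{U}^\top\bm{r}_k$, so that $2f(\bm{x}_k) = \|\bm{z}_k\|_2^2$, the recursion becomes $\bm{z}_{k+1} = (\bm{I}-\alpha\bm{A}_k)\bm{z}_k$ with $\bm{A}_k := \bm{Q}^{1/2}\bm{D}_k\bm{Q}^{1/2}$ symmetric positive semidefinite. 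The entire proof then reduces to bounding $\|\bm{I}-\alpha\bm{A}_k\|_{\mathrm{op}}$ strictly below $1$ along the trajectory, and iterating geometrically.

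To isolate $\overline{\kappa}$ from the more delicate factors, I would factor $\bm{A}_k = \bm{M}\bm{S}_k\bm{M}^\top$ with $\bm{M} := \bm{D}^{-1/2}\bm{Q}^{1/2}$ and $\bm{S}_k := \bm{D}\bm{D}_k$. Because $\bm{M}\bm{M}^\top$ is similar to $\bm{D}^{-1/2}\bm{Q}\bm{D}^{-1/2}$, it contributes exactly the factor $\overline{\kappa}$ (the leading coefficient in $\kappa_\textup{Adam}$), so the remaining task is to control $\mathrm{cond}(\bm{S}_k)$, whose $i$-th entry is $q_i/\sqrt{v_k^{(i)} + \max((g_0^{(i)})^2,\phi^2)\delta_k^2}$. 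For an upper bound on $v_k^{(i)}$ I would use $(g_j^{(i)})^2 = (\bm{e}_i^\top\bm{Q}^{1/2}\bm{z}_j)^2 \leq q_i\|\bm{z}_j\|_2^2 = 2q_i f(\bm{x}_j)$, giving $v_k^{(i)} \leq 2q_i\sum_j\beta_2^{k-j}f(\bm{x}_j)$; for a lower bound I would use the baseline $(g_0^{(i)})^2\delta_k^2 = q_i^2(\widehat{g}_0^{(i)})^2\delta_k^2$, which also scales with $q_i^2$ and is the term that equalizes scales across coordinates. Combining these, $\mathrm{cond}(\bm{S}_k)$ is controlled by $\max_i|\widehat{g}_0^{(i)}|/\min_i|\widehat{g}_0^{(i)}|$; the probabilistic branch of the $\min$ in $\kappa_\textup{Adam}$ arises by further using $\widehat{\bm{g}}_0 = \widehat{\bm{Q}}\bm{z}_0$ (producing the $\widehat{\kappa}\,d^{3/2}$ factor) and inverting \Cref{dist} (producing $\zeta^{1/\theta}(1+\log(1/p)/d)^{1/\theta}/\log^{1/\theta}(1/p)$), while the $\kappa_\text{diag}$ branch comes from the deterministic fallback $q_i/q_{\min}$ obtained by invoking the $\phi^2$ floor instead of $(g_0^{(i)})^2$.

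The third ingredient is the two-stage analysis. In Stage~I I would pick $\alpha$ small enough that $f(\bm{x}_k)$ is monotonically non-increasing regardless of the precise value of $\bm{D}_k$, which lets the EMA $v_k^{(i)}$ reach a predictable regime in which the $\bm{S}_k$ bounds above are in force. In Stage~II, with those bounds in hand, one gets $\|\bm{I}-\alpha\bm{A}_k\|_{\mathrm{op}} \leq 1 - \Omega(1/\kappa_\textup{Adam})$, yielding geometric contraction and the $\log(\sqrt{d}B/\rho_1\varepsilon)$ factor in the iteration count (with $\rho_1 = \sigma_{\min}(\widehat{\bm{Q}})$ appearing when translating $\|\bm{z}_k\|_2^2$ into a bound on $\bm{r}_k$ and $\widehat{\bm{g}}_k$). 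The additional $\log\log(1/\varepsilon)$ factor would arise from tuning $\delta_k$ on a slowly-shrinking schedule, so the baseline in the denominator tracks the current magnitude of $\bm{z}_k$; refining the schedule $O(\log\log(1/\varepsilon))$ times suffices for the stated accuracy.

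The principal obstacle is the tight coupling between $\bm{D}_k$ and the entire past trajectory $\{\bm{z}_j\}_{j\leq k}$: a transient drop in $v_k^{(i)}$ inflates $D_k[i,i]$ and can make $\bm{I}-\alpha\bm{A}_k$ expansive along coordinate $i$, triggering precisely the non-convergent behaviour that the paper's \Cref{sec-fixed-pt} identifies for sufficiently non-diagonal $\bm{Q}$. The coordinate-dependent $\max((g_0^{(i)})^2,\phi^2)\delta_k^2$ floor in the denominator is what rules this out, but converting its mere presence into a clean trajectory-wide invariant that survives both the similarity factorization $\bm{A}_k = \bm{M}\bm{S}_k\bm{M}^\top$ and the probabilistic lower bound on $\min_i|\widehat{g}_0^{(i)}|$ is, I expect, the technical heart of the argument --- and the reason the more intricate $\widehat{\kappa}\,d^{3/2}\zeta^{1/\theta}$ term appears inside the $\min$ in $\kappa_\textup{Adam}$.
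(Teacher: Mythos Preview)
Your high-level plan matches the paper closely: pass to the symmetric coordinates $\bar{\bm{g}}_k = \bm{\Lambda}^{1/2}\bm{U}^\top(\bm{x}_k-\bm{x}_*)$, factor the preconditioned operator so that $\bar{\bm{Q}} = \bm{D}^{-1/2}\bm{U}\bm{\Lambda}^{1/2}$ contributes $\overline{\kappa}$ while the diagonal middle piece $\widehat{\bm{V}}_k$ (your $\bm{S}_k$) carries the initialization- and $\phi$-dependent factor, and use the coordinate-wise floor together with \Cref{dist} to produce the two branches of the $\min$ in $\kappa_\textup{Adam}$. Two corrections, one minor and one substantive.

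The minor one: your factorization $\bm{A}_k = \bm{M}\bm{S}_k\bm{M}^\top$ with $\bm{M}=\bm{D}^{-1/2}\bm{Q}^{1/2}$ does not hold as written, because $\bm{D}^{-1/2}$ and $\bm{Q}^{1/2}$ do not commute. In the $\bm{z}_k$-coordinates the operator is $\bm{\Lambda}^{1/2}\bm{U}^\top\bm{D}_k\bm{U}\bm{\Lambda}^{1/2}$ (not $\bm{Q}^{1/2}\bm{D}_k\bm{Q}^{1/2}$, though the two are unitarily similar), and the correct outer factor is $\bar{\bm{Q}}=\bm{D}^{-1/2}\bm{U}\bm{\Lambda}^{1/2}$, giving $\bar{\bm{Q}}^\top\widehat{\bm{V}}_k\bar{\bm{Q}}$ with $\widehat{\bm{V}}_k = \bm{D}^{1/2}\bm{D}_k\bm{D}^{1/2}$. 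Your $\bm{S}_k$ has the right entries, so this is only a bookkeeping slip.

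The substantive gap is your account of Stage~II and of where the $\log$ and $\log\log$ factors come from. The paper does \emph{not} obtain a uniform contraction $\|\bm{I}-\alpha\bm{A}_k\|\le 1-\Omega(1/\kappa_\textup{Adam})$ and then iterate geometrically; that would not survive the fact that $\sigma_{\max}(\bm{A}_k)$ blows up as gradients shrink. Instead, with the specific schedule $\delta_k = \beta_2^{k/2}/\sqrt{2(1-\beta_2)}$ and a carefully matched $\beta_2$, the Stage-I bound $\|\bar{\bm{g}}_l\|_2^2\le\gamma^l\|\bar{\bm{g}}_0\|_2^2$ (with $\beta_2-\gamma=1-\beta_2$) is fed back to show $\widehat{v}_k \lesssim \beta_2^{k}/(1-\beta_2)$. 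This makes the per-step factor \emph{improve} with $k$, namely $1 - c\,\beta_2^{-(k+1)/2}$, and unrolling gives the doubly-exponential bound $\|\bar{\bm{g}}_k\|_2 \le \exp(-2(\beta_2^{-k/2}-1))\|\bar{\bm{g}}_0\|_2$. Consequently $K^* \sim \log\log(1/\varepsilon)/\log(1/\beta_2)$; the $\log\log$ is the numerator here, not a count of schedule refinements. The $\kappa_\textup{Adam}\log(\sqrt{d}B/\rho_1\varepsilon)$ factor enters through $1/\log(1/\beta_2)\sim 1/(1-\beta_2)\sim 1/\alpha^2$, and $\alpha$ is forced to scale like $\log^{-1/2}(1/\varepsilon)$ by the constraint $K^*\le\tilde{K}$, where $\tilde{K}$ is the last iteration at which the floor still guarantees $\min_j \widehat{w}_k^{(j)}\ge(\mu_2\alpha)^2$ (hence $\bm{I}-\alpha\bm{A}_k\succeq 0$). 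Your restarting/refinement story might be made to work as an alternative, but it is not the paper's mechanism, and as written it does not explain how the \emph{single} $\log(1/\varepsilon)$ factor (as opposed to $\log^2$ or something else) emerges.
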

The detailed version and proof of \Cref{thm3} can be found in \Cref{pf-1}; also see the discussion on hyper-parameters after the detailed version of \Cref{thm3}. \textit{We discuss the proof outline of \Cref{thm3} in \Cref{pf-sketch}}. Observe that the \textit{condition number-like quantity} $\kappa_\textup{Adam}$ appearing in the iteration complexity of Adam above is $\mathcal{O}\big(\min\big(\overline{\kappa} \widehat{\kappa} d^{3/2}, \overline{\kappa} \kappa_\textup{diag}\big)\big)$ with constant probability. Using the fact that $\widehat{\kappa} \leq \sqrt{\kappa_\text{diag} \overline{\kappa}}$, we have the following simpler bound for $\kappa_\textup{Adam}$: 
\begin{equation}
    \label{eq:6-jan8}
    \kappa_\textup{Adam} \leq \mathcal{O}\Big(\min\big(d \overline{\kappa} \sqrt{d \overline{\kappa} \kappa_\text{diag}}, \overline{\kappa} \kappa_\text{diag}\big)\Big). 
\end{equation}
So $\kappa_\textup{Adam} < \kappa$ when $\mathcal{O}\big(d \overline{\kappa} \sqrt{d \overline{\kappa} \kappa_\text{diag}}\big) < \kappa$.\footnote{We disregard the $\overline{\kappa} \kappa_\text{diag}$ term here because it is probably of the same scale as $\kappa$ if not more.} Simplifying this, we can make the following remark.
\begin{remark}
    \label{remark-2-jan8}
    When $d < \mathcal{O}\big(\frac{1}{\overline{\kappa}} \big(\frac{\kappa^2}{\kappa_\textup{diag}}\big)^{{1}/{3}}\big)$, $\kappa_\textup{Adam} < \kappa$ with constant probability over the randomness of $\bm{x}_0$. Further, since $\kappa_\text{diag} \leq \kappa$, a simpler but looser version of the previous condition is $d < \mathcal{O}\big(\frac{\kappa^{1/3}}{\overline{\kappa}}\big)$.
\end{remark}
\begin{remark}
    \label{per-coordinate-eps-gen}
    Once again, the $\big({g}_0^{(i)}\big)^2$ terms in the {coordinate-dependent $\delta$} that we set in \cref{eq:3-dec28} are crucial in obtaining the above improvement.
\end{remark}
\noindent \textbf{Value of $\overline{\kappa}$.} Recall that $\overline{\kappa} = \text{cond}(\bm{D}^{-1/2} \bm{Q} \bm{D}^{-1/2})$ is the condition number of Jacobi-preconditioned $\bm{Q}$. In general, $\overline{\kappa}$ may not be \enquote{small}. To our knowledge, the best known bound for $\overline{\kappa}$ is $\overline{\kappa} \leq \kappa^{*} \min(\kappa^{*}, d)$, where $\kappa^{*} = \min_{\text{diag. } \bm{\Sigma} \succeq  0_{d \times d}} \text{cond}(\bm{\Sigma}^{1/2} \bm{Q} \bm{\Sigma}^{1/2})$ \citep{jambulapati2020fast}. When $\bm{Q}$ is diagonal (in which case it is equal to $\bm{D}$), $\overline{\kappa} = 1$. {Hence, we expect $\overline{\kappa}$ to be $\mathcal{O}(1)$ when $\bm{Q}$ is \enquote{sufficiently diagonal}. We formally quantify this in the \textit{diagonally dominant} case (defined below) which is a relaxation of the perfectly diagonal case.}
\begin{definition}[\textbf{$\nu$-diagonally-dominant}]
    \label{diag-dom}
    A square matrix $\bm{P} \in \mathbb{R}^{n \times n}$ is said to be $\nu$-diagonally-dominant if for all $i \in [n]$, it holds that $\sum_{j \neq i} |P[i,j]| \leq \nu |P[i,i]|$ for some $\nu \in [0,1)$.
\end{definition}
It is worth mentioning that the usual definition of diagonally dominant matrices in the literature is with $\nu = 1$ above. Further, note that $\nu = 0$ corresponds to the pure diagonal case.

\begin{theorem}
\label{prop:bar-kappa}
Suppose $\bm{Q}$ is $\nu$-diagonally-dominant. Then $\bar{\kappa} \leq \big(\frac{1+\nu}{1-\nu}\big)$ and $\kappa \leq \big(\frac{1+\nu}{1-\nu}\big) \kappa_\textup{diag}$. So when $\nu$ is uniformly bounded below $1$, $\overline{\kappa} = \mathcal{O}(1)$ and $\kappa = \mathcal{O}(\kappa_\textup{diag})$.
\end{theorem}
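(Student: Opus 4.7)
The plan is to handle the two inequalities separately. The $\kappa$ bound follows immediately from the Gershgorin circle theorem, while the $\bar{\kappa}$ bound follows from a one-line Rayleigh-quotient computation that quantifies how close $\bm{D}^{-1/2}\bm{Q}\bm{D}^{-1/2}$ is to the identity.

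For $\kappa \le \frac{1+\nu}{1-\nu}\kappa_\textup{diag}$, I would apply Gershgorin's theorem to $\bm{Q}$. The $i$th disk is centered at $q_i$ with radius $\sum_{j\neq i}|Q[i,j]| \le \nu q_i$, so it is contained in $[(1-\nu)q_i,(1+\nu)q_i]$. Since $\bm{Q}$ is symmetric, all eigenvalues are real and lie in the union of these disks, hence in $[(1-\nu)q_\text{min},(1+\nu)q_\text{max}]$. Using $\lambda_\text{min}(\bm{Q}),\lambda_\text{max}(\bm{Q})>0$ (positive-definiteness guarantees $\nu<1$ already makes the left endpoint positive), the ratio gives the claim via $\kappa_\textup{diag}=q_\text{max}/q_\text{min}$.

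For $\bar{\kappa} \le \frac{1+\nu}{1-\nu}$, set $\bm{M} := \bm{D}^{-1/2}\bm{Q}\bm{D}^{-1/2}$ and bound its Rayleigh quotient. Substituting $\bm{z} := \bm{D}^{-1/2}\bm{y}$ (equivalently $y^{(i)} = \sqrt{q_i}\, z^{(i)}$) gives $\bm{y}^\top \bm{M}\bm{y} = \bm{z}^\top \bm{Q}\bm{z}$ and $\|\bm{y}\|_2^2 = \sum_i q_i (z^{(i)})^2$. Expand $\bm{z}^\top \bm{Q}\bm{z} = \sum_i q_i (z^{(i)})^2 + \sum_{i\neq j} Q[i,j]\, z^{(i)} z^{(j)}$, and control the off-diagonal part using $|z^{(i)}z^{(j)}| \le \tfrac12((z^{(i)})^2+(z^{(j)})^2)$ together with the symmetry $|Q[i,j]|=|Q[j,i]|$ to collapse the double sum:
\[
\Bigl|\sum_{i\neq j} Q[i,j]\, z^{(i)} z^{(j)}\Bigr| \;\le\; \sum_{i} (z^{(i)})^2 \sum_{j\neq i} |Q[i,j]| \;\le\; \nu \sum_{i} q_i (z^{(i)})^2 \;=\; \nu\,\|\bm{y}\|_2^2.
\]
This yields $(1-\nu)\|\bm{y}\|_2^2 \le \bm{y}^\top \bm{M}\bm{y} \le (1+\nu)\|\bm{y}\|_2^2$ for every $\bm{y}$, so the extreme eigenvalues of $\bm{M}$ are sandwiched in $[1-\nu,1+\nu]$ and $\bar{\kappa}=\lambda_\text{max}(\bm{M})/\lambda_\text{min}(\bm{M}) \le (1+\nu)/(1-\nu)$.

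There is no serious obstacle here: both parts amount to a few lines. The only subtlety is the symmetrization step that turns $\sum_{i\neq j}|Q[i,j]|(z^{(i)})^2 + \sum_{i\neq j}|Q[i,j]|(z^{(j)})^2$ into $2\sum_{i} (z^{(i)})^2 \sum_{j\neq i}|Q[i,j]|$, which is what triggers the diagonal-dominance hypothesis. Incidentally, the same Rayleigh-quotient argument applied directly to $\bm{Q}$ (bracketing $\sum_i q_i (z^{(i)})^2$ between $q_\text{min}\|\bm{z}\|_2^2$ and $q_\text{max}\|\bm{z}\|_2^2$) also reproduces the $\kappa$ bound without invoking Gershgorin, so either proof strategy works for the first inequality.
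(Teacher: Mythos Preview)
Your proof is correct. For the $\kappa$ bound you and the paper do exactly the same thing (Gershgorin on $\bm{Q}$). For the $\bar{\kappa}$ bound the paper takes a slightly different route: it observes that $\bm{D}^{-1/2}\bm{Q}\bm{D}^{-1/2}$ is similar to $\bm{R}:=\bm{D}^{-1}\bm{Q}$ (via conjugation by $\bm{D}^{1/2}$), and then applies Gershgorin to $\bm{R}$, which has unit diagonal and off-diagonal row sums $\sum_{j\neq i}|Q[i,j]|/q_i\le\nu$ directly from the hypothesis. Your Rayleigh-quotient argument with the AM--GM symmetrization achieves the same two-sided bound $[1-\nu,1+\nu]$ on the spectrum without invoking Gershgorin a second time. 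Both are short and equally valid; the paper's version is perhaps a bit slicker in that it reuses one tool throughout, while yours is more self-contained. It is worth noting why the paper bothers with the similarity step: applying Gershgorin \emph{directly} to $\bm{D}^{-1/2}\bm{Q}\bm{D}^{-1/2}$ would give row sums $\sum_{j\neq i}|Q[i,j]|/\sqrt{q_iq_j}$, which the $\nu$-diagonal-dominance hypothesis does not immediately control --- so either the similarity trick or your quadratic-form trick is genuinely needed.
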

We prove \Cref{prop:bar-kappa} in \Cref{prop:bar-kappa-pf}.

\begin{remark}
    \label{rmk-diag-dom}
    When $\bm{Q}$ satisfies the conditions in \Cref{prop:bar-kappa}, $\overline{\kappa} = \mathcal{O}(1)$ and $\kappa = \mathcal{O}(\kappa_\textup{diag})$. Then in the setting of \Cref{thm3} and using the bound in \Cref{remark-2-jan8}, $\kappa_\textup{Adam} < \kappa$ when $d < \mathcal{O}(\kappa^{1/3})$ with constant probability over random initialization.
\end{remark}
In the pure diagonal case, $\kappa_\textup{Adam} < \kappa$ when $d < \mathcal{O}(\kappa)$ as per \Cref{cor-2}; however, the condition for the same in the diagonally dominant case is $d < \mathcal{O}(\kappa^{1/3})$ as per \Cref{rmk-diag-dom}. The likely reason for this gap is that we have a unified analysis for \textit{any} non-diagonal $\bm{Q}$ (which is different from our analysis for diagonal $\bm{Q}$) and this could be loose for diagonally dominant $\bm{Q}$. It is possible that one could develop an analysis tailored specifically to the diagonally dominant case to bridge this gap; although this does not seem easy. We leave this for future work.
\vspace{0.15cm}
\\
\noindent {\textbf{Adam can be worse than GD for non-diagonal $\bm{Q}$ even if $d < \mathcal{O}(\kappa^{1/3})$.}} As mentioned earlier, $\overline{\kappa}$ may not be \enquote{small} for an arbitrary $\bm{Q}$. In fact, for $d=2$, we exhibit a $\bm{Q}$ for which $\overline{\kappa} = \kappa \gg d^3$. 
\begin{proposition}
    \label{bad-kappa-bar}
    Consider the $2 \times 2$ matrix ${\bm{Q}} = \begin{bmatrix}
                b & b-1 \\
                b-1 & b 
                \end{bmatrix}$,
    where $b \to \infty$ (thus, $b \gg d^3 = 8$). For such $\bm{Q}$, we have $\kappa = \overline{\kappa} = \mathcal{O}(b) \gg d^3$.
\end{proposition}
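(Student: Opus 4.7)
The plan is a direct computation, since the matrix is $2\times 2$ and highly symmetric, so no heavy machinery is needed.

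First I would compute the eigenvalues of $\bm{Q}$. Since $\bm{Q}$ has the form $\begin{bmatrix} a & c \\ c & a \end{bmatrix}$ with $a = b$ and $c = b-1$, the eigenvectors are the standard $(1,1)/\sqrt{2}$ and $(1,-1)/\sqrt{2}$, and the eigenvalues are $a+c = 2b-1$ and $a-c = 1$. Therefore $\lambda_{\max}(\bm{Q}) = 2b-1$, $\lambda_{\min}(\bm{Q}) = 1$, and $\kappa = \mathrm{cond}(\bm{Q}) = 2b - 1 = \mathcal{O}(b)$.

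Next I would compute $\overline{\kappa}$. Here the key observation is that both diagonal entries of $\bm{Q}$ equal $b$, so $\bm{D} = b\, \bm{I}$ and hence $\bm{D}^{-1/2} = b^{-1/2}\, \bm{I}$. Consequently,
\begin{equation*}
    \bm{D}^{-1/2} \bm{Q} \bm{D}^{-1/2} = \frac{1}{b}\, \bm{Q}.
\end{equation*}
Scaling a matrix by a positive scalar does not change its condition number, so $\overline{\kappa} = \mathrm{cond}(\bm{D}^{-1/2} \bm{Q} \bm{D}^{-1/2}) = \mathrm{cond}(\bm{Q}) = 2b - 1 = \mathcal{O}(b)$.

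Combining the two computations, $\kappa = \overline{\kappa} = 2b - 1$, which grows without bound as $b \to \infty$ and in particular exceeds $d^3 = 8$ for all sufficiently large $b$. There is no real obstacle here: the entire proof hinges on the happy coincidence that $\bm{D}$ is a scalar multiple of the identity, so Jacobi preconditioning has no effect and $\overline{\kappa}$ fails to improve on $\kappa$.
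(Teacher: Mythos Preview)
Your proof is correct and follows essentially the same approach as the paper: compute the eigenvalues of $\bm{Q}$ directly to get $\kappa = 2b-1$, observe that $\bm{D} = b\,\bm{I}$ so that $\bm{D}^{-1/2}\bm{Q}\bm{D}^{-1/2} = \tfrac{1}{b}\bm{Q}$, and conclude $\overline{\kappa} = \kappa$. The paper's argument is identical in content and structure.
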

\noindent We prove \Cref{bad-kappa-bar} in \Cref{bad-kappa-bar-pf}. For higher $d$ also, we can construct several $\bm{Q}$ (with the help of Python) for which $\overline{\kappa} \approx \kappa > d^3$.  For instance, let $d = 50$ and $\bm{Q} = \frac{1}{d} \bm{A} \bm{A}^\top$, where each element of $\bm{A} \in \mathbb{R}^{d \times d}$ is drawn i.i.d. from $\mathcal{N}(5, 1)$. We obtain 5 different realizations of $\bm{Q}$ and list the corresponding values of $\kappa$, $\overline{\kappa}$ and $\nicefrac{\overline{\kappa}}{\kappa}$ in \Cref{tab1}; as we can see, $\overline{\kappa} \approx \kappa > d^3 (=125000)$.

\begin{table}[!htb]
\begin{minipage}{\textwidth}
  \centering
\begin{tabular}{|c|c|c|}
\hline
$\kappa$ & $\overline{\kappa}$ & $\nicefrac{\overline{\kappa}}{\kappa}$
\\
\hline
$2073169$ & $2103577$ & $1.015$  
\\
\hline
$5054195$ & $5208092$ & $1.030$
\\
\hline
$109946128$ & $109546215$ & $0.996$
\\
\hline
$4518264$ & $4578310$ & $1.013$
\\
\hline
$4282546$ & $4277742$ & $0.999$
\\
\hline
\end{tabular}
\end{minipage}
\caption{Values of $\kappa$ and $\overline{\kappa}$ for 5 different realizations of $\bm{Q}$. Note that $\overline{\kappa} \approx \kappa > d^3 (=125000)$. \label{tab1}}
\end{table}
{So for such $\bm{Q}$, the bound for $\kappa_\textup{Adam}$ in \cref{eq:6-jan8} is worse than $\kappa$, which would suggest \textbf{Adam is worse than GD} if the Hessian is equal to such a $\bm{Q}$. This is indeed the case in reality for the standard version of Adam used in practice (\cref{eq:1-dec28} in \Cref{connection-app}) -- we show this empirically for the first realization of \Cref{tab1} in \Cref{fig:non-diag}. Further, we compare Adam and GD if the Hessian is equal to $\bm{\Lambda}$, i.e., the diagonal matrix containing the eigenvalues of $\bm{Q}$; note that $\text{cond}(\bm{Q}) = \text{cond}(\bm{\Lambda})$. In this case, \Cref{thm3-diag} suggests that Adam should outperform GD because $d \ll \kappa$. As we show in \Cref{fig:diag}, this prediction is also correct.}

{It is also worth pointing out that in Figures \ref{fig:non-diag} and \ref{fig:diag}, Adam's function value stops decaying beyond a point and converges to a non-zero function value. We theoretically characterize this phenomenon in \Cref{sec-fixed-pt}\footnote{Also refer to the last couple of sentences of \Cref{pf-sketch}.}.}
\vspace{0.2 cm}
\\
\textbf{Dependence on dimension:} We will now show that a dependence on the dimension $d$ is necessary for Adam's iteration complexity, at least for $\varepsilon$ up to a certain range. 
\begin{theorem}
    \label{lb-diag}
    Suppose $\delta_k = 0$. For any diagonal $\bm{Q}$, achieving $f(\bm{x}_{K'}) \leq \frac{1}{2} \big(\min_{j \in [d]} |x^{(j)}_{0} - x^{(j)}_{*}|\big)^2$ requires at least $K' = \Omega(d)$ iterations of Adam with constant probability over the randomness of $\bm{x}_0$.
\end{theorem}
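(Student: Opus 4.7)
The plan is to combine a per-coordinate step-size bound for Adam with the $L_2$-distance implication of the function-value target, and then extract the factor of $d$ from randomness of the initialization. With $\beta_1 = 0$ and $\delta_k = 0$, the denominator $\sqrt{v_k^{(i)}} = \sqrt{\sum_{j \le k} \beta_2^{\,k-j}(g_j^{(i)})^2}$ always satisfies $\sqrt{v_k^{(i)}} \ge |g_k^{(i)}|$, so $|m_k^{(i)}|/\sqrt{v_k^{(i)}} \le 1$ and every per-coordinate Adam update has magnitude at most $\alpha$. Summing, $|x_{K'}^{(i)} - x_0^{(i)}| \le K'\alpha$ for all $i$. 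Since $\lambda_{\min}(\bm{Q}) = 1$, the target $f(\bm{x}_{K'}) \le c^2/2$ with $c := \min_j |x_0^{(j)} - x_*^{(j)}|$ forces $\sum_i (x_{K'}^{(i)} - x_*^{(i)})^2 \le c^2$, so in particular $|x_{K'}^{(i)} - x_*^{(i)}| \le c$ for every $i$. Applying the triangle inequality to $i^* := \arg\max_j |x_0^{(j)} - x_*^{(j)}|$ and writing $a_j := |x_0^{(j)} - x_*^{(j)}|$ yields
\[
K'\alpha \;\ge\; a_{i^*} - c.
\]

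Next, I exploit the randomness of $\bm{x}_0$. With $x_0^{(i)} \underset{\textup{iid}}{\sim} \textup{Unif}[-2B, 2B]$ and $\|\bm{x}_*\|_\infty \le B$, a standard order-statistics computation shows that with constant probability, both $a_{i^*} = \Theta(B)$ and $c = \mathcal{O}(B/d)$, so $a_{i^*}/c = \Omega(d)$. I then case on $\alpha$. If $\alpha \le 2c$, the display above directly gives $K' \ge (a_{i^*} - c)/(2c) = \Omega(d)$, and we are done. If instead $\alpha > 2c$, I will show the target is essentially unachievable: at the min-distance coordinate $j^{**}$ (with $a_{j^{**}} = c$), the first-iteration step has magnitude exactly $\alpha$ since $v_1^{(j^{**})} = (g_1^{(j^{**})})^2$ when $\delta_k = 0$, so $|x_1^{(j^{**})} - x_*^{(j^{**})}| = \alpha - c > c$; subsequent iterations keep this coordinate oscillating so that $(x_{K'}^{(j^{**})} - x_*^{(j^{**})})^2 \ge c^2$ up to a vanishing correction, after which the constraint $\sum_i (x_{K'}^{(i)} - x_*^{(i)})^2 \le c^2$ forces the remaining coordinates to sit essentially exactly at $x_*^{(i)}$, which is a measure-zero event under the continuous initialization.

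The main obstacle is making the case $\alpha > 2c$ airtight for general $\beta_2 \in (0,1)$. For $\beta_2 = 0$ the per-coordinate iterates lie on the shifted lattice $\{x_0^{(i)} + k\alpha : k \in \mathbb{Z}\}$ and the oscillation analysis is elementary; for $\beta_2 > 0$ the denominator $\sqrt{v_k^{(i)}}$ damps the update in a history-dependent manner, and I will need to quantify how much this damping can shrink the min-coordinate distance below $c$. The expectation is that one iteration near the optimum changes the min-coordinate distance by only $\mathcal{O}(c^2/\alpha)$, which is at most $\mathcal{O}(c/d)$ in the regime $\alpha = \Omega(dc)$ that survives case (a), so $\Omega(d)$ iterations are still necessary.
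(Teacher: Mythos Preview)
Your core approach matches the paper's: both use the per-step bound coming from $\sqrt{v_k^{(i)}} \ge |g_k^{(i)}|$ to get $|z_{k+1}^{(i)}| \ge |z_k^{(i)}| - \alpha$ (equivalently, your $|x_{K'}^{(i)} - x_0^{(i)}| \le K'\alpha$), combine it with $f(\bm{x}_{K'}) \le c^2/2 \Rightarrow \|\bm{z}_{K'}\|_\infty \le c$ via $\lambda_{\min}(\bm{Q}) = 1$, and then use the random initialization to show $\|\bm{z}_0\|_\infty / \min_j |z_0^{(j)}| = \Omega(d)$ with constant probability. The probabilistic step is handled the same way in both.

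On the step-size issue you flag: the paper does \emph{not} actually close this gap either. It invokes a ``best case'' where the recursion $|z_k^{(i)}| \ge |z_0^{(i)}| - k\alpha$ holds with equality and then asserts ``we should set $\alpha = \min_j |z_0^{(j)}|$,'' without justifying why larger $\alpha$ cannot do better. So your case split on $\alpha \lessgtr 2c$ is already more careful than the paper's treatment; your $\alpha \le 2c$ branch is exactly the bound the paper is effectively claiming.

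That said, your proposed resolution for $\alpha > 2c$ has real problems beyond what you acknowledge. First, the oscillation claim at $j^{**}$ fails for $\beta_2 > 0$: with $z_0^{(j^{**})} = c$ and large $\alpha$, one computes $\widehat v_1^{(j^{**})} = \beta_2 c^2 + (\alpha - c)^2$ and hence $|z_2^{(j^{**})}| \approx c(1 - c/\alpha) < c$, so $(z_{K'}^{(j^{**})})^2 \ge c^2$ is simply false and the ``measure-zero'' conclusion does not follow. Second, even granting an $\mathcal{O}(c^2/\alpha)$ per-iteration drift near the optimum, your sketch only addresses the regime $\alpha = \Omega(dc)$; the intermediate range $\alpha \in (2c,\, \mathcal{O}(dc))$ is left uncovered. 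Since the paper's own argument is informal at exactly this point, you are not behind it, but the second case as written would not survive scrutiny.
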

We prove \Cref{lb-diag} in \Cref{lb-diag-pf}.

\begin{figure}
    \centering
    \subfloat[\centering Non-diagonal Hessian]{{\includegraphics[width=0.5\linewidth]{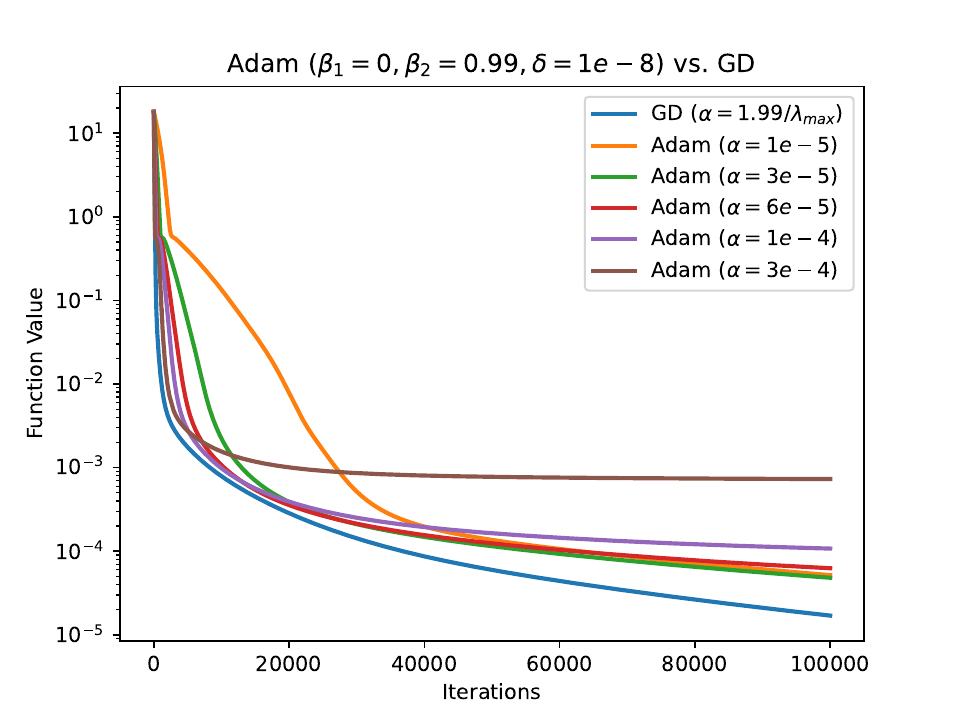}}\label{fig:non-diag}}
    \hspace{-0.3cm}
    \subfloat[\centering Diagonal Hessian w/ same condition no. as Fig. \ref{fig:non-diag}]{{\includegraphics[width=0.5\linewidth]{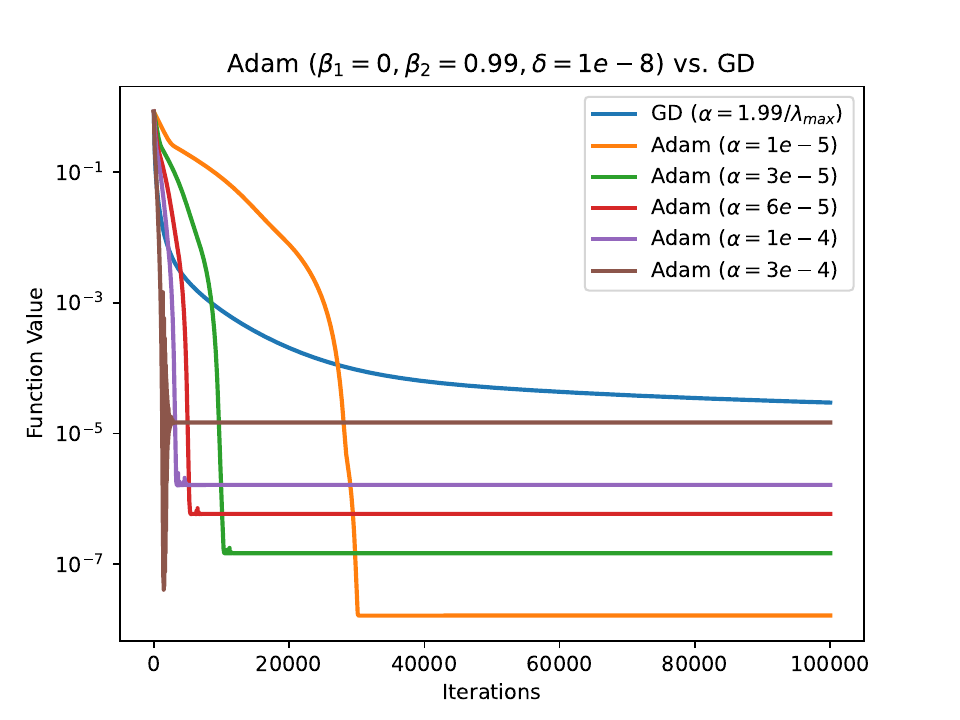}}\label{fig:diag}}
    \caption{In \ref{fig:non-diag}, the Hessian is the first realization of $\bm{Q}$ in \Cref{tab1}. In \ref{fig:diag}, the Hessian is the \textit{diagonal} matrix containing the eigenvalues of the first realization of $\bm{Q}$ in \Cref{tab1}. We compare the standard version of Adam in \cref{eq:1-dec28} with different step-sizes $\alpha$ against GD with step-size $\alpha = {1.99}/{\lambda_\text{max}(\bm{Q})}$. \textit{Consistent with our theoretical prediction, \textbf{GD is better than Adam} in \ref{fig:non-diag}, whereas \textbf{Adam is better than GD} in \ref{fig:diag}.}}
    \label{fig:adam-expt}
\end{figure}

\subsection{Proof Outline of Theorem~\ref{thm3} and Asymptotic Behavior of Adam}
\label{pf-sketch}
We shall now sketch our proof technique for \Cref{thm3} (detailed proof in \Cref{pf-1}), i.e., in the case of a general $\bm{Q}$. The proof techniques of other convergence bounds for Adam (Theorems \ref{thm3-diag} and \ref{thm-adanorm-plc}) are similar. This will also shed some light on the behavior of Adam if we let it run forever.

First, we shall introduce some definitions. To that end, recall that $\bm{D} = \bm{Q} \circ \bm{\text{{I}}}$ and the eigen-decomposition of $\bm{Q}$ is $\bm{U} \bm{\Lambda} \bm{U}^\top$; thus, $\bm{g}_k = \nabla f(\bm{x}_k) = \bm{U} \bm{\Lambda} \bm{U}^\top (\bm{x}_k - \bm{x}_{*})$. Now let:
\begin{equation}
    \label{eq:6-dec29-def}
    \bar{\bm{g}}_k := \bm{\Lambda}^{1/2} {\bm{U}}^\top (\bm{x}_k - \bm{x}_{*}).
\end{equation}
Note that $f(\bm{x}_{k}) = \frac{1}{2}\|\bar{\bm{g}}_{k}\|_2^2$ and $\bm{g}_k = \bm{U} \bm{\Lambda}^{1/2} \bar{\bm{g}}_k$. Next let:
\begin{equation}
    \label{eq:7-dec29-def}
    \widehat{\bm{g}}_k := \bm{D}^{-1} \bm{g}_k = (\bm{D}^{-1} \bm{U} \bm{\Lambda}^{1/2}) \bar{\bm{g}}_k = \widehat{\bm{Q}} \bar{\bm{g}}_k.
\end{equation}
Also, recall that $\bar{\bm{Q}} = \bm{D}^{-1/2} \bm{U} \bm{\Lambda}^{1/2}$. Let:
\begin{equation}
    \sigma_\text{min}(\bar{\bm{Q}}) = \sqrt{\mu_1} \text{ and } \sigma_\text{max}(\bar{\bm{Q}}) = \sqrt{\mu_2}.
\end{equation}
Since $\text{cond}(\bar{\bm{Q}}) = \sqrt{\overline{\kappa}}$, we have $\overline{\kappa} = \big(\frac{\mu_2}{\mu_1}\big)$.
\vspace{0.2 cm}
\\
Adam's update rule in \cref{eq:3-dec28} with $\beta_1 = 0$ is $x^{(i)}_{k+1} = x^{(i)}_{k} - \frac{\alpha {g}_k^{(i)}}{\sqrt{{v}_{k}^{(i)} + \max(({g}_0^{(i)})^2, \phi^2)\delta_{k}^{2}}}$. Unfolding the recursion of ${v}_{k}^{(i)}$ in \cref{eq:2-dec28}, we get ${v}_{k}^{(i)} = {\sum_{l=0}^{k} \beta_2^{k-l} \big({g}_l^{(i)}\big)^2}$. Let ${\bm{V}}_k$ be a diagonal matrix whose $i^{\text{th}}$ diagonal element is $\frac{1}{\sqrt{{v}_{k}^{(i)} + \max(({g}_0^{(i)})^2, \phi^2) \delta_k^{2}}}$. Then, we can rewrite Adam's update rule in matrix-vector notation as:
\begin{equation}
    \label{eq:7-dec29}
    \bm{x}_{k+1} = \bm{x}_k - \alpha {\bm{V}}_k {\bm{U}} \bm{\Lambda} {\bm{U}}^\top (\bm{x}_k - \bm{x}_{*}).
\end{equation}
After manipulating \cref{eq:7-dec29} a bit, we obtain the following equivalent equation:
\begin{equation}
    \label{eq:8-dec29}
    \bar{\bm{g}}_{k+1} = \Big(\bm{\text{I}} - \alpha \bm{\Lambda}^{1/2} \bm{U}^\top {\bm{V}}_k {\bm{U}} \bm{\Lambda}^{1/2} \Big)\bar{\bm{g}}_k,
\end{equation}
where $\bar{\bm{g}}_k = \bm{\Lambda}^{1/2} {\bm{U}}^\top (\bm{x}_k - \bm{x}_{*})$ is as defined in \cref{eq:6-dec29-def}. 
From \cref{eq:7-dec29-def}, recall that $\widehat{\bm{g}}_k = \bm{D}^{-1} \bm{g}_k$. Let:
\begin{equation}
    \widehat{v}_{k}^{(i)} := {\sum_{l=0}^{k} \beta_2^{k-l} \big(\widehat{g}_l^{(i)}\big)^2} = {\sum_{l=0}^{k} \beta_2^{k-l} \big({{g}_l^{(i)}}/{q_i}\big)^2} = {{v}_{k}^{(i)}}/{q_i^2} \text{ and } \phi_i := {\phi}/{q_i}.
\end{equation}
Let $\widehat{\bm{V}}_k$ be a diagonal matrix whose $i^{\text{th}}$ diagonal element is $\frac{1}{\sqrt{\widehat{v}_{k}^{(i)} + \max((\widehat{g}_0^{(i)})^2, {\phi_i^2})\delta_k^{2}}}$; note that ${\bm{V}}_k = \bm{D}^{-1/2} \widehat{\bm{V}}_k \bm{D}^{-1/2}$. Plugging this into \cref{eq:8-dec29} and recalling that $\bar{\bm{Q}} = \bm{D}^{-1/2} \bm{U} \bm{\Lambda}^{1/2}$, we get:
\begin{equation}
    \label{eq:12-dec29}
    \bar{\bm{g}}_{k+1} = {\Big(\bm{\text{I}} - \alpha \bar{\bm{Q}}^\top \widehat{\bm{V}}_k \bar{\bm{Q}} \Big)} \bar{\bm{g}}_k.
\end{equation}
Since $\widehat{\bm{V}}_k$ is a diagonal matrix with positive diagonal elements, $\bar{\bm{Q}}^\top \widehat{\bm{V}}_k \bar{\bm{Q}}$ is symmetric positive-definite (PD). Let $\bm{A}_k = \bm{\text{I}} - \alpha \bm{B}_k$, where $\bm{B}_k = \bar{\bm{Q}}^\top \widehat{\bm{V}}_k \bar{\bm{Q}}$. 

\begin{remark}[\textbf{Basis change}]
    \label{rmk-sym}
    The basis change in going from \cref{eq:7-dec29} to \cref{eq:8-dec29} has been chosen to ensure that $\bm{A}_k$ is of the form $\bm{\text{I}} - \alpha \bm{B}_k$, where $\bm{B}_k$ is symmetric PD; in \cref{eq:12-dec29}, $\bm{B}_k = \bar{\bm{Q}}^\top \widehat{\bm{V}}_k \bar{\bm{Q}}$. If $\bm{B}_k$ is not symmetric PD, then the subsequent analysis does not hold. In particular, we cannot directly start the subsequent analysis from \cref{eq:7-dec29} (after subtracting $\bm{x}_{*}$ from both sides) because in this case $\bm{B}_k$ would be ${\bm{V}}_k {\bm{U}} \bm{\Lambda} {\bm{U}}^\top$ which is not symmetric unless $\bm{U} = \bm{\text{{I}}}$, i.e., $\bm{Q} = \bm{\Lambda}$ is diagonal.
\end{remark}

\begin{remark}[\textbf{Pseudo-linear system}]
    \label{rmk-plin}
    For a moment, if we ignore the fact that $\bm{A}_k$ depends on the iteration number $k$, then \cref{eq:12-dec29} is a linear system that we can analyze using standard ideas from linear algebra. Our proof strategy is to treat \cref{eq:12-dec29} like a \enquote{pseudo-linear} system and involves tracking the evolution of $\bm{A}_k$ so that we can utilize standard linear algebra ideas, even though \cref{eq:12-dec29} is not truly linear.
\end{remark}

\begin{remark}[\textbf{Rationale of going from \cref{eq:8-dec29} to \cref{eq:12-dec29}}]
    \label{rmk-sym-2}
    The subsequent steps in the analysis require us to bound $\sigma_\textup{min}(\bm{B}_k)$ and $\sigma_\textup{max}(\bm{B}_k)$ for each $k$. Eq. (\ref{eq:8-dec29}) is equivalent to \cref{eq:12-dec29}, but the latter can enable us to obtain tighter bounds for $\sigma_\textup{min}(\bm{B}_k)$ and $\sigma_\textup{max}(\bm{B}_k)$ by allowing us to theoretically \enquote{cancel out} the dependence on $\kappa$ to an extent (albeit introducing other quantities) -- which improves the iteration complexity. Specifically, with \cref{eq:8-dec29}, we could not improve the $\mathcal{O}(\kappa)$ dependence. But with \cref{eq:12-dec29}, we get an $\mathcal{O}(\sqrt{\kappa})$ factor in the end for diagonally dominant $\bm{Q}$.
\end{remark}
\noindent Let $\widehat{w}_k^{(j)} = \widehat{v}_{k}^{(j)} + \max((\widehat{g}_0^{(j)})^2, \phi_j^2) \delta_k^{2}$. 
Recalling that $\sigma_\text{min}(\bar{\bm{Q}}) = \sqrt{\mu_1}$ and $\sigma_\text{max}(\bar{\bm{Q}}) = \sqrt{\mu_2}$, we have $\sigma_{\text{max}}(\bar{\bm{Q}}^\top \widehat{\bm{V}}_k \bar{\bm{Q}}) \leq \frac{\mu_2}{\sqrt{\min_{j \in [d]} \widehat{w}_k^{(j)}}}$ and $\sigma_{\text{min}}(\bar{\bm{Q}}^\top \widehat{\bm{V}}_k \bar{\bm{Q}}) \geq \frac{\mu_1}{\sqrt{\max_{j \in [d]} \widehat{w}_k^{(j)}}}$. Now at least until ${\min_{j \in [d]} \widehat{w}_k^{(j)}} \geq (\mu_2 \alpha)^2$ starting from the first iteration, $\bm{A}_k$ \textit{is positive semi-definite} and we have: 
\begin{equation}
    \sigma_{\text{max}}(\bm{A}_k) \leq 1 - \frac{\mu_1 \alpha}{\sqrt{\max_{j \in [d]} \widehat{w}_k^{(j)}}}.
\end{equation}
So at least until ${\min_{j \in [d]} \widehat{w}_k^{(j)}} \geq (\mu_2 \alpha)^2$ holds, \textit{descent is ensured in each iteration or there is \enquote{continuous descent}} and we have:
\begin{flalign}
    \label{eq:14-dec29-2}
    \|\bar{\bm{g}}_{k+1}\|_2 & \leq \sigma_{\text{max}}(\bm{A}_k) \|\bar{\bm{g}}_k\|_2 
    \leq \Bigg(1 - \frac{\mu_1 \alpha}{\sqrt{\max_{j \in [d]} \widehat{w}_k^{(j)}}}\Bigg) \|\bar{\bm{g}}_k\|_2.
\end{flalign}
In \Cref{lem1-sep30}, we analyze the decay rule of \cref{eq:14-dec29-2} until there is \textit{continuous descent}; suppose this happens for $\tilde{K}$ iterations. Below we present an abridged version of \Cref{lem1-sep30}.
\begin{lemma}[\textbf{Abridged version of \Cref{lem1-sep30}}]
\label{lem1-sep30-mini}
Recall $\widehat{\kappa} = \textup{cond}(\widehat{\bm{Q}})$. Choose $\alpha \leq \frac{\max(3 \widehat{\kappa} \sqrt{d} B, \phi/q_\textup{min})}{\sqrt{2} \mu_1}$, $\beta_2 = 1 - \frac{\alpha^2 \mu_1^2}{8 \max(9 \widehat{\kappa}^2 d B^2, \phi^2/q_\textup{min}^2)}$ and $\delta_k = \frac{1}{\sqrt{2(1-\beta_2)}} \beta_2^{\frac{k}{2}}$ for $k \geq 0$. Then for any $k \leq \tilde{K}$, we have:
\begin{equation}
    \label{eq:16-feb4}
    \|\bar{\bm{g}}_{k+1}\|_2 \leq \exp\Big(-2\Big(\beta_2^{-(k+1)/2} - 1\Big)\Big)\|\bar{\bm{g}}_0\|_2.
\end{equation}
\end{lemma}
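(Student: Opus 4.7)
The plan is to prove the bound by strong induction on $k$, chaining the per-iteration contraction $\|\bar{\bm{g}}_{k+1}\|_2 \le \bigl(1-\mu_1\alpha/\sqrt{\max_j \widehat{w}_k^{(j)}}\bigr)\|\bar{\bm{g}}_k\|_2$ from equation (\ref{eq:14-dec29-2}) (which is valid throughout the continuous-descent regime, i.e.\ for $k \le \tilde K$) with a carefully tuned upper bound on $\max_j \widehat{w}_k^{(j)}$. Writing $M := \max\bigl(9\widehat{\kappa}^2 d B^2,\, \phi^2/q_\text{min}^2\bigr)$, the stated choices immediately give $1-\beta_2 = \alpha^2\mu_1^2/(8M)$ and $\delta_k^2 = \beta_2^k/(2(1-\beta_2)) = 4M\beta_2^k/(\alpha^2\mu_1^2)$. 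The induction hypothesis maintains the claimed inequality $\|\bar{\bm{g}}_l\|_2 \le \exp\bigl(-2(\beta_2^{-l/2}-1)\bigr)\|\bar{\bm{g}}_0\|_2$ at every $l \le k$, trivially true at $l = 0$.

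The pivot of the inductive step is the target estimate $\max_j \widehat{w}_k^{(j)} \le 16 M^2\beta_2^k/(\alpha^2\mu_1^2)$. Once this is granted, the contraction factor is at most $1 - \alpha^2\mu_1^2\beta_2^{-k/2}/(4M)$; paired with the elementary bound $\beta_2^{-1/2} - 1 \le 1-\beta_2 = \alpha^2\mu_1^2/(8M)$ (valid for $\beta_2 \ge 1/2$), this matches $2(\beta_2^{-(k+1)/2} - \beta_2^{-k/2})$, and applying $1-x \le e^{-x}$ telescopes the hypothesis into exactly $\exp\bigl(-2(\beta_2^{-(k+1)/2}-1)\bigr)\|\bar{\bm{g}}_0\|_2$. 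To prove the target, I split $\widehat{w}_k^{(j)}$ into its two pieces. The $\delta_k$ term is handled via \Cref{dist}, which yields $|\widehat{g}_0^{(j)}| \le 3B$ and $\phi_j \le \phi/q_\text{min}$, so $\max\bigl((\widehat{g}_0^{(j)})^2, \phi_j^2\bigr)\delta_k^2 \le M\delta_k^2 = 4M^2\beta_2^k/(\alpha^2\mu_1^2)$. For the EMA $\widehat{v}_k^{(j)} = \sum_{l=0}^k \beta_2^{k-l}(\widehat{g}_l^{(j)})^2$, I combine $(\widehat{g}_l^{(j)})^2 \le \rho_2^2\|\bar{\bm{g}}_l\|_2^2$ (from $\widehat{\bm{g}}_l = \widehat{\bm{Q}}\bar{\bm{g}}_l$), the inductive decay of $\|\bar{\bm{g}}_l\|_2$, and the initialization estimate $\rho_2^2\|\bar{\bm{g}}_0\|_2^2 \le M$, itself following from $\|\bar{\bm{g}}_0\|_2 \le \|\widehat{\bm{Q}}^{-1}\widehat{\bm{g}}_0\|_2 \le 3\sqrt{d}B/\rho_1$ and $\rho_2 = \widehat{\kappa}\rho_1$. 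This reduces $\widehat{v}_k^{(j)}$ to $M\beta_2^k e^4 \sum_{l=0}^k \beta_2^{-l}\exp(-4\beta_2^{-l/2})$.

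The main obstacle is controlling the geometric-like sum $\Sigma := \sum_{l=0}^\infty \beta_2^{-l}\exp(-4\beta_2^{-l/2})$ tightly enough to keep $\widehat{v}_k^{(j)}$ at order $M^2\beta_2^k/(\alpha^2\mu_1^2)$ with a small absolute constant. The clean route is the substitution $s = \beta_2^{-l/2}$: the summand becomes $s^2 e^{-4s}$, which is monotone decreasing on $s \ge 1$, so a standard discrete-to-integral comparison together with the change of variables $dl = 2\,ds/(\eta s)$, where $\eta := -\ln\beta_2 \ge 1-\beta_2 = \alpha^2\mu_1^2/(8M)$, gives $\Sigma \le e^{-4} + (2/\eta)\int_1^\infty s\,e^{-4s}\,ds = e^{-4}\bigl(1 + 5/(8\eta)\bigr)$. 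Multiplying by $M\beta_2^k e^4$ and substituting the lower bound on $\eta$ makes $\widehat{v}_k^{(j)} \le O(M^2\beta_2^k/(\alpha^2\mu_1^2))$ with an explicit small constant; added to the $4M^2\beta_2^k/(\alpha^2\mu_1^2)$ from the $\delta_k$ piece, this stays inside the $16M^2\beta_2^k/(\alpha^2\mu_1^2)$ budget, closing the induction.
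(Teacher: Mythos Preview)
Your argument is correct and takes a genuinely different route from the paper's. The paper proves \Cref{lem1-sep30} by a \emph{two-stage} analysis: first it derives a crude geometric decay $\|\bar{\bm g}_k\|_2^2 \le \gamma^k\|\bar{\bm g}_0\|_2^2$ with $\gamma = 1 - \alpha^2\mu_1^2/(4M)$ (so that $\beta_2-\gamma = 1-\beta_2$), then feeds this back into the EMA to obtain the closed-form geometric sum $\widehat v_k \le \sum_{l=0}^k \beta_2^{k-l}\gamma^l\,\widehat{\kappa}^2\|\widehat{\bm g}_0\|_2^2 \le \beta_2^{k+1}M/(1-\beta_2)$, and finally unrolls the tightened one-step recursion. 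Your approach instead maintains the final double-exponential bound by a single strong induction, controlling the EMA through the sum $\sum_l \beta_2^{-l}\exp(-4\beta_2^{-l/2})$ via an integral comparison. The paper's route is algebraically cleaner---the geometric sum collapses exactly because of the calibrated choice $\beta_2-\gamma=1-\beta_2$, and no calculus is needed. Your route is more self-contained (one pass, no auxiliary decay rate $\gamma$) and makes explicit why the super-geometric decay of the iterates beats the $\beta_2^{-l}$ blow-up in the EMA; the price is the integral bound on $\Sigma$, which you handle correctly (the monotonicity of $s^2e^{-4s}$ for $s\ge 1$ and the substitution $s=\beta_2^{-l/2}$ are both fine, and the constants do land inside the $16M^2\beta_2^k/(\alpha^2\mu_1^2)$ budget once one uses $\alpha^2\mu_1^2 \le M/2$). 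Both arguments deliver the same exponent $2(\beta_2^{-(k+1)/2}-1)$.
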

\begin{remark}[\textbf{Proof of \Cref{lem1-sep30}}]
    \label{rmk-2-stage}
    In \Cref{lem1-sep30}, we employ a two-stage analysis procedure wherein we first obtain a loose bound for $\|\bar{\bm{g}}_{k+1}\|_2$ in terms of $\|\bar{\bm{g}}_0\|_2$ (specifically, $\|\bar{\bm{g}}_{k}\|_2^2 \leq \gamma^{k}\|\bar{\bm{g}}_{0}\|_2^2$ with $\gamma \approx \beta_2$), and then obtain a tighter bound by using this loose bound.
\end{remark}
We will now elaborate on the two stages in \Cref{rmk-2-stage}. 
\vspace{0.1 cm}
\\
\textbf{First Stage.} Let us define $\widehat{v}_{k} := \sum_{j \in [d]} \widehat{v}_{k}^{(j)} = {\sum_{l=0}^{k} \beta_2^{k-l} \|\widehat{\bm{g}}_l\|_2^2}$. With some simple algebra, we obtain the following relaxation of \cref{eq:14-dec29-2} that is more conducive to analysis:
\begin{flalign}
    \label{eq:17-feb4}
    \|\bar{\bm{g}}_{k+1}\|_2 \leq \Bigg(1 - \frac{\mu_1 \alpha}{\sqrt{\widehat{v}_{k} + \max\big(\|\widehat{\bm{g}}_0\|_\infty^2, \phi^2/q_\text{min}^2) {\delta}_k^2}}\Bigg)\|\bar{\bm{g}}_k\|_2.
\end{flalign}
From \cref{eq:17-feb4}, we have $\|\bar{\bm{g}}_{0}\|_2 \geq \|\bar{\bm{g}}_{1}\|_2 \geq \ldots \geq \|\bar{\bm{g}}_{\tilde{K}+1}\|_2$. Using this with some simple linear algebra, we obtain the following crude bound: $\widehat{v}_k \leq \frac{\widehat{\kappa}^2 \|\widehat{\bm{g}}_0\|_2^2}{1-\beta_2}$ $\forall$ $k \leq \tilde{K}+1$. Combining this with our choice of $\delta_k$ and using \Cref{dist}, we get:
\begin{equation}
    \widehat{v}_k + \max\big(\|\widehat{\bm{g}}_0\|_\infty^2, \phi^2/q_\text{min}^2) {\delta}_k^2 \leq \frac{2 \max \big(9 \widehat{\kappa}^2 d B^2, \phi^2/q_\text{min}^2)}{1-\beta_2}.
\end{equation}
Note that the above bound is independent of $k$. Plugging this into \cref{eq:17-feb4} yields $\|\bar{\bm{g}}_{k+1}\|_2^2 \leq \gamma \|\bar{\bm{g}}_{k}\|_2^2$, where $\gamma := \Big(1 - \frac{{\alpha} \mu_1 \sqrt{1-\beta_2}}{\sqrt{2} \max(3 \widehat{\kappa} \sqrt{d} B, \phi/q_\text{min})}\Big)$. Unfolding this recursion yields:
\begin{equation}
    \label{eq:19-feb4}
        \|\bar{\bm{g}}_{k+1}\|_2^2 \leq \gamma^{k+1}\|\bar{\bm{g}}_{0}\|_2^2.
\end{equation}
The above bound is loose but we will obtain a tighter bound using this bound in the next stage.
\vspace{0.1 cm}
\\
\textbf{Second Stage.} With our choice of $\beta_2$, we get $\gamma = 1 - \frac{\alpha^2 \mu_1^2}{4 \max(9 \widehat{\kappa}^2 d B^2, \phi^2/q_\text{min}^2)}$ and $\beta_2 - \gamma = {1 - \beta_2}$. Using \cref{eq:19-feb4} and with some linear algebra, we get $\|\widehat{\bm{g}}_{l}\|_2^2 \leq \gamma^l \widehat{\kappa}^2 \|\widehat{\bm{g}}_{0}\|_2^2$. Thus,
\begin{flalign*}
    \widehat{v}_k = \sum_{l=0}^{k} \beta_2^{k-l} \|\widehat{\bm{g}}_{l}\|_2^2 \leq \sum_{l=0}^{k} \beta_2^{k-l} \gamma^l \widehat{\kappa}^2 \|\widehat{\bm{g}}_{0}\|_2^2 = \Bigg(\frac{\beta_2^{k+1} - \gamma^{k+1}}{\beta_2 - \gamma}\Bigg) \widehat{\kappa}^2 \|\widehat{\bm{g}}_{0}\|_2^2 \leq \Big(\frac{\beta_2^{k+1}}{1-\beta_2}\Big) \widehat{\kappa}^2 \|\widehat{\bm{g}}_{0}\|_2^2.
\end{flalign*}
Combining this with our choice of $\delta_k$ (which was specifically chosen to enable this) and using \Cref{dist}, we get:
\begin{equation} 
    \label{eq:20-feb4}
    \widehat{v}_k + \max\big(\|\widehat{\bm{g}}_0\|_\infty^2, \phi^2/q_\text{min}^2) {\delta}_k^2 \leq \Big(\frac{2 \beta_2^{k+1}}{1-\beta_2}\Big) \max\big(9 \widehat{\kappa}^2 d B^2, \phi^2/q_\text{min}^2\big).
\end{equation}
Plugging this into \cref{eq:17-feb4} (after using the fact that $\exp(-z) \geq 1-z$ $\forall$ $z \in \mathbb{R}$), we get:
\begin{equation}
    \label{eq:21-feb4}
    \|\bar{\bm{g}}_{k+1}\|_2 \leq \exp\Bigg(- \frac{\alpha \mu_1 \sqrt{1-\beta_2} \beta_2^{-{(k+1)/2}}}{\sqrt{2} \max\big(3 \widehat{\kappa} \sqrt{d} B, \phi/q_\text{min}\big)} \Bigg)\|\bar{\bm{g}}_k\|_2.
\end{equation}
\Cref{eq:21-feb4} is our final \textit{per-iteration} decay rule of interest. Finally, with some more algebra and simplification, we get the result of \Cref{lem1-sep30-mini} (i.e., \cref{eq:16-feb4}). 
\vspace{0.1 cm}
\\
\noindent Let us come back to the proof of \Cref{thm3}. From \cref{eq:16-feb4}, in order to have $\|\bar{\bm{g}}_{K^{*}}\|_2 \leq \varepsilon$ (and thus, $f(\bm{x}_{{K}^{*}}) \leq \frac{\varepsilon^2}{2}$), we can choose: 
\begin{equation}
    \label{eq:16-dec29}
    K^{*} = \mathcal{O}\Big({\log\Big(\log^2\Big(\frac{\|\bar{\bm{g}}_0\|_2}{\varepsilon}\Big)\Big)}\Big/{\log \big({1}/{\beta_2}\big)}\Big).
\end{equation}
For the above result to be valid, we must have $K^{*} \leq \tilde{K}$. In \Cref{lem1-oct1}, we obtain the following bound for $\tilde{K}$:
$$\tilde{K} = {\log\Bigg(\frac{\min_{j \in [d]} (\widehat{g}_0^{(j)})^2}{(\mu_2 \alpha)^2} + \frac{\max(\min_{j \in [d]} (\widehat{g}_0^{(j)})^2, {\phi^2}/{q_\textup{max}^2})}{{2(1-\beta_2)(\mu_2 \alpha)^2}}\Bigg)}\Bigg/{\log \big({1}/{\beta_2}\big)}.$$
We will skip the proof details of \Cref{lem1-oct1} as it is straightforward. Imposing $K^{*} \leq \tilde{K}$ gives us an upper bound for the step-size $\alpha$, viz.,
\begin{equation}
    \label{eq:23-feb4}
    \alpha \leq \mathcal{O}\Bigg(\Bigg(\frac{\mu_1^2 \max(\min_{j \in [d]} (\widehat{g}_0^{(j)})^2, {\phi^2}/{q_\textup{max}^2})}{\mu_2^2 \max( \widehat{\kappa}^2 d B^2, \phi^2/q_\textup{min}^2)} \Bigg)^{1/4}  \frac{\max\big(\widehat{\kappa} \sqrt{d} B, \phi/q_\textup{min}\big)}{\mu_1}\log^{-1/2}\Big(\frac{\|\bar{\bm{g}}_0\|_2}{\varepsilon}\Big)\Bigg).
\end{equation}
From \Cref{lem3-jan7}, we have that $\min_{j \in [d]} |\widehat{g}_0^{(j)}|  > \frac{2B}{d} \Big(\frac{\log ({1}/{p})}{\zeta\big(1 + \frac{\log (1/p)}{d}\big)}\Big)^{1/\theta}$ with a probability of at least $p$. Let $\widetilde{p} = \Big(\frac{\log ({1}/{p})}{\zeta\big(1 + \frac{\log (1/p)}{d}\big)}\Big)^{1/\theta}$. Additionally, we can show that $\|\bar{\bm{g}}_0\|_2 \leq \mathcal{O}(\frac{\sqrt{d} B}{\rho_1})$. Thus, with a probability of at least $p$, we can set:
\begin{equation}
    \label{eq:24-feb4}
    \alpha = \mathcal{O}\Bigg(\frac{1}{\sqrt{\mu_1 \mu_2}} \Bigg({\max\Big(\frac{\widetilde{p}^2 B^2}{d^2}, \frac{\phi^2}{q_\textup{max}^2}\Big) \max\Big(\widehat{\kappa}^2 d B^2, \frac{\phi^2}{q_\textup{min}^2}\Big)} \Bigg)^{1/4} \log^{-1/2}\Big(\frac{\sqrt{d} B}{\rho_1 \varepsilon}\Big)\Bigg).
\end{equation}
Recalling the value of $\beta_2$ and using \Cref{log-fact}, we get $\log \frac{1}{\beta_2} \geq \frac{\alpha^2 \mu_1^2}{8 \max(9 \widehat{\kappa}^2 d B^2, \phi^2/q_\textup{min}^2)}$. Finally, plugging this into \cref{eq:16-dec29} with the value of $\alpha$ from \cref{eq:24-feb4} and setting $\phi = \widehat{\kappa} \sqrt{d} B q_\textup{min}$ gives us the result of \Cref{thm3}.
\vspace{0.15 cm}
\\
An important part of the above analysis is ensuring that we attain $\mathcal{O}({\varepsilon^2})$ function value\footnote{We implicitly assume that $\min_{\bm{x} \in \mathbb{R}^d} f(\bm{x}) = 0$ here. A practical way to stop the algorithm can be based on the gradient norm instead of function value.} before we can no longer guarantee that $\bm{A}_k = \bm{\text{I}} - \alpha \bar{\bm{Q}}^\top \widehat{\bm{V}}_k \bar{\bm{Q}}$ is positive semi-definite; that is why we impose $K^{*} \leq \tilde{K}$ above and get a constraint on the step-size $\alpha$ (\cref{eq:23-feb4}). 
{Moreover, we should stop the algorithm at this point because we cannot guarantee {continuous descent} forever.} If there is continuous descent forever, then ${\min_{j \in [d]} \widehat{w}_k^{(j)}}$ will also keep decreasing with a decaying $\delta_k$ (and a decaying $\delta_k$ is required for a \enquote{fast decay} rule like \cref{eq:16-feb4} to hold). But when ${\min_{j \in [d]} \widehat{w}_k^{(j)}} < (\frac{\mu_2 \alpha}{2})^2$, then it may happen that $\lambda_\text{min}(\bm{A}_k) < -1$ and in that case, we can no longer guarantee descent. Therefore, Adam may not enjoy continuous descent forever and it \textit{may not converge to zero function value asymptotically}. This is consistent with what we observed in Figures \ref{fig:non-diag} and \ref{fig:diag}. We formalize this next in \Cref{sec-fixed-pt}.

\subsection{Fixed Points of Adam}
\label{sec-fixed-pt}
Here we shall characterize the asymptotic point(s) to which Adam may converge.

\begin{definition}[\textbf{Fixed point}]
Consider an iterative and deterministic optimization algorithm $\mathcal{A}$ operating on a function $f(\bm{x})$. Suppose its output at iteration number $k$ is $\bm{x}_k$. We define a fixed point of $\mathcal{A}$ to be a point $\overline{\bm{x}}$ such that $\lim_{k \to \infty} f(\bm{x}_k) = f(\overline{\bm{x}})$.
\end{definition}
We have the following result regarding the fixed point(s) $\overline{\bm{x}}$ of Adam for $f(\bm{x}) = \frac{1}{2}(\bm{x} - \bm{x}_{*})^{\top} \bm{Q} (\bm{x} - \bm{x}_{*})$ assuming $\lim_{k \to \infty} \delta_k$ exists.
\begin{theorem}[\textbf{Fixed point(s) of Adam}]
\label{thm-fixed-pt}
$\overline{\bm{x}} = \bm{x}_{*}$ is a trivial fixed point. There may exist non-trivial fixed points depending on the value of $\lim_{k \to \infty} \delta_k = \delta \geq 0$ (note that this also covers the case of the constant sequence $\delta_k = \delta$ $\forall$ $k \geq 0$, which is what is used in practice). 
\begin{itemize}
    \item If $\delta = 0$, there exists a fixed point $\overline{\bm{x}} \neq \bm{x}_{*}$ with non-zero gradient $\overline{\bm{g}} = \bm{Q} (\overline{\bm{x}} - \bm{x}_{*})$ such that $\|\overline{\bm{g}}\|_1 \geq \frac{\alpha d \sqrt{1 - \beta_2}}{2}$. 
    \item If $\delta < \frac{\alpha}{2 \overline{R}}$ where $\overline{R} = \max_{j \in [d]} \max(|{g}_0^{(j)}|, \phi)$, there exists a fixed point $\overline{\bm{x}} \neq \bm{x}_{*}$ with non-zero gradient $\overline{\bm{g}} = \bm{Q} (\overline{\bm{x}} - \bm{x}_{*})$ such that $\|\overline{\bm{g}}\|_\infty \geq \sqrt{(1 - \beta_2)\big(\frac{\alpha^2}{4} - \overline{R}^2 \delta^2\big)}$.
    \item If $\delta > \frac{\kappa \alpha}{2 \overline{r}}$ where $\overline{r} := \min_{j \in [d]} \max(|{g}_0^{(j)}|, \phi)$, $\overline{\bm{x}} = \bm{x}_{*}$ is the only fixed point.
\end{itemize}
\end{theorem}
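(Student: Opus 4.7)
My approach is to examine the steady-state behavior of Adam on $f$. Because $f$ is quadratic and symmetric about $\bm{x}_{*}$, any non-trivial fixed point is naturally realized as a period-$2$ orbit bouncing between $\overline{\bm{x}}$ and its reflection $2\bm{x}_{*}-\overline{\bm{x}}$: both points have the same value $f(\overline{\bm{x}})$, so both qualify per the definition, and the gradients alternate between $\overline{\bm{g}}$ and $-\overline{\bm{g}}$ where $\overline{\bm{g}}=\bm{Q}(\overline{\bm{x}}-\bm{x}_{*})$. On such an orbit $v_k^{(i)}\to(\overline{g}^{(i)})^{2}/(1-\beta_{2})$ and $\delta_k\to\delta$, so demanding that the Adam step from $\overline{\bm{x}}$ land exactly on $2\bm{x}_{*}-\overline{\bm{x}}$ yields the coordinate-wise self-consistency (with $y_i:=\overline{x}^{(i)}-x_{*}^{(i)}$ and $R_i:=\max(|g_0^{(i)}|,\phi)$)
\begin{equation*}
\frac{\alpha\,\overline{g}^{(i)}}{\sqrt{(\overline{g}^{(i)})^{2}/(1-\beta_{2})+R_i^{2}\delta^{2}}}=2y_i,\qquad \overline{\bm{g}}=\bm{Q}\bm{y}.
\end{equation*}
The three non-trivial cases correspond to whether this system admits a non-trivial solution (small $\delta$) or not (large $\delta$).

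\textbf{Cases 1 and 2 (existence of non-trivial fixed points).} For $\delta=0$ the self-consistency collapses to $y_i=\tfrac{\alpha\sqrt{1-\beta_{2}}}{2}\,\mathrm{sign}([\bm{Q}\bm{y}]_i)$, so I look for $\bm{y}=\tfrac{\alpha\sqrt{1-\beta_{2}}}{2}\bm{s}$ with $\bm{s}\in\{\pm1\}^{d}$ satisfying $\mathrm{sign}(\bm{Q}\bm{s})=\bm{s}$. I produce such an $\bm{s}$ by a discrete max-cut-style argument: take $\bm{s}^{\ast}\in\arg\max_{\{\pm1\}^{d}}\bm{s}^{\top}\bm{Q}\bm{s}$; the single-coordinate-flip optimality condition gives $s_i^{\ast}[\bm{Q}\bm{s}^{\ast}]_i\geq Q_{ii}\geq\lambda_{\min}(\bm{Q})=1$ for every $i$, whence $\mathrm{sign}(\bm{Q}\bm{s}^{\ast})=\bm{s}^{\ast}$ and $\|\overline{\bm{g}}\|_{1}\geq\tfrac{\alpha\sqrt{1-\beta_{2}}}{2}\sum_i Q_{ii}\geq\tfrac{\alpha d\sqrt{1-\beta_{2}}}{2}$. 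For $0<\delta<\alpha/(2\overline{R})$ I apply Brouwer's theorem to the continuous self-map $T_i(\bm{y})=\tfrac{\alpha[\bm{Q}\bm{y}]_i/2}{\sqrt{[\bm{Q}\bm{y}]_i^{2}/(1-\beta_{2})+R_i^{2}\delta^{2}}}$ of the cube $\bigl[-\tfrac{\alpha\sqrt{1-\beta_{2}}}{2},\tfrac{\alpha\sqrt{1-\beta_{2}}}{2}\bigr]^{d}$ to obtain a fixed point $\bm{y}^{\ast}$. Non-triviality follows from the fact that the linearization of $T$ at $\bm{0}$ has spectral radius at least $\alpha\lambda_{\max}(\bm{Q})/(2\overline{R}\delta)>\kappa>1$ under the stated bound on $\delta$, so $\bm{0}$ is strictly repelling and a small $\ell_\infty$-ball around it can be excised. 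Squaring the self-consistency yields $(\overline{g}^{(i)})^{2}[\alpha^{2}/4-y_i^{2}/(1-\beta_{2})]=y_i^{2}R_i^{2}\delta^{2}$; specializing to the diagonal case $y_i=\overline{g}^{(i)}/q_i$ with $q_i\geq1$ and $R_i\leq\overline{R}$ gives $(\overline{g}^{(i)})^{2}=(1-\beta_{2})(q_i^{2}\alpha^{2}/4-R_i^{2}\delta^{2})\geq(1-\beta_{2})(\alpha^{2}/4-\overline{R}^{2}\delta^{2})$ for each $i$, and hence the desired $\|\overline{\bm{g}}\|_\infty$ bound.

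\textbf{Case 3 (only $\bm{x}_{*}$ is a fixed point).} Here I show global contraction. The basis-changed recursion $\bar{\bm{g}}_{k+1}=(\bm{I}-\alpha\bm{\Lambda}^{1/2}\bm{U}^{\top}\bm{V}_k\bm{U}\bm{\Lambda}^{1/2})\bar{\bm{g}}_k$ from the excerpt is my starting point. The denominator in $\bm{V}_k^{(ii)}$ satisfies $\sqrt{v_k^{(i)}+R_i^{2}\delta_k^{2}}\geq\overline{r}\delta_k$, so for large enough $k$ (using $\delta_k\to\delta$) we have $\lambda_{\max}(\bm{V}_k)\leq(\overline{r}\delta_k)^{-1}<2/(\kappa\alpha)$, hence $\alpha\lambda_{\max}(\bm{\Lambda}^{1/2}\bm{U}^{\top}\bm{V}_k\bm{U}\bm{\Lambda}^{1/2})\leq\alpha\kappa\lambda_{\max}(\bm{V}_k)<2$. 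Combined with $\lambda_{\min}(\bm{V}_k)>0$ this yields $\sigma_{\max}(\bm{I}-\alpha\bm{B}_k)<1$ at every step. A uniform gap below $1$ then follows because $\|\bar{\bm{g}}_k\|_{2}$ is monotonically non-increasing, so the iterates lie in a bounded level set, $v_k^{(i)}$ is uniformly bounded, and $\lambda_{\min}(\bm{V}_k)$ is bounded away from zero. Hence $\bar{\bm{g}}_k\to\bm{0}$ geometrically, $f(\bm{x}_k)\to0$, and $\bm{x}_{*}$ is the only possible fixed point.

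\textbf{Main obstacle.} The subtle step is upgrading the Case 2 existence and the $\|\overline{\bm{g}}\|_\infty$ bound from diagonal $\bm{Q}$ to general $\bm{Q}$: Brouwer alone is too weak because $\bm{0}$ is always a fixed point of $T$, and the coordinate coupling through $\bm{Q}$ destroys the clean per-coordinate algebra that closes the diagonal case. I expect to handle this either via a Brouwer-degree argument on an annulus that exploits the strict repulsiveness of $\bm{0}$ under $T$, or by a homotopy from the explicit diagonal solution (where $\bm{y}^{\ast}=\mathrm{diag}(c_1,\ldots,c_d)\bm{s}^{\ast}$ with explicit $c_i>0$) to general $\bm{Q}$, tracking non-degeneracy along the deformation; the $\|\overline{\bm{g}}\|_\infty$ bound is then extracted from the squared self-consistency at the coordinate where its slack is worst.
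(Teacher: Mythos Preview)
Your approach is substantially different from the paper's, and the difference is worth spelling out. The paper does \emph{not} construct non-trivial fixed points at all. It works directly from the definition: if $\lim_k f(\bm{x}_k)$ exists then $f(\bm{x}_k)-f(\bm{x}_{k+1})\to 0$. Writing $f(\bm{x}_k)=\tfrac12\bm{g}_k^\top\bm{Q}^{-1}\bm{g}_k$ and $\bm{g}_{k+1}=(\bm{I}-\alpha\bm{Q}\bm{V}_k)\bm{g}_k$, this expands (after a short calculation) to the limiting identity
\[
2\,\bm{u}_k^\top\bm{V}_k^{-1}\bm{u}_k \;=\; \alpha\,\bm{u}_k^\top\bm{Q}\bm{u}_k,\qquad \bm{u}_k:=\bm{V}_k\bm{g}_k.
\]
Assuming $|g_k^{(i)}|\to\bar g^{(i)}$ so that $\bm{V}_k$ has an explicit limit, both sides are evaluated in closed form; sandwiching $\bm{u}^\top\bm{Q}\bm{u}$ between $\lambda_{\min}(\bm{Q})\|\bm{u}\|^2$ and $\lambda_{\max}(\bm{Q})\|\bm{u}\|^2$ then yields all three bullets in a few lines. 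In particular, the first two bullets emerge as \emph{necessary} conditions on any non-trivial limiting gradient $\bar{\bm{g}}$ --- the paper never establishes existence, so the theorem as literally worded is stronger than what its own proof delivers. Your reading of ``there exists'' as a genuine existence claim is therefore the source of most of the extra machinery (max-cut sign vector, Brouwer, degree/homotopy) you introduced.

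Your period-$2$ orbit idea is clever and, for Case~1, the sign-vector argument via $\arg\max_{\bm{s}\in\{\pm1\}^d}\bm{s}^\top\bm{Q}\bm{s}$ is correct and even recovers the same $\ell_1$ bound. But beyond the general-$\bm{Q}$ obstacle you flagged for Case~2, there is a second gap you did not address: solving the steady-state self-consistency $(\bar{\bm{y}},\bar{\bm{g}})$ is not the same as producing an Adam trajectory with $f(\bm{x}_k)\to f(\bar{\bm{x}})$. The step at iteration $k$ depends on $v_k^{(i)}$, which starts at $0$ and only asymptotically equals $(\bar g^{(i)})^2/(1-\beta_2)$, and also on the particular $g_0^{(i)}$ appearing in $R_i$. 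You would still owe a local-attractivity argument for the period-$2$ cycle in the joint $(\bm{x},\bm{v})$ dynamics from an admissible initialization. Your Case~3 contraction argument is sound; the paper handles that bullet via the same algebraic identity above (now paired with $\lambda_{\max}(\bm{Q})=\kappa$) to force $\min_j\sqrt{(\bar g^{(j)})^2/(1-\beta_2)+r_j^2\delta^2}\le\kappa\alpha/2$, which is impossible when $\delta>\kappa\alpha/(2\overline{r})$ --- a shorter route than the direct spectral-radius estimate.
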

We prove \Cref{thm-fixed-pt} in \Cref{pf-thm-fixed-pt}. For the standard variant of Adam wherein $\max\big(({g}_0^{(i)})^2, \phi^2\big)\delta_{k}^{2}$ is replaced by $\phi^2 \delta^{2}$ in the denominator of \cref{eq:3-dec28}, $\overline{R} = \overline{r} = \phi$ in \Cref{thm-fixed-pt}.

\Cref{thm-fixed-pt} tells us that when $\delta < \frac{\alpha}{2 \overline{R}}$, Adam may \textit{not} converge to the minimum function value $f(\bm{x}_{*})$. When $\delta > \frac{\kappa \alpha}{2 \overline{r}}$, Adam will converge to $f(\bm{x}_{*})$. In this case, the effective step-size of Adam in each coordinate is always less than ${2}/{\kappa}$ and so we expect behavior similar to GD, i.e., eventual convergence to the optimum. {Unfortunately, we cannot formally assert if there exists a non-trivial fixed point (i.e., $\overline{\bm{x}} \neq \bm{x}_{*}$) when $\delta \in \big[\frac{\alpha}{2 \overline{R}}, \frac{\kappa \alpha}{2 \overline{r}}\big]$, but we strongly suspect that there are indeed non-trivial fixed points in many cases (as is the case in Figures \ref{fig:non-diag} and \ref{fig:diag}).}
\\
\\
We will now move on from quadratics and extend our analysis to more general functions.

\section{Beyond Quadratics: Per-Coordinate Smoothness and a Modified Polyak-\L ojasiewicz (PL) Condition}
\label{sec-smooth-plc}
Here we focus on functions that satisfy per-coordinate Lipschitz smoothness and a modified version of the Polyak-\L ojasiewicz (PL) condition. Let us begin by defining the standard {global} version (with respect to the coordinates) of smoothness and the standard PL condition.
\begin{definition}[\textbf{Smoothness}]
\label{def-smooth}
$f: \mathbb{R}^d \xrightarrow{} \mathbb{R}$ is said to be $L$-smooth if $\forall$ $\bm{x}, \bm{y} \in \mathbb{R}^d$, it holds that $\|\nabla f(\bm{y}) - \nabla f(\bm{x})\|_2 \leq L \|\bm{y} - \bm{x}\|_2$.
\end{definition}

\begin{definition}[\textbf{Polyak- \L ojasiewicz (PL) Condition \citep{karimi2016linear}}]
    \label{def-PL}
    $f: \mathbb{R}^d \xrightarrow{} \mathbb{R}$ with $f^{*} := \min_{\bm{x} \in \mathbb{R}^d} f(\bm{x})$ is said to be $\mu$-PL if $\exists$ $\mu > 0$ such that: $$\|\nabla f(\bm{x})\|_2^2 \geq 2 \mu (f(\bm{x}) - f^{*}) \text{ } \forall \text{ } \bm{x} \in \mathbb{R}^d.$$
\end{definition}

\begin{theorem}[\textbf{GD: Smooth and PL \citep{karimi2016linear}}]
    \label{thm-gd-plc}
    Suppose $f: \mathbb{R}^d \xrightarrow{} \mathbb{R}$ is $L$-smooth and $\mu$-PL, and let $\kappa = \frac{L}{\mu}$ be the condition number. 
    Suppose our initialization is $\bm{x}_0$. Set the step-size $\alpha = \frac{1}{L}$. Then, $f(\bm{x}_{\widehat{K}}) - f^{*} \leq \varepsilon$ in
    \begin{equation*}
        \widehat{K} = \mathcal{O}\Bigg({\kappa} \log \Bigg(\frac{f(\bm{x}_{0}) - f^{*}}{\varepsilon}\Bigg) \Bigg)
    \end{equation*}
    iterations of GD.
\end{theorem}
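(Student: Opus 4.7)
The plan is to follow the classical two-ingredient argument for gradient descent under smoothness plus PL, combining the descent lemma with the PL inequality to obtain a geometric contraction in suboptimality $f(\bm{x}_k) - f^{*}$.

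First I would apply the standard descent lemma, which follows from $L$-smoothness (\Cref{def-smooth}): for any $\bm{x}, \bm{y} \in \mathbb{R}^d$,
\begin{equation*}
    f(\bm{y}) \leq f(\bm{x}) + \nabla f(\bm{x})^\top (\bm{y} - \bm{x}) + \frac{L}{2}\|\bm{y} - \bm{x}\|_2^2.
\end{equation*}
Plugging in the GD update $\bm{x}_{k+1} = \bm{x}_k - \alpha \nabla f(\bm{x}_k)$ with $\alpha = 1/L$ and simplifying, this yields the per-iteration inequality $f(\bm{x}_{k+1}) \leq f(\bm{x}_k) - \frac{1}{2L}\|\nabla f(\bm{x}_k)\|_2^2$.

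Second, I would invoke the PL condition (\Cref{def-PL}) to bound the squared gradient norm from below by $2\mu(f(\bm{x}_k) - f^{*})$. Substituting this into the previous inequality and subtracting $f^{*}$ from both sides gives
\begin{equation*}
    f(\bm{x}_{k+1}) - f^{*} \leq \Big(1 - \frac{\mu}{L}\Big)(f(\bm{x}_k) - f^{*}) = \Big(1 - \frac{1}{\kappa}\Big)(f(\bm{x}_k) - f^{*}).
\end{equation*}
Unrolling this recursion from $k=0$ to $k = \widehat{K}-1$ yields $f(\bm{x}_{\widehat{K}}) - f^{*} \leq (1 - 1/\kappa)^{\widehat{K}}(f(\bm{x}_0) - f^{*})$.

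Finally, I would use the standard bound $(1 - 1/\kappa)^{\widehat{K}} \leq \exp(-\widehat{K}/\kappa)$ and solve for the smallest $\widehat{K}$ ensuring $\exp(-\widehat{K}/\kappa)(f(\bm{x}_0) - f^{*}) \leq \varepsilon$, which gives $\widehat{K} = \mathcal{O}\big(\kappa \log((f(\bm{x}_0) - f^{*})/\varepsilon)\big)$ as claimed. There is no real obstacle here: each step is a textbook manipulation, and the only subtlety worth flagging is that the choice $\alpha = 1/L$ is exactly what makes the coefficient $\frac{1}{2L}$ appear cleanly in the descent inequality, so that the PL constant $\mu$ combines with $L$ to give the contraction factor $1 - 1/\kappa$.
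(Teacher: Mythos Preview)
Your proposal is correct and is precisely the standard argument from \cite{karimi2016linear}. The paper does not supply its own proof of this theorem; it is stated as a cited result, so there is nothing further to compare.
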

\begin{remark}
    \label{rmk-7-jan20}
    Similar to the quadratic case (\Cref{thm5}), the iteration complexity of GD depends on the condition number, i.e., $\kappa = \frac{L}{\mu}$. 
\end{remark}
{We will now introduce \textit{per-coordinate} smoothness and a \textit{modified} version of the PL condition for functions satisfying per-coordinate smoothness under which we show that Adam can do better than GD.}

\begin{definition}[\textbf{Per-Coordinate Smoothness}]
\label{def-1-jan20}
$f: \mathbb{R}^d \xrightarrow{} \mathbb{R}$ is said to be $\{L_i\}_{i=1}^d$-smooth if for all $\bm{x}, \bm{y} \in \mathbb{R}^d$ and for each coordinate $i \in [d]$, it holds that:
\begin{equation*}
    |\nabla f(\bm{y})^{(i)} - \nabla f(\bm{x})^{(i)}| \leq L_i |y^{(i)} - x^{(i)}|.
\end{equation*}  
\end{definition}
Similar assumptions have been made in the literature \citep{bernstein2018signsgd,wright2015coordinate,richtarik2014iteration}. 

\begin{definition}[\textbf{Smoothness-Dependent Global PL Condition}]
\label{new-PLC}
Suppose $f: \mathbb{R}^d \xrightarrow{} \mathbb{R}$ is $\{L_i\}_{i=1}^d$-smooth. Then $f$ is said to obey the smoothness-dependent $\tilde{\mu}$-global-PL condition if $\exists$ $\tilde{\mu} \in (0, \min_{i \in [d]} L_i]$ such that:
\begin{equation*}
    \sum_{i=1}^d \frac{(\nabla f(\bm{x})^{(i)})^2}{\sqrt{2 L_i}} \geq \sqrt{2 \tilde{\mu}} \big(f(\bm{x}) - f^{*}\big) \text{ } \forall \text{ } \bm{x} \in \mathbb{R}^d, \text{ where $f^{*} = \min_{\bm{z} \in \mathbb{R}^d} f(\bm{z})$.}
\end{equation*}
\end{definition}
We are now ready to state our main assumption. 
\begin{assumption}
    \label{asmp-3-jan19}
    $f: \mathbb{R}^d \xrightarrow{} \mathbb{R}$ is $\{L_i\}_{i=1}^d$-smooth and it also satisfies the smoothness-dependent $\tilde{\mu}$-global-PL condition. We are privy to the values of $L_\textup{min} := \min_{i \in [d]} L_i$, $L_\textup{max} := \min_{i \in [d]} L_i$, $\tilde{\mu}$ and $f^{*} = \min_{\bm{z} \in \mathbb{R}^d} f(\bm{z})$.
\end{assumption}
In many modern machine learning problems, $f^{*} \approx 0$; for e.g., if $f$ is the objective function of an over-parameterized deep network. Also, it can be shown that under \Cref{asmp-3-jan19}, $f$ satisfies standard global smoothness (\Cref{def-smooth}) and the standard global PL condition (\Cref{def-PL}).
\begin{corollary}
    \label{cor1-jan20}
    Suppose \Cref{asmp-3-jan19} holds. Then $f$ is $L_\textup{max}$-smooth and $\sqrt{\tilde{\mu} L_\textup{min}}$-PL.
\end{corollary}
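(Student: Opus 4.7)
The statement splits into two independent claims — global $L_\textup{max}$-smoothness and global $\sqrt{\tilde{\mu} L_\textup{min}}$-PL — and my plan is to prove each by a short one-line reduction from its per-coordinate counterpart. The main obstacle is really nothing more than book-keeping: both reductions are monotonicity arguments with respect to the sequence $\{L_i\}_{i=1}^d$, applied in opposite directions for the two claims.

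For the smoothness claim, the plan is to start from the squared-Euclidean expansion $\|\nabla f(\bm{y}) - \nabla f(\bm{x})\|_2^2 = \sum_i (\nabla f(\bm{y})^{(i)} - \nabla f(\bm{x})^{(i)})^2$ and apply Definition \ref{def-1-jan20} term by term to replace each summand by $L_i^2 (y^{(i)} - x^{(i)})^2$. Uniformly bounding $L_i \leq L_\textup{max}$ then gives $L_\textup{max}^2 \|\bm{y} - \bm{x}\|_2^2$ on the right, and taking a square root yields Definition \ref{def-smooth} with constant $L_\textup{max}$. For the PL claim, I would go in the opposite direction: use $L_i \geq L_\textup{min}$ to upper bound each weight $1/\sqrt{2 L_i}$ in the sum of Definition \ref{new-PLC} by the common value $1/\sqrt{2 L_\textup{min}}$, which pulls the weight outside and leaves $\|\nabla f(\bm{x})\|_2^2/\sqrt{2 L_\textup{min}}$. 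Chaining with the smoothness-dependent PL inequality
\[
\sqrt{2 \tilde{\mu}} \bigl(f(\bm{x}) - f^{*}\bigr) \leq \sum_{i=1}^d \frac{\bigl(\nabla f(\bm{x})^{(i)}\bigr)^2}{\sqrt{2 L_i}}
\]
and multiplying through by $\sqrt{2 L_\textup{min}}$ produces $\|\nabla f(\bm{x})\|_2^2 \geq 2 \sqrt{\tilde{\mu} L_\textup{min}}(f(\bm{x}) - f^{*})$, which is exactly Definition \ref{def-PL} with constant $\sqrt{\tilde{\mu} L_\textup{min}}$.

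The only substantive thing worth flagging is internal consistency: the restriction $\tilde{\mu} \in (0, \min_i L_i]$ built into Definition \ref{new-PLC} guarantees $\sqrt{\tilde{\mu} L_\textup{min}} \leq L_\textup{min} \leq L_\textup{max}$, so the derived PL constant does not exceed the derived smoothness constant and the implied ratio $L_\textup{max}/\sqrt{\tilde{\mu} L_\textup{min}}$ is a legitimate condition number $\geq 1$. Beyond this sanity check, no inequality in the proof is tight in a nontrivial way, and both reductions use only the defining inequalities — no auxiliary machinery such as convexity, Jensen, or Cauchy–Schwarz is required.
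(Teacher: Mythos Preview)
Your proposal is correct and follows essentially the same approach as the paper's own proof: coordinate-wise squaring and the bound $L_i \leq L_\textup{max}$ for smoothness, and the bound $L_i \geq L_\textup{min}$ to replace each weight $1/\sqrt{2L_i}$ by $1/\sqrt{2L_\textup{min}}$ for the PL part. Your additional sanity check that $\sqrt{\tilde{\mu} L_\textup{min}} \leq L_\textup{max}$ is a nice touch that the paper does not include.
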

For the sake of completeness, we prove \Cref{cor1-jan20} in \Cref{cor1-jan20-pf}. Based on this and recalling \Cref{thm-gd-plc} (and \Cref{rmk-7-jan20}), we can make the following remark.
\begin{remark}
    \label{rmk8-jan20}
    Under \Cref{asmp-3-jan19}, the iteration complexity of GD scales as $\frac{L_\textup{max}}{\sqrt{\tilde{\mu} L_\textup{min}}}$.  
\end{remark}
Coming back to Adam, we need one more assumption at the initialization before we can state our result.
\begin{assumption}[\textbf{Per-Coordinate PL-like Condition at Initialization}]
    \label{lower-bound}
    At our initialization $\bm{x}_0$, the following holds:
    \begin{equation*}
        {(\nabla f(\bm{x}_0)^{(i)})^2} \geq 2 \mu_{i,0} \big(f(\bm{x}_0) - f^{*}\big)
    \end{equation*}
    for some $\mu_{i,0} \leq L_i$ for each $i \in [d]$. Further, the condition number-like quantity $\frac{L_i}{\mu_{i,0}}$ along each coordinate $i \in [d]$ at $\bm{x}_0$ is bounded, i.e., $\exists$ ${\kappa}_{\textup{max},0}$ such that $\frac{L_i}{\mu_{i,0}} \leq {\kappa}_{\textup{max},0} \text{ } \forall \text{ } i \in [d]$. We are privy to the value of ${\kappa}_{\textup{max},0}$.
\end{assumption}
$\max_{i \in [d]} \frac{L_i}{\mu_{i,0}}$ has an implicit dependence on $d$ and so ${\kappa}_{\textup{max},0}$ (which is an upper bound for $\max_{i \in [d]} \frac{L_i}{\mu_{i,0}}$) will also have an implicit dependence on $d$. In particular, ${\kappa}_{\textup{max},0}$ should grow with $d$.

\begin{theorem}[\textbf{Adam: Per-Coordinate Smooth and Smoothness-Dependent PL}]
\label{thm-adanorm-plc}
Suppose our initialization is $\bm{x}_0$ and Assumptions \ref{asmp-3-jan19} and \ref{lower-bound} hold. There exist $\alpha$, $\beta_2$, $\delta_k$ and $\phi$ such that $f(\bm{x}_{{K}^{*}}) - f^{*} \leq \varepsilon$ in 
\begin{equation*}
    K^{*} = \mathcal{O}\Bigg(\sqrt{\frac{L_\textup{max}}{\tilde{\mu}}} \min \Bigg(\sqrt{\frac{L_\textup{max}}{L_\textup{min}}},\sqrt{{\kappa_{\textup{max},0}}}\Bigg) \log\Big(\frac{f(\bm{x}_{0}) - f^{*}}{\varepsilon}\Big) {\log\Big(\log\Big(\frac{f(\bm{x}_{0}) - f^{*}}{\varepsilon}\Big)\Big)}\Bigg)
\end{equation*}
iterations of {Adam}.
\end{theorem}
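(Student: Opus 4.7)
The plan is to parallel the quadratic-case sketch of Section~\ref{pf-sketch} but replace its pseudo-linear bookkeeping with per-coordinate descent inequalities tailored to Definitions~\ref{def-1-jan20} and \ref{new-PLC}. Per-coordinate $\{L_i\}$-smoothness yields the additive quadratic upper bound $f(\bm y) \leq f(\bm x) + \nabla f(\bm x)^\top(\bm y-\bm x) + \tfrac{1}{2}\sum_i L_i(y^{(i)}-x^{(i)})^2$; applying it to consecutive Adam iterates (with $\beta_1=0$) gives
\begin{equation*}
f(\bm x_{k+1}) - f(\bm x_k) \;\leq\; -\sum_{i=1}^d \frac{\alpha (g_k^{(i)})^2}{\sqrt{w_k^{(i)}}}\Bigl(1 - \frac{L_i\alpha}{2\sqrt{w_k^{(i)}}}\Bigr),
\end{equation*}
where $w_k^{(i)} := v_k^{(i)} + \max((g_0^{(i)})^2,\phi^2)\delta_k^2$. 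As long as $\sqrt{w_k^{(i)}} \geq L_i\alpha$ for every $i$ -- the analogue of the ``continuous descent'' phase behind Lemma~\ref{lem1-sep30-mini} -- the bracketed factor exceeds $1/2$, so the right-hand side is dominated by $-\tfrac{\alpha}{2}\sum_i(g_k^{(i)})^2/\sqrt{w_k^{(i)}}$. If additionally $\sqrt{w_k^{(i)}} \leq C\sqrt{2L_i}$ for some $C$ to be determined, then the smoothness-dependent PL condition of Definition~\ref{new-PLC} converts this into the linear contraction $f(\bm x_{k+1}) - f^{*} \leq (1 - \tfrac{\alpha\sqrt{2\tilde\mu}}{2C})(f(\bm x_k) - f^{*})$.

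The upper bound on $\sqrt{w_k^{(i)}}$ will be obtained via the two-stage device of Remark~\ref{rmk-2-stage}. Minimising the per-coordinate quadratic upper bound in the $i$-th direction yields the Polyak-type inequality $(g_l^{(i)})^2 \leq 2L_i(f(\bm x_l)-f^{*})$. In Stage~1 I replace $f(\bm x_l)-f^{*}$ by $R^2 := f(\bm x_0)-f^{*}$ (monotone descent), giving $v_k^{(i)} \leq 2L_i R^2/(1-\beta_2)$; with the choice $\delta_k^2 = \beta_2^k/(2(1-\beta_2))$ this produces a crude per-iteration contraction by some $\gamma$ just below $\beta_2$. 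Stage~2 unfolds this to $(g_l^{(i)})^2 \leq 2L_i \gamma^l R^2$, refining $v_k^{(i)} \leq 2L_i R^2\beta_2^{k+1}/(1-\beta_2)$ and hence $w_k^{(i)} = O(L_i R^2 \beta_2^k/(1-\beta_2))$, i.e.\ $C = O(R/\sqrt{1-\beta_2})$. Feeding this sharpened bound back into the descent inequality recovers the doubly-exponential decay of the form~\eqref{eq:21-feb4}, which is the source of the outer $\log\log(1/\varepsilon)$ factor.

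The $\min(\sqrt{L_{\textup{max}}/L_{\textup{min}}},\sqrt{\kappa_{\textup{max},0}})$ structure reflects two complementary ways of certifying $\sqrt{w_k^{(i)}} \geq L_i\alpha$ via the $\max((g_0^{(i)})^2,\phi^2)\delta_k^2$ term. Since $\sqrt{w_k^{(i)}} \geq \max(|g_0^{(i)}|,\phi)\,\delta_k$, the admissible step-size obeys $\alpha \leq \min_i \max(|g_0^{(i)}|,\phi)\delta_k/L_i$. Branch~(a): choosing $\phi = \Theta(R\sqrt{L_{\textup{min}}})$ gives $\alpha = O(R\sqrt{L_{\textup{min}}}/(L_{\textup{max}}\sqrt{1-\beta_2}))$; feeding this into the per-iteration contraction rate $\alpha\sqrt{\tilde\mu}/C$ produces the $\sqrt{L_{\textup{max}}/\tilde\mu}\cdot\sqrt{L_{\textup{max}}/L_{\textup{min}}}$ factor. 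Branch~(b): Assumption~\ref{lower-bound} supplies $|g_0^{(i)}| \geq \sqrt{2L_i R^2/\kappa_{\textup{max},0}}$, so $\alpha = O(\sqrt{R^2/(L_{\textup{max}}\kappa_{\textup{max},0})}/\sqrt{1-\beta_2})$ and the contraction rate produces $\sqrt{L_{\textup{max}}/\tilde\mu}\cdot\sqrt{\kappa_{\textup{max},0}}$. Taking the better of the two alternatives inside the $\min_i\max(\cdot,\cdot)$ yields the stated minimum.

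The main obstacle, as in the quadratic case, will be the finite length $\tilde K$ of the continuous-descent window: $\delta_k$ is decaying, so $\sqrt{w_k^{(i)}} \geq L_i\alpha$ cannot be enforced forever, and the whole chain requires $K^{*} \leq \tilde K$. Imposing this constraint forces a $\log^{-1/2}(\cdot)$ adjustment on $\alpha$ exactly as in~\eqref{eq:23-feb4}; substituting this back into the expression for $K^{*}$ and into the choice of $\beta_2$ converts the doubly-exponential rate into the $\log(1/\varepsilon)\log\log(1/\varepsilon)$ pattern of the theorem. Beyond the bookkeeping required to thread $L_{\textup{max}}, L_{\textup{min}}, \tilde\mu$ and $\kappa_{\textup{max},0}$ through the chain -- and verifying that the Stage~1 loose rate $\gamma$ is indeed dominated by $\beta_2$ -- no conceptually new ingredient is required beyond the proof of Theorem~\ref{thm3}.
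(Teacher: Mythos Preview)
Your proposal is correct and mirrors the paper's proof essentially step for step: the per-coordinate descent lemma, the continuous-descent condition $\sqrt{w_k^{(i)}}\geq L_i\alpha$, the Polyak-type bound $(g_l^{(i)})^2\leq 2L_i\Delta_l$ feeding into the $\sqrt{2L_i}$-weighted sum so that Definition~\ref{new-PLC} applies, the two-stage contraction with $\beta_2$ tuned so that $\beta_2-\gamma=1-\beta_2$, and the two branches $\phi=\Theta(R\sqrt{L_{\textup{min}}})$ versus $|g_0^{(i)}|\geq\sqrt{2L_iR^2/\kappa_{\textup{max},0}}$ inside the $\min_i\max(\cdot,\cdot)$ are exactly the ingredients of the paper's Lemmas~\ref{lem1-dec15} and~\ref{lem2-dec15}. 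The only bookkeeping you have not spelled out is confirming that the continuous-descent constraint \eqref{eq:188-jan18} is indeed the binding one (the paper checks it implies the weaker PL-side condition~\eqref{eq:89-dec15}), but this is routine.
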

The detailed version and proof of \Cref{thm-adanorm-plc} are in \Cref{sec-smooth-plc-pf}. 

\begin{remark}
    \label{rmk11-jan29}
    The iteration complexity of Adam scales as $\sqrt{\frac{L_\textup{max}}{\tilde{\mu}}} \min \Big(\sqrt{\frac{L_\textup{max}}{L_\textup{min}}}, \sqrt{{\kappa_{\textup{max},0}}}\Big)$. In contrast, the iteration complexity of GD scales as $\frac{L_\textup{max}}{\sqrt{\tilde{\mu} L_\textup{min}}} = \sqrt{\frac{L_\textup{max}}{\tilde{\mu}}} \times \sqrt{\frac{L_\textup{max}}{L_\textup{min}}}$ per \Cref{rmk8-jan20}. So when $\kappa_{\textup{max},0} < {\frac{L_\textup{max}}{L_\textup{min}}}$, Adam can outperform GD.
\end{remark}
As we discussed after \Cref{lower-bound}, $\kappa_{\textup{max},0}$ should implicitly grow with $d$. Based on this, we can provide the following characterization of when Adam is better than GD in simple words.
\begin{remark}
    \label{rmk10-jan24}
    In plain English, Adam can do better than GD when an initialization-dependent quantity which implicitly grows with the dimension is smaller than the ratio of the maximum and minimum per-coordinate smoothness constants.
\end{remark}

\bibliography{refs}

\begin{thebibliography}{}

\bibitem[Amari, 1998]{amari1998natural}
Amari, S.-I. (1998).
\newblock Natural gradient works efficiently in learning.
\newblock {\em Neural computation}, 10(2):251--276.

\bibitem[Antonakopoulos et~al., 2022]{antonakopoulos2022adagrad}
Antonakopoulos, K., Mertikopoulos, P., Piliouras, G., and Wang, X. (2022).
\newblock Adagrad avoids saddle points.
\newblock In {\em International Conference on Machine Learning}, pages 731--771. PMLR.

\bibitem[Balles and Hennig, 2018]{balles2018dissecting}
Balles, L. and Hennig, P. (2018).
\newblock Dissecting adam: The sign, magnitude and variance of stochastic gradients.
\newblock In {\em International Conference on Machine Learning}, pages 404--413. PMLR.

\bibitem[Bernstein et~al., 2018]{bernstein2018signsgd}
Bernstein, J., Wang, Y.-X., Azizzadenesheli, K., and Anandkumar, A. (2018).
\newblock signsgd: Compressed optimisation for non-convex problems.
\newblock In {\em International Conference on Machine Learning}, pages 560--569. PMLR.

\bibitem[Chen et~al., 2018]{chen2018convergence}
Chen, X., Liu, S., Sun, R., and Hong, M. (2018).
\newblock On the convergence of a class of adam-type algorithms for non-convex optimization.
\newblock In {\em International Conference on Learning Representations}.

\bibitem[Crawshaw et~al., 2022]{crawshaw2022robustness}
Crawshaw, M., Liu, M., Orabona, F., Zhang, W., and Zhuang, Z. (2022).
\newblock Robustness to unbounded smoothness of generalized signsgd.
\newblock {\em Advances in Neural Information Processing Systems}, 35:9955--9968.

\bibitem[Da~Silva and Gazeau, 2020]{da2020general}
Da~Silva, A.~B. and Gazeau, M. (2020).
\newblock A general system of differential equations to model first-order adaptive algorithms.
\newblock {\em The Journal of Machine Learning Research}, 21(1):5072--5113.

\bibitem[D{\'e}fossez et~al., 2022]{defossez2022simple}
D{\'e}fossez, A., Bottou, L., Bach, F., and Usunier, N. (2022).
\newblock A simple convergence proof of adam and adagrad.
\newblock {\em Transactions on Machine Learning Research}.

\bibitem[Demmel, 1997]{demmel1997applied}
Demmel, J.~W. (1997).
\newblock {\em Applied numerical linear algebra}.
\newblock SIAM.

\bibitem[Duchi et~al., 2011]{duchi2011adaptive}
Duchi, J., Hazan, E., and Singer, Y. (2011).
\newblock Adaptive subgradient methods for online learning and stochastic optimization.
\newblock {\em Journal of machine learning research}, 12(7).

\bibitem[Faw et~al., 2023]{faw2023beyond}
Faw, M., Rout, L., Caramanis, C., and Shakkottai, S. (2023).
\newblock Beyond uniform smoothness: A stopped analysis of adaptive sgd.
\newblock {\em arXiv preprint arXiv:2302.06570}.

\bibitem[Faw et~al., 2022]{faw2022power}
Faw, M., Tziotis, I., Caramanis, C., Mokhtari, A., Shakkottai, S., and Ward, R. (2022).
\newblock The power of adaptivity in sgd: Self-tuning step sizes with unbounded gradients and affine variance.
\newblock In {\em Conference on Learning Theory}, pages 313--355. PMLR.

\bibitem[Hinton et~al., 2012]{hinton2012neural}
Hinton, G., Srivastava, N., and Swersky, K. (2012).
\newblock Neural networks for machine learning lecture 6a overview of mini-batch gradient descent.
\newblock {\em Cited on}, 14(8):2.

\bibitem[Huang et~al., 2021]{huang2021super}
Huang, F., Li, J., and Huang, H. (2021).
\newblock Super-adam: faster and universal framework of adaptive gradients.
\newblock {\em Advances in Neural Information Processing Systems}, 34:9074--9085.

\bibitem[Jambulapati et~al., 2020]{jambulapati2020fast}
Jambulapati, A., Li, J., Musco, C., Sidford, A., and Tian, K. (2020).
\newblock Fast and near-optimal diagonal preconditioning.
\newblock {\em arXiv preprint arXiv:2008.01722}.

\bibitem[Karimi et~al., 2016]{karimi2016linear}
Karimi, H., Nutini, J., and Schmidt, M. (2016).
\newblock Linear convergence of gradient and proximal-gradient methods under the polyak-{\l}ojasiewicz condition.
\newblock In {\em Machine Learning and Knowledge Discovery in Databases: European Conference, ECML PKDD 2016, Riva del Garda, Italy, September 19-23, 2016, Proceedings, Part I 16}, pages 795--811. Springer.

\bibitem[Kingma and Ba, 2014]{kingma2014adam}
Kingma, D.~P. and Ba, J. (2014).
\newblock Adam: A method for stochastic optimization.
\newblock {\em arXiv preprint arXiv:1412.6980}.

\bibitem[Lee et~al., 2019]{lee2019wide}
Lee, J., Xiao, L., Schoenholz, S., Bahri, Y., Novak, R., Sohl-Dickstein, J., and Pennington, J. (2019).
\newblock Wide neural networks of any depth evolve as linear models under gradient descent.
\newblock {\em Advances in neural information processing systems}, 32.

\bibitem[Li and Orabona, 2019]{li2019convergence}
Li, X. and Orabona, F. (2019).
\newblock On the convergence of stochastic gradient descent with adaptive stepsizes.
\newblock In {\em The 22nd international conference on artificial intelligence and statistics}, pages 983--992. PMLR.

\bibitem[Martens, 2020]{martens2020new}
Martens, J. (2020).
\newblock New insights and perspectives on the natural gradient method.
\newblock {\em The Journal of Machine Learning Research}, 21(1):5776--5851.

\bibitem[Pan and Li, 2023]{pan2023toward}
Pan, Y. and Li, Y. (2023).
\newblock Toward understanding why adam converges faster than sgd for transformers.
\newblock {\em arXiv preprint arXiv:2306.00204}.

\bibitem[Reddi et~al., 2018]{reddi2018convergence}
Reddi, S.~J., Kale, S., and Kumar, S. (2018).
\newblock On the convergence of adam and beyond.
\newblock In {\em International Conference on Learning Representations}.

\bibitem[Richt{\'a}rik and Tak{\'a}{\v{c}}, 2014]{richtarik2014iteration}
Richt{\'a}rik, P. and Tak{\'a}{\v{c}}, M. (2014).
\newblock Iteration complexity of randomized block-coordinate descent methods for minimizing a composite function.
\newblock {\em Mathematical Programming}, 144(1-2):1--38.

\bibitem[Shi and Li, 2021]{shi2021rmsprop}
Shi, N. and Li, D. (2021).
\newblock Rmsprop converges with proper hyperparameter.
\newblock In {\em International conference on learning representation}.

\bibitem[Staib et~al., 2019]{staib2019escaping}
Staib, M., Reddi, S., Kale, S., Kumar, S., and Sra, S. (2019).
\newblock Escaping saddle points with adaptive gradient methods.
\newblock In {\em International Conference on Machine Learning}, pages 5956--5965. PMLR.

\bibitem[Wang et~al., 2023]{wang2023convergence}
Wang, B., Zhang, H., Ma, Z., and Chen, W. (2023).
\newblock Convergence of adagrad for non-convex objectives: Simple proofs and relaxed assumptions.
\newblock In {\em The Thirty Sixth Annual Conference on Learning Theory}, pages 161--190. PMLR.

\bibitem[Wang et~al., 2022]{wang2022provable}
Wang, B., Zhang, Y., Zhang, H., Meng, Q., Ma, Z.-M., Liu, T.-Y., and Chen, W. (2022).
\newblock Provable adaptivity in adam.
\newblock {\em arXiv preprint arXiv:2208.09900}.

\bibitem[Ward et~al., 2020]{ward2020adagrad}
Ward, R., Wu, X., and Bottou, L. (2020).
\newblock Adagrad stepsizes: Sharp convergence over nonconvex landscapes.
\newblock {\em The Journal of Machine Learning Research}, 21(1):9047--9076.

\bibitem[Wathen, 2015]{wathen2015preconditioning}
Wathen, A.~J. (2015).
\newblock Preconditioning.
\newblock {\em Acta Numerica}, 24:329--376.

\bibitem[Wright, 2015]{wright2015coordinate}
Wright, S.~J. (2015).
\newblock Coordinate descent algorithms.
\newblock {\em Mathematical programming}, 151(1):3--34.

\bibitem[Zaheer et~al., 2018]{zaheer2018adaptive}
Zaheer, M., Reddi, S., Sachan, D., Kale, S., and Kumar, S. (2018).
\newblock Adaptive methods for nonconvex optimization.
\newblock {\em Advances in neural information processing systems}, 31.

\bibitem[Zhang et~al., 2022]{zhang2022adam}
Zhang, Y., Chen, C., Shi, N., Sun, R., and Luo, Z.-Q. (2022).
\newblock Adam can converge without any modification on update rules.
\newblock {\em Advances in Neural Information Processing Systems}, 35:28386--28399.

\bibitem[Zhou et~al., 2018]{zhou2018convergence}
Zhou, D., Chen, J., Cao, Y., Tang, Y., Yang, Z., and Gu, Q. (2018).
\newblock On the convergence of adaptive gradient methods for nonconvex optimization.
\newblock {\em arXiv preprint arXiv:1808.05671}.

\end{thebibliography}

\newpage
\appendix
\onecolumn

\begin{center}
    \textbf{\Large Appendix}\vspace{5mm}
\end{center}

\section*{Contents}
\vspace{2mm}
\begin{itemize}
    \item \textbf{\Cref{connection-app}:} {Connection of the Update Rule in Eq.~\ref{eq:1-feb8} to the Vanilla Version of Adam}
    \item \textbf{\Cref{unif-hard}:} {Difficulty with Uniform Random Sampling for Non-Diagonal Quadratics}
    \item \textbf{\Cref{pf-1}:} {Detailed Version and Proof of Theorem~\ref{thm3}}
    \item \textbf{\Cref{pf-2}:} {Detailed Version and Proof of Theorem~\ref{thm3-diag}}
    \item \textbf{\Cref{lb-diag-pf}:} {Proof of Theorem~\ref{lb-diag}}
    \item \textbf{\Cref{prop:bar-kappa-pf}:} {Proof of Theorem~\ref{prop:bar-kappa}}
    \item \textbf{\Cref{bad-kappa-bar-pf}:} {Proof of Proposition~\ref{bad-kappa-bar}}
    \item \textbf{\Cref{pf-thm-fixed-pt}:} {Proof of Theorem~\ref{thm-fixed-pt}}
    \item \textbf{\Cref{sec-smooth-plc-pf}:} {Detailed Version and Proof of Theorem~\ref{thm-adanorm-plc}}
    \item \textbf{\Cref{cor1-jan20-pf}:} Proof of Corollary~\ref{cor1-jan20}
\end{itemize}

\clearpage

\section{Connection of the Update Rule in Eq.~\ref{eq:1-feb8} to the Vanilla Version of Adam}
\label{connection-app}
We will first state the vanilla version of Adam proposed by \cite{kingma2014adam}. For each $i \in \{1,\ldots,d\}$, let: 
\begin{equation}
    \overline{m}_{k}^{(i)} = \beta_1 \overline{m}_{k-1}^{(i)} + (1-\beta_1) {g}_k^{(i)} \text{ with } \overline{m}_{0}^{(i)} = 0 \text{ and } \overline{v}_{k}^{(i)} = \beta_2 \overline{v}_{k-1}^{(i)} + (1-\beta_2) \big({g}_k^{(i)}\big)^2 \text{ with } \overline{v}_{k}^{(i)} = 0,
\end{equation}
where recall that $\bm{g}_k = \nabla f(\bm{x}_k)$. The update rule of the $i^{\text{th}}$ coordinate in the $k^{\text{th}}$ iteration of the vanilla version of {Adam} proposed by \citep{kingma2014adam} is:
\begin{equation}
    \label{eq:1-dec28}
    x^{(i)}_{k+1} = x^{(i)}_{k} - \frac{\alpha \widetilde{m}_k^{(i)}}{\sqrt{\widetilde{v}_k^{(i)}} + {\delta}},
\end{equation}
where $\widetilde{m}_{k}^{(i)} = \frac{\overline{m}_{k}^{(i)}}{1 - \beta_1^{k+1}}$, $\widetilde{v}_{k}^{(i)} = \frac{\overline{v}_{k}^{(i)}}{1 - \beta_2^{k+1}}$, $\alpha$ is the step-size, $\beta_1, \beta_2 \in (0,1)$ are the first moment/momentum and second moment hyper-parameters, and ${\delta}$\footnote{In the literature, ${\delta}$ is more commonly denoted by $\epsilon$; we use ${\delta}$ to avoid confusion with $\varepsilon$ which denotes the error level that we wish to converge to.} is a correction term in the update's denominator. 

We will now show the relation between \cref{eq:1-dec28} and \cref{eq:1-feb8}. To that end, recall that in \cref{eq:1-feb8}, we had: 
\begin{equation}
    \label{eq:2-dec28-new}
    {m}_{k}^{(i)} = \beta_1 {m}_{k-1}^{(i)} + {g}_k^{(i)} \text{ with } {m}_{0}^{(i)} = 0 \text{ and } {v}_{k}^{(i)} = \beta_2 {v}_{k-1}^{(i)} + \big({g}_k^{(i)}\big)^2 \text{ with } {v}_{0}^{(i)} = 0.
\end{equation}
It can be verified that $\overline{m}_{k}^{(i)} = (1-\beta_1) {m}_{k}^{(i)}$ and $\overline{v}_{k}^{(i)} = (1-\beta_2) {v}_{k}^{(i)}$. Thus, $$\widetilde{m}_{k}^{(i)} = {{m}_{k}^{(i)}}\Big/{\Big(\frac{1 - \beta_1^{k+1}}{1 - \beta_1}\Big)} \text{ and } \widetilde{v}_{k}^{(i)} = {{v}_{k}^{(i)}}\Big/{\Big(\frac{1 - \beta_2^{k+1}}{1 - \beta_2}\Big)}.$$
Plugging this into \cref{eq:1-dec28}, we get:
\begin{equation}
    \label{eq:1-dec28-new}
    x^{(i)}_{k+1} = x^{(i)}_{k} - \Bigg(\alpha \Bigg(\frac{1-\beta_1}{\sqrt{1-\beta_2}}\Bigg) \Bigg(\frac{\sqrt{1 - \beta_2^{k+1}}}{1 - \beta_1^{k+1}}\Bigg)\Bigg) \frac{{m}_k^{(i)}}{\sqrt{{v}_k^{(i)}} + {\delta} \sqrt{\frac{{1 - \beta_2^{k+1}}}{1-\beta_2}}},
\end{equation}
We disregard the \enquote{bias-correction} terms $(1 - \beta_1^{k+1})$ and $(1 - \beta_2^{k+1})$ because they do not yield any theoretical improvement and converge to $1$. With that, \cref{eq:1-dec28-new} becomes:
\begin{equation}
    \label{eq:1-dec28-new2}
    x^{(i)}_{k+1} = x^{(i)}_{k} - \alpha \Bigg(\frac{1-\beta_1}{\sqrt{1-\beta_2}}\Bigg) \frac{{m}_k^{(i)}}{\sqrt{{v}_k^{(i)}} +  {\frac{{\delta}}{\sqrt{1-\beta_2}}}}.
\end{equation}
Pushing the $\frac{{\delta}}{\sqrt{1-\beta_2}}$ term inside the $\sqrt{{v}_k^{(i)}}$ term does not really change anything meaningfully; that gives us:
\begin{equation}
    \label{eq:1-dec28-new3}
    x^{(i)}_{k+1} = x^{(i)}_{k} - \alpha \Bigg(\frac{1-\beta_1}{\sqrt{1-\beta_2}}\Bigg) \frac{{m}_k^{(i)}}{\sqrt{{v}_k^{(i)} + \frac{{\delta}^2}{1-\beta_2}}}.
\end{equation}
Note that \cref{eq:1-feb8} is equivalent to \cref{eq:1-dec28-new3}; specifically, $\alpha$ and $\delta$ of \cref{eq:1-feb8} correspond to $\alpha \Big(\frac{1-\beta_1}{\sqrt{1-\beta_2}}\Big)$ and $\frac{{\delta}^2}{1-\beta_2}$ of \cref{eq:1-dec28-new3}, respectively.

\section{Difficulty with Uniform Random Sampling for Non-Diagonal Quadratics}
\label{unif-hard}
Here we will discuss the difficulty of deriving a result if $\bm{x}_0$ is sampled uniformly at random like the diagonal case (\Cref{thm3-diag}). We need a high-probability lower bound for $\min_{i \in [d]} |\widehat{g}_0^{(i)}|$, where recall that $\widehat{\bm{g}}_0 = \bm{T} (\bm{x}_0 - \bm{x}_{*})$ and $\bm{T} = \bm{D}^{-1} \bm{Q}$. Suppose $\|\bm{x}_{*}\|_\infty \leq B$ and ${x}_0^{(i)} \underset{\textup{iid}}{\sim} \textup{Unif}[-2B, 2B]$ $\forall$ $i \in [d]$. Let $\mathcal{P}(\bm{T}) := \{\bm{y} \text{ } | \text{ } \bm{y} = \bm{T} (\bm{x} - \bm{x}_{*}) \text{ s.t. } \|\bm{x}\|_\infty \leq 2B\}$. The density function for any $\bm{y} \in \mathcal{P}(\bm{T})$ is given by $p(\bm{y}) = \frac{1}{(4 B)^d\text{det}(\bm{T})}$. So:
\begin{equation}
    \mathbb{P}\Big(\min_{i \in [d]} |\widehat{g}_0^{(i)}| > w\Big) = \frac{1}{(4 B)^d\text{det}(\bm{T})} \int_{\bm{y} \in \mathcal{P}(\bm{T})} \Pi_{i \in [d]}\mathds{1}\Big(|y^{(i)}| \geq w\Big) \bm{d y}.
\end{equation}
The above volumetric integral seems hard to evaluate in general. This is the main difficulty in deriving a result if $\bm{x}_0$ is sampled uniformly at random. However, we can indeed get a result if $\bm{Q}$ is close to diagonal and we discuss this next.

\subsection{Approximately Diagonal Case}
Note that $\widehat{\bm{g}}_0 = (\bm{x}_0 - \bm{x}_{*}) + \big(\bm{T} - \bm{\text{{I}}}\big) (\bm{x}_0 - \bm{x}_{*})$. Let $\bm{z}_0 = \bm{x}_0 - \bm{x}_{*}$ and  $\bm{T}' = \bm{T} - \bm{\text{{I}}}$. Since $\bm{T} = \bm{D}^{-1} \bm{Q}$, we have that $T'[i,j] = \frac{Q[i,j]}{q_i}$ for $j \neq i$ and $T'[i,i] = 0$. Let $\nu = \max_{i \in [d]} \sum_{j \neq i} T'[i,j]$. When $\nu$ is small enough\footnote{Also see \Cref{diag-dom}; this is the same as saying $\bm{Q}$ is $\nu$-diagonally-dominant when $\nu \leq 1$.}:
\begin{equation}
    |\widehat{{g}}_0^{(i)}| \geq |z_0^{(i)}| - \nu \|\bm{z}_0\|_\infty \text{ } \forall \text{ } i \in [d].
\end{equation}
Since $\|\bm{x}_0\|_\infty \leq 2B$ and $\|\bm{x}_{*}\|_\infty \leq B$, we have $\|\bm{z}_0\|_\infty \leq 3 B$. Using this above, we get:
\begin{equation}
    |\widehat{{g}}_0^{(i)}| \geq |z_0^{(i)}| - \nu (3 B) \text{ } \forall \text{ } i \in [d].
\end{equation}
From \Cref{lem5-jan3}, we have that $\min_{i \in [d]} |{z}_0^{(i)}| > \Big(\frac{2 \log ({1}/{p})}{1 + \frac{\log ({1}/{p})}{d}}\Big)\frac{B}{d}$ with a probability of at least $p$. So when $\nu \leq \frac{1}{3d}$:
$$\min_{i \in [d]}|\widehat{{g}}_0^{(i)}| > \Bigg(\frac{2 \log ({1}/{p})}{1 + \frac{\log ({1}/{p})}{d}} - 1\Bigg)\frac{B}{d},$$
with a probability of at least $p$. We can use the bound above to obtain a result similar to \Cref{thm3} (with $\theta = 1$). 

\section{Detailed Version and Proof of Theorem~\ref{thm3}}
\label{pf-1}
We need to introduce a couple of quantities first. Recall that $\bar{\bm{Q}} = \bm{D}^{-1/2} \bm{U} \bm{\Lambda}^{1/2}$. Let:
\begin{equation}
    \sigma_\text{min}(\bar{\bm{Q}}) = \sqrt{\mu_1} \text{ and } \sigma_\text{max}(\bar{\bm{Q}}) = \sqrt{\mu_2}.
\end{equation}
Since $\text{cond}(\bar{\bm{Q}}) = \sqrt{\overline{\kappa}}$, we have $\overline{\kappa} = \big(\frac{\mu_2}{\mu_1}\big)$. We are now ready to state the detailed version of \Cref{thm3}.

\begin{theorem}[\textbf{Detailed Version of \Cref{thm3}}]
    \label{thm3-det}
    Suppose our initialization $\bm{x}_0$ satisfies \Cref{dist}. Fix some $p \in [\frac{1}{e}, 1)$. Choose 
    \begin{equation*}
    \alpha = \mathcal{O}\Bigg(\frac{\widehat{\kappa} \sqrt{d} B}{\sqrt{\mu_1 \mu_2}} \Big({\max\Big(\frac{\widetilde{p}^2}{\widehat{\kappa}^2 d^3}, \frac{1}{\kappa_\textup{diag}^2}\Big)} \Big)^{1/4} \log^{-1/2}\Big(\frac{\sqrt{d} B}{\rho_1 \varepsilon}\Big)\Bigg) \text{ where } \widetilde{p} = \Bigg(\frac{\log ({1}/{p})}{\zeta\big(1 + \frac{\log (1/p)}{d}\big)}\Bigg)^{1/\theta}, 
    \end{equation*}
    \begin{equation*}
    \beta_2 = 1 - \frac{\alpha^2 \mu_1^2}{72 \widehat{\kappa}^2 d B^2}, \text{ } \delta_k = \frac{\beta_2^{{k}/{2}}}{\sqrt{2(1-\beta_2)}} \text{ and }  \phi = \widehat{\kappa} \sqrt{d} B q_\textup{min}.
    \end{equation*}
    Then with a probability of at least $p$ over the randomness of $\bm{x}_0$,  $f(\bm{x}_{{K}^{*}}) < \frac{\varepsilon^2}{2}$ in 
    \begin{equation*}
        K^{*} = \mathcal{O}\Bigg(\kappa_\textup{Adam} \log\Big(\frac{\sqrt{d} B}{\rho_1 \varepsilon}\Big){\log\Big(\log\Big(\frac{\|\bm{\Lambda}^{1/2} {\bm{U}}^\top (\bm{x}_0 - \bm{x}_{*})\|_2}{\varepsilon}\Big)\Big)}\Bigg)
    \end{equation*}
    iterations of {Adam}, where $\kappa_\textup{Adam} := \overline{\kappa} \min \Big(\Big(\frac{\widehat{\kappa} d^{3/2}}{\log^{1/\theta} ({1}/{p})}\Big) \zeta^{1/\theta} \big(1 + \frac{\log (1/p)}{d}\big)^{1/\theta}, \kappa_\textup{diag}\Big)$. 
\end{theorem}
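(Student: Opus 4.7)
The plan is to cast Adam's update in a symmetrized pseudo-linear form and then exploit this form to produce a fast geometric contraction on the transformed iterate. Writing $\bar{\bm{g}}_k := \bm{\Lambda}^{1/2}\bm{U}^\top(\bm{x}_k - \bm{x}_*)$, a direct manipulation of Adam's update (using $\bm{V}_k = \bm{D}^{-1/2}\widehat{\bm{V}}_k\bm{D}^{-1/2}$) yields the recursion $\bar{\bm{g}}_{k+1} = (\bm{I} - \alpha \bar{\bm{Q}}^\top \widehat{\bm{V}}_k \bar{\bm{Q}})\bar{\bm{g}}_k$ in which the contraction operator is symmetric positive-definite. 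The payoff from this particular symmetrization is that $\sigma_{\min}$ and $\sigma_{\max}$ of $\bm{B}_k := \bar{\bm{Q}}^\top\widehat{\bm{V}}_k\bar{\bm{Q}}$ get bounded in terms of $\mu_1, \mu_2$ (i.e., $\overline{\kappa}$) rather than $\kappa$, saving a factor of roughly $\sqrt{\kappa}$ at the end.

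Inside the "continuous descent" window $k \leq \tilde{K}$ defined by $\min_j \widehat{w}_k^{(j)} \geq (\mu_2\alpha)^2$, the operator $\bm{A}_k$ is PSD with $\sigma_{\max}(\bm{A}_k) \leq 1 - \mu_1\alpha/\sqrt{\max_j \widehat{w}_k^{(j)}}$, so convergence reduces to controlling $\max_j \widehat{w}_k^{(j)}$. I would execute a two-stage analysis. In Stage 1, I would use the trivial monotonicity $\|\bar{\bm{g}}_k\|_2 \leq \|\bar{\bm{g}}_0\|_2$ together with $\|\widehat{\bm{g}}_l\|_2 \leq \widehat{\kappa}\|\widehat{\bm{g}}_0\|_2$ (since $\widehat{\bm{g}}_l = \widehat{\bm{Q}}\bar{\bm{g}}_l$) to produce a $k$-uniform bound $\widehat{v}_k \leq \widehat{\kappa}^2\|\widehat{\bm{g}}_0\|_2^2/(1-\beta_2)$ and deduce a crude decay $\|\bar{\bm{g}}_k\|_2^2 \leq \gamma^k \|\bar{\bm{g}}_0\|_2^2$ with $\gamma = 1 - \Theta(\alpha^2)$. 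In Stage 2, I would recycle this bound inside $\widehat{v}_k = \sum_l \beta_2^{k-l}\|\widehat{\bm{g}}_l\|_2^2$, evaluate the geometric sum, and crucially exploit the precise tuning $\beta_2 - \gamma = 1 - \beta_2$ (forced by the choice $\beta_2 = 1 - \alpha^2\mu_1^2/(72\widehat{\kappa}^2 dB^2)$) so that $\widehat{v}_k \lesssim \beta_2^{k+1}\widehat{\kappa}^2\|\widehat{\bm{g}}_0\|_2^2/(1-\beta_2)$. The decaying $\delta_k = \beta_2^{k/2}/\sqrt{2(1-\beta_2)}$ is then chosen so that $\max(\|\widehat{\bm{g}}_0\|_\infty^2, \phi^2/q_{\min}^2)\delta_k^2$ matches this decay, which sharpens the per-iteration contraction to the doubly-exponential $\|\bar{\bm{g}}_{k+1}\|_2 \leq \exp(-c\beta_2^{-(k+1)/2})\|\bar{\bm{g}}_k\|_2$ and produces the $\log\log(1/\varepsilon)$ factor in the final count.

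To finish, I would convert this contraction into $K^* = \mathcal{O}(\log\log(\|\bar{\bm{g}}_0\|_2/\varepsilon)/\log(1/\beta_2))$, use $\log(1/\beta_2) \gtrsim \alpha^2\mu_1^2/(\widehat{\kappa}^2 dB^2)$, and then maximize $\alpha$ subject to the self-consistency constraint $K^* \leq \tilde{K}$. The expression for $\tilde{K}$ depends on $\min_j|\widehat{g}_0^{(j)}|$, which I would bound below with probability at least $p$ using \Cref{dist}, yielding the $\widetilde{p} = (\log(1/p)/(\zeta(1 + \log(1/p)/d)))^{1/\theta}$ factor of order $B/d$. Setting $\phi = \widehat{\kappa}\sqrt{d}B q_{\min}$ balances the two branches of the $\max$'s: when $d^{3/2}\widehat{\kappa}/\widetilde{p}$ is small the first branch governs and contributes the $d^{3/2}\widehat{\kappa}$ factor inside $\kappa_{\textup{Adam}}$, otherwise the $\phi$-branch contributes the $\kappa_{\textup{diag}}$ factor, giving the advertised minimum.

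The main obstacle is the tight coupling between the four hyperparameter choices $\alpha$, $\beta_2$, $\delta_k$, $\phi$: each appears inside the others, and the claimed improvement (the $\overline{\kappa}$ prefactor and the dimension exponent $3/2$) comes from orchestrating three things simultaneously, namely (i) the symmetrization that buys the $\mu_1, \mu_2$ dependence rather than $\lambda_{\min}, \lambda_{\max}$ of $\bm{Q}$, (ii) propagating the $1/d$ scaling of $\min_j|\widehat{g}_0^{(j)}|$ from \Cref{dist} into $\alpha$ without losing powers of $\widehat{\kappa}$ or $\sqrt{d}$, and (iii) the algebraic identity $\beta_2 - \gamma = 1 - \beta_2$ that makes the Stage-2 geometric sum collapse cleanly against $\delta_k^2$. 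Any slack in (iii) inflates the final bound by a factor of $\widehat{\kappa}$ or $\sqrt{d}$, which is why the precise value of $\delta_k$ — and the coordinate-dependent $\delta$ in \cref{eq:3-dec28} that controls $\phi_i$ rather than a global $\phi$ — is doing the heavy lifting.
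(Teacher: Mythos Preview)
Your proposal is correct and follows essentially the same approach as the paper's proof: the symmetrized pseudo-linear recursion $\bar{\bm{g}}_{k+1} = (\bm{I} - \alpha\bar{\bm{Q}}^\top\widehat{\bm{V}}_k\bar{\bm{Q}})\bar{\bm{g}}_k$, the two-stage bounding of $\widehat{v}_k$ with the key identity $\beta_2 - \gamma = 1 - \beta_2$, the doubly-exponential contraction via the matched decay of $\delta_k$, the lower bound on $\tilde{K}$, the constraint $K^* \leq \tilde{K}$ to fix $\alpha$, the high-probability bound on $\min_j|\widehat{g}_0^{(j)}|$ from \Cref{dist}, and the final balancing via $\phi = \widehat{\kappa}\sqrt{d}Bq_{\min}$ all match the paper's argument step for step.
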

For setting the hyper-parameters listed above, we need to be privy to $\mu_1$, $\mu_2$, $\rho_1$, $\rho_2$ (recall $\widehat{\kappa} = \frac{\rho_2}{\rho_1}$), $q_\textup{max}$ and $q_\textup{min}$ (recall $\kappa_\textup{diag} = \frac{q_\textup{max}}{q_\textup{min}}$). It is true that we require more knowledge about the problem compared to GD, but we would like to emphasize that \textit{the main point of the results in this paper is to quantify how well Adam can do with a bit more information about the problem}. In this sense, our results are a significant departure from most prior results that do not quantify the gains of Adam over GD in terms of dependence on the condition number of the Hessian. Moreover, the dependence of GD cannot be improved even if we are privy to these quantities.
\vspace{0.2 cm}

\begin{proof}
Adam's update rule in \cref{eq:3-dec28} with $\beta_1 = 0$ becomes:
\begin{equation}
    \label{eq:58-oct15}
    x^{(i)}_{k+1} = x^{(i)}_{k} - \frac{\alpha {g}_k^{(i)}}{\sqrt{{v}_{k}^{(i)} + \max\Big(\big({g}_0^{(i)}\big)^2, \phi^2\Big)\delta_{k}^{2}}},
\end{equation} 
where ${v}_{k}^{(i)} := {\sum_{l=0}^{k} \beta_2^{k-l} \big({g}_l^{(i)}\big)^2}$ and ${g}_k^{(i)}$ is the $i^\text{th}$ coordinate of $\bm{g}_k = \nabla f(\bm{x}_k) = \bm{Q} (\bm{x}_k - \bm{x}_{*})$. Let ${\bm{V}}_k$ be a diagonal matrix whose $i^{\text{th}}$ diagonal element is $\frac{1}{\sqrt{{v}_{k}^{(i)} + \max(({g}_0^{(i)})^2, \phi^2) \delta_k^{2}}}$. Then, we can rewrite \cref{eq:58-oct15} in matrix-vector notation as:
\begin{equation}
    \label{eq:10-nov13}
    \bm{x}_{k+1} = \bm{x}_k - \alpha {\bm{V}}_k {\bm{g}}_k = \bm{x}_k - \alpha {\bm{V}}_k {\bm{Q}} (\bm{x}_k - \bm{x}_{*}) = \bm{x}_k - \alpha {\bm{V}}_k {\bm{U}} \bm{\Lambda} {\bm{U}}^\top (\bm{x}_k - \bm{x}_{*}).
\end{equation}
Subtracting $\bm{x}_{*}$ from both sides above and then pre-multiplying by $\bm{\Lambda}^{1/2} \bm{U}^\top$ on both sides, we get:
\begin{equation}
    \bm{\Lambda}^{1/2} \bm{U}^\top (\bm{x}_{k+1} - \bm{x}_{*}) = \bm{\Lambda}^{1/2} \bm{U}^\top (\bm{x}_k - \bm{x}_{*}) - \alpha \big(\bm{\Lambda}^{1/2} \bm{U}^\top {\bm{V}}_k {\bm{U}} \bm{\Lambda}^{1/2}\big) \bm{\Lambda}^{1/2} {\bm{U}}^\top (\bm{x}_k - \bm{x}_{*}).
\end{equation}
Let us define $\bar{\bm{g}}_k := \bm{\Lambda}^{1/2} {\bm{U}}^\top (\bm{x}_k - \bm{x}_{*})$; note that $\bm{g}_k = \bm{U} \bm{\Lambda}^{1/2} \bar{\bm{g}}_k$. Then we can re-write the above equation as:
\begin{equation}
    \label{eq:11-nov14}
    \bar{\bm{g}}_{k+1} = \Big(\bm{\text{I}} - \alpha \bm{\Lambda}^{1/2} \bm{U}^\top {\bm{V}}_k {\bm{U}} \bm{\Lambda}^{1/2} \Big)\bar{\bm{g}}_k.
\end{equation}
Let us also define $\widehat{\bm{g}}_k := \bm{D}^{-1} \bm{g}_k = (\bm{D}^{-1} \bm{U} \bm{\Lambda}^{1/2}) \bar{\bm{g}}_k = \widehat{\bm{Q}} \bar{\bm{g}}_k$. Note that the $i^\text{th}$ coordinate of $\widehat{\bm{g}}_k$ denoted by $\widehat{{g}}_k^{(i)}$ is simply $\frac{{g}_k^{(i)}}{{q_i}}$, where recall that $q_i = Q[i,i]$. Let 
\begin{equation}
    \label{eq:7-nov26}
    \widehat{v}_{k}^{(i)} := {\sum_{l=0}^{k} \beta_2^{k-l} \big(\widehat{g}_l^{(i)}\big)^2} = \frac{{v}_{k}^{(i)}}{q_i^2}.
\end{equation}
Also define $\widehat{\bm{V}}_k$ to be a diagonal matrix whose $i^{\text{th}}$ diagonal element is $\frac{1}{\sqrt{\widehat{v}_{k}^{(i)} + \max((\widehat{g}_0^{(i)})^2, {\phi_i^2})\delta_k^{2}}}$, where $\phi_i := \frac{\phi}{q_i}$. Note that ${\bm{V}}_k = \bm{D}^{-1/2} \widehat{\bm{V}}_k \bm{D}^{-1/2}$. Plugging this into \cref{eq:11-nov14}, we get:
\begin{equation}
    \label{eq:12-nov14}
    \bar{\bm{g}}_{k+1} = \Big(\bm{\text{I}} - \alpha \bm{\Lambda}^{1/2} \bm{U}^\top \bm{D}^{-1/2} \widehat{\bm{V}}_k \bm{D}^{-1/2} {\bm{U}} \bm{\Lambda}^{1/2} \Big)\bar{\bm{g}}_k.
\end{equation}
{Recalling that $\bar{\bm{Q}} = \bm{D}^{-1/2} \bm{U} \bm{\Lambda}^{1/2}$}, we can rewrite the above equation as:
\begin{equation}
    \label{eq:12-nov15}
    \bar{\bm{g}}_{k+1} = \underbrace{\Big(\bm{\text{I}} - \alpha \bar{\bm{Q}}^\top \widehat{\bm{V}}_k \bar{\bm{Q}} \Big)}_{:= \bm{A}_k}\bar{\bm{g}}_k.
\end{equation}
Since $\widehat{\bm{V}}_k$ is a diagonal matrix with positive diagonal elements, $\bar{\bm{Q}}^\top \widehat{\bm{V}}_k \bar{\bm{Q}}$ is symmetric positive-definite (PD). Recall that $\sigma_\text{min}(\bar{\bm{Q}}) = \sigma_\text{min}(\bar{\bm{Q}}^\top) = \sqrt{\mu_1}$ and $\sigma_\text{max}(\bar{\bm{Q}}) = \sigma_\text{max}(\bar{\bm{Q}}^\top) = \sqrt{\mu_2}$.
Thus, we have that: 
\small
\begin{equation*}
    \sigma_{\text{max}}(\bar{\bm{Q}}^\top \widehat{\bm{V}}_k \bar{\bm{Q}}) \leq \sigma_{\text{max}}(\bar{\bm{Q}}^\top) \sigma_{\text{max}}(\widehat{\bm{V}}_k) \sigma_{\text{max}}(\bar{\bm{Q}}) \leq \frac{\mu_2}{\sqrt{\min_{j \in [d]} \big(\widehat{v}_{k}^{(j)} + \max((\widehat{g}_0^{(j)})^2, \phi_j^2) \delta_k^{2}\big)}}
\end{equation*}
\normalsize
and
\small
\begin{equation*}
    \sigma_{\text{min}}(\bar{\bm{Q}}^\top \widehat{\bm{V}}_k \bar{\bm{Q}}) \geq \sigma_{\text{min}}(\bar{\bm{Q}}^\top) \sigma_{\text{min}}(\widehat{\bm{V}}_k) \sigma_{\text{min}}(\bar{\bm{Q}})\geq \frac{\mu_1}{\sqrt{\max_{j \in [d]} \big(\widehat{v}_{k}^{(j)} + \max((\widehat{g}_0^{(j)})^2, \phi_j^2) \delta_k^{2}\big)}}.
\end{equation*}
\normalsize
Now at least until ${\min_{j \in [d]} \big(\widehat{v}_{k}^{(j)} + \max\big((\widehat{g}_0^{(j)})^2, \phi_j^2\big) \delta_k^{2}\big)} \geq (\mu_2 \alpha)^2$ starting from the first iteration, $\bm{A}_k = \bm{\text{I}} - \alpha \bar{\bm{Q}}^\top \widehat{\bm{V}}_k \bar{\bm{Q}}$ is positive semi-definite and we have:
\begin{equation}
    \label{eq:68-nov8}
    \sigma_{\text{max}}(\bm{A}_k) \leq 1 - \frac{\mu_1 \alpha}{\sqrt{\max_{j \in [d]} \big(\widehat{v}_{k}^{(j)} + \max\big((\widehat{g}_0^{(j)})^2, \phi_j^2\big) \delta_k^{2}\big)}}.
\end{equation}
So at least until ${\min_{j \in [d]} \big(\widehat{v}_{k}^{(j)} + \max\big((\widehat{g}_0^{(j)})^2, \phi_j^2\big) \delta_k^{2}\big)} \geq (\mu_2 \alpha)^2$ holds, \textit{descent is ensured in each iteration or there is \enquote{continuous descent}} and we have:
\begin{flalign}
    \nonumber
    \|\bar{\bm{g}}_{k+1}\|_2 & \leq \sigma_{\text{max}}(\bm{A}_k) \|\bar{\bm{g}}_k\|_2 
    \\ 
    \label{eq:78-sep30}
    & \leq 
    \Bigg(1 - \frac{\mu_1 \alpha}{\sqrt{\max_{j \in [d]} \big(\widehat{v}_{k}^{(j)} + \max\big((\widehat{g}_0^{(j)})^2, \phi_j^2\big) \delta_k^{2}\big)}}\Bigg) \|\bar{\bm{g}}_k\|_2.
\end{flalign}
Suppose we have continuous descent until $\tilde{K}$ iterations, i.e., $${\min_{j \in [d]} \big(\widehat{v}_{k}^{(j)} + \max\big((\widehat{g}_0^{(j)})^2, \phi_j^2\big) \delta_k^{2}\big)} \geq (\mu_2 \alpha)^2 \text{ } \forall \text{ } k \in \{0,\ldots,\tilde{K}\}.$$ 
Choose $\alpha \leq \frac{\max(3 \widehat{\kappa} \sqrt{d} B, \phi/q_\textup{min})}{\sqrt{2} \mu_1}$, $\beta_2 = 1 - \frac{\alpha^2 \mu_1^2}{8 \max(9 \widehat{\kappa}^2 d B^2, \phi^2/q_\textup{min}^2)}$ and $\delta_k = \frac{1}{\sqrt{2(1-\beta_2)}} \beta_2^{\frac{k}{2}}$. From \Cref{lem1-oct1}, we have continuous descent until at least
\begin{flalign}
    \nonumber
    \tilde{K} & = \frac{\log\Big(\frac{\min_{j \in [d]} (\widehat{g}_0^{(j)})^2}{(\mu_2 \alpha)^2} + \frac{\max(\min_{j \in [d]} (\widehat{g}_0^{(j)})^2, {\phi^2}/{q_\textup{max}^2})}{{2(1-\beta_2)(\mu_2 \alpha)^2}}\Big)}{\log \frac{1}{\beta_2}} 
    \\
    \label{eq:31-jan16}
    & = \frac{\log\Big(\frac{\min_{j \in [d]} (\widehat{g}_0^{(j)})^2}{\mu_2^2 \alpha^2} + \frac{4 \max(\min_{j \in [d]} (\widehat{g}_0^{(j)})^2, {\phi^2}/{q_\textup{max}^2}) \max(9 \widehat{\kappa}^2 d B^2, \phi^2/q_\textup{min}^2)}{\mu_2^2 \mu_1^2 \alpha^4}\Big)}{\log \frac{1}{\beta_2}}
\end{flalign}
iterations. The last equality in \cref{eq:31-jan16} follows by plugging in the value of $\beta_2 = 1 - \frac{\alpha^2 \mu_1^2}{8 \max(9 \widehat{\kappa}^2 d B^2, \phi^2/q_\textup{min}^2)}$. Also from \Cref{lem1-sep30}, we have for any $k \in \{1,\ldots,\tilde{K}\}$:
\begin{equation}
    \|\bar{\bm{g}}_{k}\|_2 \leq \exp\Big(-2\big(\beta_2^{-{k}/{2}} - 1\big)\Big)\|\bar{\bm{g}}_0\|_2.
\end{equation}
Now in order to have $\|\bar{\bm{g}}_{K^{*}}\|_2 \leq \varepsilon$, we can choose: 
\begin{equation}
    \label{eq:33-jan16}
    K^{*} = \frac{\log\Big(\frac{1}{4}\log^2\Big(\frac{e^{2}\|\bar{\bm{g}}_0\|_2}{\varepsilon}\Big)\Big)}{\log \frac{1}{\beta_2}}.
\end{equation}
Now in order for this result to be applicable, let us ensure $K^{*} \leq \tilde{K}$; for this, we must have:
\begin{equation}
    \frac{1}{4}\log^2\Big(\frac{e^{2}\|\bar{\bm{g}}_0\|_2}{\varepsilon}\Big) \leq \frac{\min_{j \in [d]} (\widehat{g}_0^{(j)})^2}{\mu_2^2 \alpha^2} + \frac{4 \max(\min_{j \in [d]} (\widehat{g}_0^{(j)})^2, {\phi^2}/q_\textup{max}^2) \max(9 \widehat{\kappa}^2 d B^2, \phi^2/q_\textup{min}^2)}{\mu_2^2 \mu_1^2 \alpha^4}.
\end{equation}
The above can be ensured by simply having:
\begin{equation}
    \frac{1}{4}\log^2\Big(\frac{e^{2}\|\bar{\bm{g}}_0\|_2}{\varepsilon}\Big) \leq \frac{4 \max(\min_{j \in [d]} (\widehat{g}_0^{(j)})^2, {\phi^2}/q_\textup{max}^2) \max(9 \widehat{\kappa}^2 d B^2, \phi^2/q_\textup{min}^2)}{\mu_2^2 \mu_1^2 \alpha^4},
\end{equation}
and this gives us:
\begin{equation}
    \label{eq:15-dec10-old}
    \alpha \leq \mathcal{O}\Bigg(\Bigg(\frac{\mu_1^2 \max(\min_{j \in [d]} (\widehat{g}_0^{(j)})^2, {\phi^2}/q_\textup{max}^2)}{\mu_2^2 \max( \widehat{\kappa}^2 d B^2, \phi^2/q_\textup{min}^2)} \Bigg)^{1/4}  \frac{\max\big(\widehat{\kappa} \sqrt{d} B, \phi/q_\textup{min}\big)}{\mu_1}\log^{-1/2}\Big(\frac{\|\bar{\bm{g}}_0\|_2}{\varepsilon}\Big)\Bigg).
\end{equation}
Earlier, we had imposed $\alpha \leq \frac{\max(3 \widehat{\kappa} \sqrt{d} B, \phi/q_\textup{min})}{\sqrt{2} \mu_1}$; note that the above constraint satisfies this requirement. This is because $\mu_1 \leq \mu_2$ and $\max(\min_{j \in [d]} (\widehat{g}_0^{(j)})^2, {\phi^2}/q_\textup{max}^2) \leq \max(\widehat{\kappa}^2 d \|\widehat{\bm{g}}_0\|_\infty^2, \phi^2/q_\textup{min}^2) \leq \mathcal{O}\big(\max(\widehat{\kappa}^2 d B^2, \phi^2/q_\textup{min}^2)\big)$, where the last step follows from the fact that $\|\widehat{\bm{g}}_0\|_\infty \leq 3 B$ (from \Cref{dist}). From \Cref{lem3-jan7}, we have $\min_{j \in [d]} |\widehat{g}_0^{(j)}|  > \frac{2B}{d} \Big(\frac{\log ({1}/{p})}{\zeta\big(1 + \frac{\log (1/p)}{d}\big)}\Big)^{1/\theta}$ with a probability of at least $p$. For simplicity of notation, let $\widetilde{p} = \Big(\frac{\log ({1}/{p})}{\zeta\big(1 + \frac{\log (1/p)}{d}\big)}\Big)^{1/\theta}$.
Moreover, since  $\widehat{\bm{g}}_0 = \widehat{\bm{Q}} \bar{\bm{g}}_0$ and $\sigma_\text{min}(\widehat{\bm{Q}}) = {\rho_1}$, $\|\bar{\bm{g}}_0\|_2 \leq \frac{\|\widehat{\bm{g}}_0\|_2}{\rho_1} \leq \frac{\sqrt{d} \|\widehat{\bm{g}}_0\|_\infty}{\rho_1} \leq \mathcal{O}(\frac{\sqrt{d} B}{\rho_1})$ (where the last step follows because  $\|\widehat{\bm{g}}_0\|_\infty \leq 3 B$). Thus, with a probability of at least $p$, we can pick:
\begin{equation}
    \label{eq:15-dec10}
    \alpha = \mathcal{O}\Bigg(\frac{1}{\sqrt{\mu_1 \mu_2}} \Bigg({\max\Bigg(\frac{\widetilde{p}^2 B^2}{d^2}, \frac{\phi^2}{q_\textup{max}^2}\Bigg) \max\Bigg(\widehat{\kappa}^2 d B^2, \frac{\phi^2}{q_\textup{min}^2}\Bigg)} \Bigg)^{1/4} \log^{-1/2}\Big(\frac{\sqrt{d} B}{\rho_1 \varepsilon}\Big)\Bigg).
\end{equation}
Next, recalling that $\beta_2 = 1 - \frac{\alpha^2 \mu_1^2}{8 \max(9 \widehat{\kappa}^2 d B^2, \phi^2/q_\textup{min}^2)}$, $\log \frac{1}{\beta_2} \geq \frac{\alpha^2 \mu_1^2}{8 \max(9 \widehat{\kappa}^2 d B^2, \phi^2/q_\textup{min}^2)}$ using \Cref{log-fact}. Plugging this in the expression of $K^{*}$ (\cref{eq:33-jan16}), we get:
\begin{equation}
    K^{*} = \mathcal{O}\Bigg(\frac{\max(\widehat{\kappa}^2 d B^2, \phi^2/q_\textup{min}^2)}{{\alpha}^2 \mu_1^2 } {\log\Big(\log\Big(\frac{\|\bar{\bm{g}}_0\|_2}{\varepsilon}\Big)\Big)}\Bigg).
\end{equation}
Plugging in the value of $\alpha$ from \cref{eq:15-dec10} while recalling that $\overline{\kappa} = \frac{\mu_2}{\mu_1}$, we get:
\begin{equation}
    \label{eq:17-dec10-old}
    K^{*} = \mathcal{O}\Bigg(\overline{\kappa}  \Bigg(\frac{\max\big(\widehat{\kappa} \sqrt{d} B, \frac{\phi}{q_\textup{min}}\big)}{\max\big(\frac{\widetilde{p} B}{d}, \frac{\phi}{q_\textup{max}}\big)}\Bigg) \log\Big(\frac{\sqrt{d} B}{\rho_1 \varepsilon}\Big) {\log\Big(\log\Big(\frac{\|\bar{\bm{g}}_0\|_2}{\varepsilon}\Big)\Big)}\Bigg),
\end{equation}
with a probability of at least $p$ over random initialization. 
Next, recall that $\bar{\bm{g}}_{{K}^{*}} := \bm{\Lambda}^{1/2} {\bm{U}}^\top (\bm{x}_{{K}^{*}} - \bm{x}_{*})$. Also, we have that: $$f(\bm{x}_{{K}^{*}}) = \frac{1}{2}(\bm{x}_{{K}^{*}} - \bm{x}_{*})^\top \bm{Q} (\bm{x}_{{K}^{*}} - \bm{x}_{*}) = \frac{1}{2}\|\bar{\bm{g}}_{{K}^{*}}\|_2^2.$$
So $\|\bar{\bm{g}}_{K^{*}}\|_2 \leq \varepsilon$ implies $f(\bm{x}_{{K}^{*}}) \leq \frac{\varepsilon^2}{2}$. So to summarize, we have $f(\bm{x}_{{K}^{*}}) \leq \frac{\varepsilon^2}{2}$ in ${K}^{*}$ iterations with a probability of at least $p$ over random initialization, where ${K}^{*}$ is given by \cref{eq:17-dec10-old}. Finally, the theorem statement follows by choosing $\phi = \widehat{\kappa} \sqrt{d} B q_\textup{min}$ in the analysis above and recalling that $\kappa_\textup{diag} = \frac{q_\textup{max}}{q_\textup{min}}$.
\end{proof}

\begin{lemma}
\label{lem1-sep30}
Suppose \cref{eq:78-sep30} holds, i.e., $$\|\bar{\bm{g}}_{k+1}\|_2 \leq \Bigg(1 - \frac{\mu_1 \alpha}{\sqrt{\max_{j \in [d]} \big(\widehat{v}_{k}^{(j)} + \max\big((\widehat{g}_0^{(j)})^2, \phi_j^2\big) \delta_k^{2}\big)}}\Bigg) \|\bar{\bm{g}}_k\|_2,$$
for $k = 0$ through to $k = \tilde{K}$, where ${\min_{j \in [d]} \big(\widehat{v}_{k}^{(j)} + \max\big((\widehat{g}_0^{(j)})^2, \phi_j^2\big) \delta_k^{2}\big)} \geq (\mu_2 \alpha)^2$ $\forall$ $k \in \{0,\ldots,\tilde{K}\}$. 
Suppose we set $\alpha \leq \frac{\max(3 \widehat{\kappa} \sqrt{d} B, \phi/q_\textup{min})}{\sqrt{2} \mu_1}$, $\beta_2 = 1 - \frac{\alpha^2 \mu_1^2}{8 \max(9 \widehat{\kappa}^2 d B^2, \phi^2/q_\textup{min}^2)}$ and $\delta_k = \frac{1}{\sqrt{2(1-\beta_2)}} \beta_2^{\frac{k}{2}}$ for $k \geq 0$. Then for any $k \leq \tilde{K}$, we have:
\begin{equation*}
    \|\bar{\bm{g}}_{k+1}\|_2 \leq \exp\Bigg(-2\Big(\beta_2^{-(k+1)/2} - 1\Big)\Bigg)\|\bar{\bm{g}}_0\|_2.
\end{equation*}
\end{lemma}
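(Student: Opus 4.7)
The plan is to follow the two-stage strategy sketched in Remark~\ref{rmk-2-stage} of the main text: first extract a crude geometric bound on $\|\bar{\bm{g}}_k\|_2$, then bootstrap it into the sharper exponential bound. The starting point is the relaxation of \cref{eq:78-sep30} to a scalar recursion of the form
\begin{equation*}
\|\bar{\bm{g}}_{k+1}\|_2 \leq \Bigl(1 - \tfrac{\mu_1 \alpha}{\sqrt{\widehat{v}_k + \max(\|\widehat{\bm{g}}_0\|_\infty^2,\, \phi^2/q_{\min}^2)\,\delta_k^2}}\Bigr)\|\bar{\bm{g}}_k\|_2,
\end{equation*}
where $\widehat{v}_k := \sum_{j} \widehat{v}_k^{(j)}$, obtained by bounding the maximum over $j$ by the sum and by $\max(\widehat{g}_0^{(j)},\phi_j)^2 \leq \max(\|\widehat{\bm{g}}_0\|_\infty^2, \phi^2/q_{\min}^2)$. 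All the subsequent work is to control the denominator uniformly in $k$ up to stage~$\tilde K$.

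For the first stage I would note that descent in $\bar{\bm{g}}$ implies $\|\bar{\bm{g}}_\ell\|_2 \leq \|\bar{\bm{g}}_0\|_2$ for $\ell \leq k$, and since $\widehat{\bm{g}}_\ell = \widehat{\bm{Q}}\,\bar{\bm{g}}_\ell$ with $\sigma_{\max}(\widehat{\bm{Q}})/\sigma_{\min}(\widehat{\bm{Q}}) = \widehat{\kappa}$, this yields $\|\widehat{\bm{g}}_\ell\|_2^2 \leq \widehat{\kappa}^2 \|\widehat{\bm{g}}_0\|_2^2$. Summing the geometric series $\widehat{v}_k = \sum_{\ell=0}^k \beta_2^{k-\ell}\|\widehat{\bm{g}}_\ell\|_2^2$ gives the crude bound $\widehat{v}_k \leq \widehat{\kappa}^2\|\widehat{\bm{g}}_0\|_2^2/(1-\beta_2)$, and combining with the chosen $\delta_k^2 \leq 1/(2(1-\beta_2))$ and $\|\widehat{\bm{g}}_0\|_\infty \leq 3B$ (from Assumption~\ref{dist}) bounds the denominator by $\sqrt{2\max(9\widehat{\kappa}^2 d B^2,\phi^2/q_{\min}^2)/(1-\beta_2)}$. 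Plugging in produces a per-step contraction with factor $\gamma = 1 - \tfrac{\alpha\mu_1\sqrt{1-\beta_2}}{\sqrt{2}\max(3\widehat{\kappa}\sqrt{d}B,\phi/q_{\min})}$, and with the prescribed $\beta_2$ this simplifies to $\gamma = 1 - \tfrac{\alpha^2\mu_1^2}{4\max(9\widehat{\kappa}^2 d B^2,\phi^2/q_{\min}^2)}$, giving $\|\bar{\bm{g}}_k\|_2^2 \leq \gamma^k \|\bar{\bm{g}}_0\|_2^2$ and (via $\widehat{\bm{Q}}$) $\|\widehat{\bm{g}}_k\|_2^2 \leq \widehat{\kappa}^2 \gamma^k \|\widehat{\bm{g}}_0\|_2^2$. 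The key algebraic identity to verify here is $\beta_2 - \gamma = 1 - \beta_2$, since the choice of $\beta_2$ was reverse-engineered for exactly this.

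For the second stage I re-sum $\widehat{v}_k$ with the refined bound: $\widehat{v}_k \leq \widehat{\kappa}^2\|\widehat{\bm{g}}_0\|_2^2 \sum_{\ell=0}^k \beta_2^{k-\ell}\gamma^\ell = \widehat{\kappa}^2\|\widehat{\bm{g}}_0\|_2^2 (\beta_2^{k+1}-\gamma^{k+1})/(\beta_2-\gamma) \leq \widehat{\kappa}^2\|\widehat{\bm{g}}_0\|_2^2 \beta_2^{k+1}/(1-\beta_2)$, which crucially tracks the decay of $\delta_k^2 = \beta_2^k/(2(1-\beta_2))$. Combined with Assumption~\ref{dist} this yields the denominator bound in \cref{eq:20-feb4}, and inserting gives the per-step recursion
\begin{equation*}
\|\bar{\bm{g}}_{k+1}\|_2 \leq \Bigl(1 - \tfrac{\alpha\mu_1\sqrt{1-\beta_2}\,\beta_2^{-(k+1)/2}}{\sqrt{2}\max(3\widehat{\kappa}\sqrt{d}B,\phi/q_{\min})}\Bigr)\|\bar{\bm{g}}_k\|_2.
\end{equation*}
Finally I would iterate this recursion, apply $1-z \leq e^{-z}$, and evaluate the telescoping exponent $\sum_{j=0}^{k} \beta_2^{-(j+1)/2}$ as a geometric series whose ratio $\beta_2^{-1/2}$ and initial term combine, after using the identity $\sqrt{1-\beta_2}/(1-\sqrt{\beta_2}) = 1 + \sqrt{\beta_2}$ and the prescribed $\beta_2$, to yield a prefactor of exactly $2$ times $(\beta_2^{-(k+1)/2} - 1)$, producing the claimed bound. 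The main obstacle, and the only nontrivial bookkeeping step, is tracking these constants so that the final geometric-series evaluation collapses cleanly to $2(\beta_2^{-(k+1)/2}-1)$; the two-stage structure itself is mechanical once the identity $\beta_2-\gamma = 1-\beta_2$ is in hand.
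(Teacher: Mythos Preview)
Your two-stage plan is exactly the paper's proof: relax \cref{eq:78-sep30} to the scalar recursion, use descent plus $\widehat{\bm g}_\ell=\widehat{\bm Q}\bar{\bm g}_\ell$ to get the crude bound $\|\bar{\bm g}_k\|_2^2\le\gamma^k\|\bar{\bm g}_0\|_2^2$, exploit the engineered relation $\beta_2-\gamma=1-\beta_2$ to re-sum $\widehat v_k$, and then telescope. One small slip in your final bookkeeping: the ``identity'' $\sqrt{1-\beta_2}/(1-\sqrt{\beta_2})=1+\sqrt{\beta_2}$ is false (the left side equals $\sqrt{(1+\sqrt{\beta_2})/(1-\sqrt{\beta_2})}$); the paper instead uses the inequality $1-\sqrt{\beta_2}\le 1-\beta_2$ (equivalently $\sqrt{\beta_2}\ge\beta_2$), which with the prescribed $\beta_2$ collapses the prefactor to exactly~$2$.
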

\begin{proof}
    We employ a two-stage analysis procedure where we first obtain a loose bound for $\|\bar{\bm{g}}_{k+1}\|_2$ in terms of $\|\bar{\bm{g}}_0\|_2$, and then obtain a tighter bound by using this loose bound.
    \\
    \\
    \textbf{First Stage.}
    Let us define $$\widehat{v}_{k} := \sum_{j \in [d]} \widehat{v}_{k}^{(j)} = {\sum_{l=0}^{k} \beta_2^{k-l} \|\widehat{\bm{g}}_l\|_2^2}.$$
    Note that: 
    \begin{flalign}
        \label{eq:20-nov23}
        \max_{j \in [d]} \big(\widehat{v}_{k}^{(j)} + \max\big((\widehat{g}_0^{(j)})^2, \phi_j^2\big) \delta_k^{2}\big) & \leq \sum_{j \in [d]} \widehat{v}_{k}^{(j)} + \max_{j \in [d]} \max\big((\widehat{g}_0^{(j)})^2, \phi_j^2\big) \delta_k^{2} 
        \\
        & \leq \widehat{v}_{k} + \max\Big(\|\widehat{\bm{g}}_0\|_\infty^2, \frac{\phi^2}{q_\text{min}^2}\Big) \delta_k^{2},
    \end{flalign}
    where recall that $q_\text{min} = \min_{i \in [d]} q_i$. Plugging this into \cref{eq:78-sep30}, we get:
    \begin{flalign}
        \label{eq:21-nov23}
        \|\bar{\bm{g}}_{k+1}\|_2 \leq \Bigg(1 - \frac{\mu_1 \alpha}{\sqrt{\widehat{v}_{k} + \max\big(\|\widehat{\bm{g}}_0\|_\infty^2, \phi^2/q_\text{min}^2) {\delta}_k^2}}\Bigg)\|\bar{\bm{g}}_k\|_2.
    \end{flalign}
    We shall be using \cref{eq:21-nov23} henceforth.
    \\
    \\
    From \cref{eq:21-nov23}, we have $\|\bar{\bm{g}}_{0}\|_2 \geq \|\bar{\bm{g}}_{1}\|_2 \geq \ldots \geq \|\bar{\bm{g}}_{\tilde{K}+1}\|_2$. Recall that $\widehat{\bm{g}}_k = \widehat{\bm{Q}} \bar{\bm{g}}_k$, $\sigma_\text{max}(\widehat{\bm{Q}}) = {\rho_2}$, $\sigma_\text{min}(\widehat{\bm{Q}}) = {\rho_1}$ and $\text{cond}(\widehat{\bm{Q}}) = \frac{\rho_2}{\rho_1} = \widehat{\kappa}$. Then, $\|\widehat{\bm{g}}_k\|_2 \leq {\rho_2} \|\bar{\bm{g}}_k\|_2 \leq {\rho_2} \|\bar{\bm{g}}_{0}\|_2 \leq {\rho_2} \big(\frac{\|\widehat{\bm{g}}_{0}\|_2}{\rho_1}\big) = \widehat{\kappa} \|\widehat{\bm{g}}_{0}\|_2$ for all $k \in \{0,\ldots,\tilde{K}+1\}$. So for any $k \leq \tilde{K}$:
    \begin{equation}
    \widehat{v}_k = {\sum_{l=0}^k \beta_2^{k-l} \|\widehat{\bm{g}}_l\|_2^2} \leq {\sum_{l=0}^k \beta_2^{k-l} \widehat{\kappa}^2 \|\widehat{\bm{g}}_0\|_2^2} \leq \frac{\widehat{\kappa}^2 \|\widehat{\bm{g}}_0\|_2^2}{1-\beta_2}.
    \end{equation}
    Also, $\|\widehat{\bm{g}}_0\|_\infty^2 \leq \|\widehat{\bm{g}}_0\|_2^2 \leq \widehat{\kappa}^2 \|\widehat{\bm{g}}_0\|_2^2$. Further, we have chosen the ${\delta}_k$'s so that ${\delta}_k \leq {\delta}_0 \leq \frac{1}{\sqrt{1-\beta_2}}$ for all $k \geq 1$. Then:
    \begin{equation}
        \widehat{v}_k + \max\Big(\|\widehat{\bm{g}}_0\|_\infty^2, \frac{\phi^2}{q_\text{min}^2}\Big) {\delta}_k^2 \leq \frac{2 \max \Big(\widehat{\kappa}^2 \|\widehat{\bm{g}}_0\|_2^2, \frac{\phi^2}{q_\text{min}^2}\Big)}{1-\beta_2} \leq \frac{2 \max \Big(9 \widehat{\kappa}^2 d B^2, \frac{\phi^2}{q_\text{min}^2}\Big)}{1-\beta_2},
    \end{equation}
    {where the last step follows from the fact that $\|\widehat{\bm{g}}_0\|_\infty = \|\bm{D}^{-1} \bm{Q} \bm{x}_0 - \bm{D}^{-1} \bm{Q} \bm{x}_{*}\|_\infty \leq 3 B$ (from \Cref{dist}) and thus, $\|\widehat{\bm{g}}_0\|_2 \leq 3 \sqrt{d} B$}. Plugging this in \cref{eq:21-nov23} gives us:
    \begin{equation}
        \label{eq:43-jan16}
        \|\bar{\bm{g}}_{k+1}\|_2^2 \leq \Bigg(1 - \frac{{\alpha} \mu_1 \sqrt{1-\beta_2}}{\sqrt{2} \max\big(3 \widehat{\kappa} \sqrt{d} B, \frac{\phi}{q_\text{min}}\big)}\Bigg)^2\|\bar{\bm{g}}_{k}\|_2^2 \leq \Bigg(1 - \frac{{\alpha} \mu_1 \sqrt{1-\beta_2}}{\sqrt{2} \max\big(3 \widehat{\kappa} \sqrt{d} B, \frac{\phi}{q_\text{min}}\big)}\Bigg) \|\bar{\bm{g}}_{k}\|_2^2.
    \end{equation}
    Define $\gamma := \Big(1 - \frac{{\alpha} \mu_1 \sqrt{1-\beta_2}}{\sqrt{2} \max(3 \widehat{\kappa} \sqrt{d} B, \phi/q_\text{min})}\Big)$. 
    Unfolding the recursion in \cref{eq:43-jan16} yields:
    \begin{equation}
        \label{eq:86-nov11}
        \|\bar{\bm{g}}_{k+1}\|_2^2 \leq \gamma^{k+1}\|\bar{\bm{g}}_{0}\|_2^2.
    \end{equation}
    This bound is loose but we shall obtain a tighter bound using this bound. 
    \\
    \\
    \noindent \textbf{Second Stage.} With our choice of $\beta_2 = 1 - \frac{\alpha^2 \mu_1^2}{8 \max(9 \widehat{\kappa}^2 d B^2, \phi^2/q_\text{min}^2)}$, we have $\gamma = 1 - \frac{\alpha^2 \mu_1^2}{4 \max(9 \widehat{\kappa}^2 d B^2, \phi^2/q_\text{min}^2)}$ and $\beta_2 - \gamma = {1 - \beta_2}$. Now for any $k \leq \tilde{K}$, we have:
    \begin{flalign}
    \widehat{v}_k = \sum_{l=0}^{k} \beta_2^{k-l} \|\widehat{\bm{g}}_{l}\|_2^2 \leq \sum_{l=0}^{k} \beta_2^{k-l} \rho_2^2 \|\bar{\bm{g}}_{l}\|_2^2 & \leq \sum_{l=0}^{k} \beta_2^{k-l} \rho_2^2  \gamma^l \|\bar{\bm{g}}_{0}\|_2^2 
    \\
    & = \Bigg(\frac{\beta_2^{k+1} - \gamma^{k+1}}{\beta_2 - \gamma}\Bigg) \rho_2^2 \|\bar{\bm{g}}_{0}\|_2^2
    \\
    \label{eq:45-dec29}
    & \leq \Big(\frac{\beta_2^{k+1}}{1-\beta_2}\Big) \widehat{\kappa}^2 \|\widehat{\bm{g}}_{0}\|_2^2,
    \end{flalign}
    where the last step follows from the fact that $\beta_2 - \gamma = {1 - \beta_2}$ and ${\rho_2} \|\bar{\bm{g}}_{0}\|_2 \leq {\rho_2} \big(\frac{\|\widehat{\bm{g}}_{0}\|_2}{\rho_1}\big) = \widehat{\kappa} \|\widehat{\bm{g}}_{0}\|_2$. 
    With our constraint of $\alpha \leq \frac{\max(3 \widehat{\kappa} \sqrt{d} B, \phi/q_\text{min})}{\sqrt{2} \mu_1}$, we have:
    \begin{equation}
        \label{eq:44-jan16-new}
        \beta_2 \geq \frac{1}{2}.
    \end{equation}
    Thus, we have $\delta_k = \frac{1}{\sqrt{2(1-\beta_2)}}\beta_2^{\frac{k}{2}} \leq \frac{1}{\sqrt{1-\beta_2}}\beta_2^{\frac{k+1}{2}}$ for all $k$. Then, recalling that $\|\widehat{\bm{g}}_0\|_\infty^2 \leq \widehat{\kappa}^2 \|\widehat{\bm{g}}_0\|_2^2 \leq 9 \widehat{\kappa}^2 d B^2$, we get using \cref{eq:45-dec29}:
    \begin{equation}
        \widehat{v}_k + \max\Big(\|\widehat{\bm{g}}_0\|_\infty^2, \frac{\phi^2}{q_\text{min}^2}\Big) {\delta}_k^2 \leq \Big(\frac{2 \beta_2^{k+1}}{1-\beta_2}\Big) \max\Big(9 \widehat{\kappa}^2 d B^2, \frac{\phi^2}{q_\text{min}^2}\Big).
    \end{equation}
    Plugging this in \cref{eq:21-nov23}, we get:
    \begin{flalign}
        \label{eq:30-dec23}
        \|\bar{\bm{g}}_{k+1}\|_2 & \leq \Bigg(1 - \frac{\alpha \mu_1 \sqrt{1-\beta_2} \beta_2^{-{(k+1)/2}}}{\sqrt{2} \max\big(3 \widehat{\kappa} \sqrt{d} B, \phi/q_\text{min}\big)} \Bigg) \|\bar{\bm{g}}_k\|_2.
    \end{flalign}
    \Cref{lem1-oct1} ensures that $\frac{\alpha \mu_1 \sqrt{1-\beta_2} \beta_2^{-{(k+1)/2}}}{\sqrt{2} \max(3 \widehat{\kappa} \sqrt{d} B, \phi/q_\text{min})} \leq 1$ for all $k \leq \tilde{K}$; refer to its statement. Now, using the fact that $\exp(-z) \geq 1-z$ for all $z \in \mathbb{R}$ above, we get:
    \begin{equation}
        \label{eq:32-nov28}
        \|\bar{\bm{g}}_{k+1}\|_2 \leq \exp\Bigg(- \frac{\alpha \mu_1 \sqrt{1-\beta_2} \beta_2^{-{(k+1)/2}}}{\sqrt{2} \max\big(3 \widehat{\kappa} \sqrt{d} B, \phi/q_\text{min}\big)} \Bigg)\|\bar{\bm{g}}_k\|_2.
    \end{equation}
    Unfolding the recursion in \cref{eq:32-nov28} gives us:
    \begin{flalign}
    \|\bar{\bm{g}}_{k+1}\|_2 & \leq \exp\Bigg(-\frac{\alpha \mu_1 \sqrt{1-\beta_2} \beta_2^{-{1/2}}}{\sqrt{2} \max\big(3 \widehat{\kappa} \sqrt{d} B, \phi/q_\text{min}\big)} \sum_{l=0}^{k}\big(\beta_2^{-1/2}\big)^l\Bigg)\|\bar{\bm{g}}_0\|_2
    \\
    & = \exp\Bigg(-\frac{\alpha \mu_1 \sqrt{1-\beta_2}}{\sqrt{2} \max\big(3 \widehat{\kappa} \sqrt{d} B, \phi/q_\text{min}\big)} \Bigg(\frac{\beta_2^{-(k+1)/2} - 1}{1 - \sqrt{\beta_2}}\Bigg)\Bigg)\|\bar{\bm{g}}_0\|_2.
    \end{flalign}
    Recall that $\beta_2 = 1 - \frac{\alpha^2 \mu_1^2}{8 \max(9 \widehat{\kappa}^2 d B^2, \phi^2/q_\text{min}^2)}$. Using the fact that $\sqrt{\beta_2} \geq \beta_2$ above, we get:
    \begin{equation}
    \label{eq:34-dec23-new}
    \|\bar{\bm{g}}_{k+1}\|_2 \leq \exp\Bigg(-2\Big(\beta_2^{-(k+1)/2} - 1\Big)\Bigg)\|\bar{\bm{g}}_0\|_2.
    \end{equation}
\end{proof}

\begin{lemma}
    \label{lem1-oct1}
    In the setting of \Cref{lem1-sep30}, $\min_{j \in [d]} \big(\widehat{v}_{k}^{(j)} + \max\big((\widehat{g}_0^{(j)})^2, \phi_j^2\big) \delta_k^{2}\big) \geq (\mu_2 \alpha)^2$ until at least: $$\tilde{K} = \frac{\log\Big(\frac{\min_{j \in [d]} (\widehat{g}_0^{(j)})^2}{(\mu_2 \alpha)^2} + \frac{\max(\min_{j \in [d]} (\widehat{g}_0^{(j)})^2, {\phi^2}/{q_\textup{max}^2})}{{2(1-\beta_2)(\mu_2 \alpha)^2}}\Big)}{\log \frac{1}{\beta_2}}$$
    iterations. Further, this value of $\tilde{K}$ also ensures the condition after \cref{eq:30-dec23} in the proof of \Cref{lem1-sep30}, i.e.,
    \begin{equation}
        \label{eq:55-jan16}
        \frac{\alpha \mu_1 \sqrt{1-\beta_2} \beta_2^{-{(k+1)/2}}}{\sqrt{2} \max(3 \widehat{\kappa} \sqrt{d} B, \phi/q_\textup{min})} \leq 1 \text{ } \forall \text{ } k \leq \tilde{K},
    \end{equation}
    holds.
\end{lemma}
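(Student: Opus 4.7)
The plan is two-fold. First, I would lower-bound $\min_{j \in [d]} \bigl(\widehat{v}_{k}^{(j)} + \max((\widehat{g}_0^{(j)})^2, \phi_j^2)\delta_k^{2}\bigr)$ by a clean expression that is easy to invert in $k$, then read off $\tilde K$ directly. Second, I would verify the auxiliary inequality (\ref{eq:55-jan16}) by substituting the closed-form value of $\beta_2$ and reducing it to a short algebraic check that the step-size constraint guarantees.

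For the first step, from the recursion $\widehat{v}_k^{(j)} = \sum_{l=0}^{k} \beta_2^{k-l}(\widehat{g}_l^{(j)})^2$ I would retain only the $l=0$ term, giving the trivial bound $\widehat{v}_k^{(j)} \geq \beta_2^k (\widehat{g}_0^{(j)})^2$. Plugging in $\delta_k^2 = \beta_2^k/(2(1-\beta_2))$ and noting that $\phi_j^2 = \phi^2/q_j^2 \geq \phi^2/q_\textup{max}^2$ for every $j$ yields $\min_j \max((\widehat{g}_0^{(j)})^2, \phi_j^2) \geq \max(\min_j (\widehat{g}_0^{(j)})^2, \phi^2/q_\textup{max}^2)$ (since $\min_j \max(A_j, B_j) \geq \max(\min_j A_j, \min_j B_j)$). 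Combining these, the left-hand side of the main inequality is at least
\begin{equation*}
\beta_2^k \left[ \min_{j \in [d]} (\widehat{g}_0^{(j)})^2 + \frac{\max(\min_{j \in [d]} (\widehat{g}_0^{(j)})^2, \phi^2/q_\textup{max}^2)}{2(1-\beta_2)} \right].
\end{equation*}
Requiring this to be $\geq (\mu_2 \alpha)^2$, dividing, and taking $\log_{1/\beta_2}$ of both sides produces exactly the stated expression for $\tilde K$.

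For the second claim, substituting $1-\beta_2 = \alpha^2 \mu_1^2/(8\max(9\widehat{\kappa}^2 d B^2, \phi^2/q_\textup{min}^2))$ gives $\sqrt{1-\beta_2} = \alpha\mu_1/(2\sqrt{2}\max(3\widehat{\kappa}\sqrt{d}B, \phi/q_\textup{min}))$, so the left side of (\ref{eq:55-jan16}) collapses to $2(1-\beta_2)\beta_2^{-(k+1)/2}$. The condition is thus equivalent to $\beta_2^{k+1} \geq 4(1-\beta_2)^2$ for all $k \leq \tilde K$. The step-size constraint $\alpha \leq \max(3\widehat{\kappa}\sqrt{d}B, \phi/q_\textup{min})/(\sqrt{2}\mu_1)$ forces $1-\beta_2 \leq 1/16$, hence $4(1-\beta_2)^2 \leq 1/64$ and $\beta_2 \geq 15/16$. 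Meanwhile, the definition of $\tilde K$ ensures $\beta_2^{\tilde K} \geq (\mu_2\alpha)^2/M$ where $M$ is the bracketed quantity above; since $M \leq \max(\min_j(\widehat{g}_0^{(j)})^2, \phi^2/q_\textup{max}^2)/(1-\beta_2) \cdot (1 + \mathcal{O}(1-\beta_2))$, a direct comparison with $4(1-\beta_2)^2/\beta_2$ closes the verification.

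The main obstacle is keeping the constants sharp in the final comparison: the specific closed forms of $\beta_2$, $\delta_k$, and the upper bound on $\alpha$ were engineered precisely so that $\tilde K$ falls within the regime where (\ref{eq:55-jan16}) has slack, and the only non-mechanical work is threading that bookkeeping through. This is presumably why the authors label the argument straightforward and omit the details.
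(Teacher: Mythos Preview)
Your first step is correct and matches the paper's argument exactly: drop all but the $l=0$ term in $\widehat v_k^{(j)}$, plug in $\delta_k$, use $\phi_j \geq \phi/q_{\max}$, and invert.

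For the second claim your route differs from the paper's. You substitute the closed form of $1-\beta_2$ to reduce \eqref{eq:55-jan16} to $\beta_2^{k+1}\geq 4(1-\beta_2)^2$, then want to check this at $k=\tilde K$ using $\beta_2^{\tilde K}=(\mu_2\alpha)^2/M$. The paper instead shows directly that the sufficient condition defining $\tilde K$ (their \eqref{eq:34-dec22}) is term-by-term dominated by the condition equivalent to \eqref{eq:55-jan16} (their \eqref{eq:36-dec23}), via a short monotone chain. Your approach works too, but the ``direct comparison'' you end with is not actually self-contained: after one more substitution $1-\beta_2=\alpha^2\mu_1^2/(8N)$ with $N=\max(9\widehat\kappa^2 dB^2,\phi^2/q_{\min}^2)$, the $\alpha^4$ cancels and you are left needing $\beta_2\mu_2^2\geq c\,\mu_1^2\max(\min_j(\widehat g_0^{(j)})^2,\phi^2/q_{\max}^2)/N$ for a small constant $c$. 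Closing this still requires $\mu_1\leq\mu_2$ and $\max(\min_j(\widehat g_0^{(j)})^2,\phi^2/q_{\max}^2)\leq N$, the latter via $\|\widehat{\bm g}_0\|_\infty\leq 3B$ from \Cref{dist}. Those are exactly the bounds the paper's chain \eqref{eq:60-feb3}--\eqref{eq:63-feb3} uses, so the two arguments rest on the same ingredients; the paper's packaging is just a bit cleaner because it avoids introducing $\beta_2^{\tilde K}$ explicitly and never needs a lower bound on $\alpha$ to appear even transiently.
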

\begin{proof}
Note that for any $j \in [d]$, $\widehat{v}_{k}^{(j)} + \max\big((\widehat{g}_0^{(j)})^2, \phi_j^2\big) \delta_k^{2} \geq \beta_2^{k} \big(\widehat{g}_0^{(j)}\big)^2 + \max\big((\widehat{g}_0^{(j)})^2, \phi_j^2\big) \delta_k^{2}$. Plugging in our choice of $\delta_k$ from \Cref{lem1-sep30} into this, we get:
$$\min_{j \in [d]} \Big(\widehat{v}_{k}^{(j)} + \max\big((\widehat{g}_0^{(j)})^2, \phi_j^2\big) \delta_k^{2}\Big) \geq \min_{j \in [d]} \Bigg((\widehat{g}_0^{(j)})^2 + \frac{\max\big((\widehat{g}_0^{(j)})^2, \phi_j^2\big)}{{2(1-\beta_2)}}\Bigg) \beta_2^{{k}}.$$
So as long as $\min_{j \in [d]} \Big((\widehat{g}_0^{(j)})^2 + \frac{\max((\widehat{g}_0^{(j)})^2, \phi_j^2)}{{2(1-\beta_2)}}\Big) \beta_2^{{k}} \geq (\mu_2 \alpha)^2$, i.e.,
\begin{equation}
    \label{eq:34-dec22}
    \min_{j \in [d]} \Bigg(\frac{(\widehat{g}_0^{(j)})^2}{\mu_2^2} + \frac{\max\big((\widehat{g}_0^{(j)})^2, \phi_j^2\big)}{{2(1-\beta_2)\mu_2^2}}\Bigg) \beta_2^{{k}} \geq \alpha^2,
\end{equation}
we are good. Let us ensure that \cref{eq:55-jan16} is satisfied by the above condition. Note that \cref{eq:55-jan16} is equivalent to:
\begin{equation*}
    \frac{2 \max(9 \widehat{\kappa}^2 d B^2, \phi^2/q_\textup{min}^2) \beta_2^{k+1}}{(1-\beta_2) \mu_1^2} \geq \alpha^2.
\end{equation*}
But per \cref{eq:44-jan16-new}, we have $\beta_2 \geq \frac{1}{2}$ and therefore, the above is ensured by:
\begin{equation}
    \label{eq:36-dec23}
    \frac{\max(9 \widehat{\kappa}^2 d B^2, \phi^2/q_\textup{min}^2) \beta_2^{k}}{(1-\beta_2) \mu_1^2} \geq \alpha^2.
\end{equation}
Now note that:
\begin{flalign}
    \label{eq:60-feb3}
    \min_{j \in [d]} \Bigg(\frac{(\widehat{g}_0^{(j)})^2}{\mu_2^2} + \frac{\max\big((\widehat{g}_0^{(j)})^2, \phi_j^2\big)}{{2(1-\beta_2)\mu_2^2}}\Bigg) & \leq \min_{j \in [d]} \Bigg(\frac{(\widehat{g}_0^{(j)})^2}{\mu_1^2} + \frac{\max\big((\widehat{g}_0^{(j)})^2, \phi_j^2\big)}{{2(1-\beta_2)\mu_1^2}}\Bigg)
    \\
    \label{eq:61-feb3}
    & \leq \min_{j \in [d]} \Bigg(\frac{(\widehat{g}_0^{(j)})^2}{2(1-\beta_2)\mu_1^2} + \frac{\max\big((\widehat{g}_0^{(j)})^2, \phi_j^2\big)}{{2(1-\beta_2)\mu_1^2}}\Bigg)
    \\
    \label{eq:62-feb3}
    & \leq \min_{j \in [d]} \Bigg(\frac{\max\big((\widehat{g}_0^{(j)})^2, \phi_j^2\big)}{{(1-\beta_2)\mu_1^2}}\Bigg)
    \\
    \label{eq:63-feb3}
    & \leq \frac{\max(9 \widehat{\kappa}^2 d B^2, \phi^2/q_\textup{min}^2)}{{(1-\beta_2)\mu_1^2}}.
\end{flalign}
\Cref{eq:60-feb3} follows because $\mu_1 \leq \mu_2$. \Cref{eq:61-feb3} follows because $2(1-\beta_2) \leq 1$ (since $\beta_2 \geq \frac{1}{2}$). \Cref{eq:62-feb3} holds because $(\widehat{g}_0^{(j)})^2 \leq \max\big((\widehat{g}_0^{(j)})^2, \phi_j^2\big)$. \Cref{eq:63-feb3} is obtained by using the fact that $\min_{j \in [d]} \max\big((\widehat{g}_0^{(j)})^2, \phi_j^2\big) \leq \max\big(\|\widehat{\bm{g}}_0\|_2^2, \phi^2/q_\textup{min}^2\big) \leq \max \big(\widehat{\kappa}^2  \|\bar{\bm{g}}_0\|_2^2, \phi^2/q_\textup{min}^2) \leq \max(9 \widehat{\kappa}^2 d B^2, \phi^2/q_\textup{min}^2)$ (where the last step follows from the fact that $\|\widehat{\bm{g}}_0\|_2 \leq 3 \sqrt{d} B$). Thus, \cref{eq:34-dec22} ensures \cref{eq:55-jan16}.
Simplifying \cref{eq:34-dec22}, we get that till at least
\begin{equation}
    \tilde{K}_1 = \frac{\log\Big(\min_{j \in [d]} \Big(\frac{(\widehat{g}_0^{(j)})^2}{(\mu_2 \alpha)^2} + \frac{\max((\widehat{g}_0^{(j)})^2, \phi_j^2)}{{2(1-\beta_2)(\mu_2 \alpha)^2}}\Big)\Big)}{\log \frac{1}{\beta_2}}
\end{equation}
iterations, $\min_{j \in [d]} \big(\widehat{v}_{k}^{(j)} + \max\big((\widehat{g}_0^{(j)})^2, \phi_j^2\big) \delta_k^{2}\big) \geq (\mu_2 \alpha)^2$. 
Using the fact that $\phi_j \geq \frac{\phi}{q_\textup{max}}$ (recall that $q_\textup{max} = \max_{i \in [d]} q_i$), we have:
\begin{flalign}
    \tilde{K}_1 & \geq \frac{\log\Big(\min_{j \in [d]} \Big(\frac{(\widehat{g}_0^{(j)})^2}{(\mu_2 \alpha)^2} + \frac{\max((\widehat{g}_0^{(j)})^2, {\phi^2}/{q_\textup{max}^2})}{{2(1-\beta_2)(\mu_2 \alpha)^2}}\Big)\Big)}{\log \frac{1}{\beta_2}} 
    \\
    & = \frac{\log\Big(\frac{\min_{j \in [d]} (\widehat{g}_0^{(j)})^2}{(\mu_2 \alpha)^2} + \frac{\max(\min_{j \in [d]} (\widehat{g}_0^{(j)})^2, {\phi^2}/{q_\textup{max}^2})}{{2(1-\beta_2)(\mu_2 \alpha)^2}}\Big)}{\log \frac{1}{\beta_2}}.
\end{flalign}
The above is our final bound for $\tilde{K}$.
\end{proof}

\begin{lemma}
    \label{lem3-jan7}
    Fix some $p \in [\frac{1}{e},1)$. Then in the setting of \Cref{thm3-det}, $\min_{j \in [d]} |\widehat{g}_0^{(j)}|  > \frac{2B}{d} \Big(\frac{\log ({1}/{p})}{\zeta\big(1 + \frac{\log (1/p)}{d}\big)}\Big)^{1/\theta}$ with a probability of at least $p$.
\end{lemma}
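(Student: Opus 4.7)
The plan is to apply Assumption~\ref{dist} directly, choosing the parameter $t$ so that the guaranteed lower-bound probability becomes exactly $p$, and then verifying that this choice of $t$ actually lies in $[0,1]$ so that the assumption is applicable. Concretely, I want to find the largest $t\in[0,1]$ such that
\begin{equation*}
\exp\!\Big(-\tfrac{\zeta t^\theta}{1-\zeta t^\theta/d}\Big)\;\geq\;p,
\end{equation*}
and then the bound $\min_{j\in[d]}|\widehat g_0^{(j)}|>2tB/d$ will hold with probability at least $p$.

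The first step is a straightforward algebraic inversion. Taking logarithms, the desired inequality becomes $\zeta t^\theta/(1-\zeta t^\theta/d)\leq\log(1/p)$. Writing $u=\zeta t^\theta$ and clearing the denominator gives $u\leq\log(1/p)\,(1-u/d)$, i.e.\ $u\,(1+\log(1/p)/d)\leq\log(1/p)$, whence
\begin{equation*}
u\;\leq\;\frac{\log(1/p)}{1+\log(1/p)/d}\qquad\Longleftrightarrow\qquad t\;\leq\;\left(\frac{\log(1/p)}{\zeta\,(1+\log(1/p)/d)}\right)^{1/\theta}.
\end{equation*}
I therefore set $t$ equal to the right-hand side, which yields $2tB/d$ exactly matching the lower bound in the statement.

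The second step is to check $t\in[0,1]$ so that Assumption~\ref{dist} applies. Since $\zeta\geq 1$, we have $\zeta t^\theta=\log(1/p)/(1+\log(1/p)/d)\leq\log(1/p)$. The hypothesis $p\in[1/e,1)$ forces $\log(1/p)\leq 1$, so $\zeta t^\theta\leq 1$, and combined with $\zeta\geq 1$ this gives $t^\theta\leq 1$; since $\theta\in(0,1]$, we conclude $t\leq 1$ (and $t\geq 0$ is immediate). Plugging this $t$ back into the probability bound in Assumption~\ref{dist} completes the proof.

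There is no real obstacle here: the argument is a one-line inversion of a monotone function together with a sanity check that the solution lives in the interval where the assumption is valid. The only delicate point worth emphasising is that the requirement $p\geq 1/e$ in the hypothesis is exactly what is needed to keep $t\leq 1$, which in turn is exactly what is needed to invoke Assumption~\ref{dist}; without this restriction the chosen $t$ could fall outside the admissible range.
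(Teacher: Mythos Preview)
Your proposal is correct and follows essentially the same approach as the paper: both invoke Assumption~\ref{dist}, set $t = \big(\tfrac{\log(1/p)}{\zeta(1+\log(1/p)/d)}\big)^{1/\theta}$ so that the guaranteed probability equals $p$, and then use $p\geq 1/e$ together with $\zeta\geq 1$ to verify $t\leq 1$. Your write-up is in fact slightly more explicit about the algebraic inversion and the $t\leq 1$ check than the paper's version.
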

\begin{proof}
    From \Cref{dist}, we have:
    \begin{equation}
        \mathbb{P}\Big(\min_{j \in [d]} |\widehat{g}_0^{(j)}| > \frac{2 t B}{d}\Big) \geq \exp\Bigg(-\frac{\zeta t^\theta}{1 - \frac{\zeta t^\theta}{d}}\Bigg).
    \end{equation}
    Choosing $t = \Big(\frac{\log ({1}/{p})}{\zeta\big(1 + \frac{\log (1/p)}{d}\big)}\Big)^{1/\theta}$ yields the desired result. Note that if we have $p \geq \frac{1}{e}$, then $t \leq (\frac{1}{\zeta})^{1/\theta} \leq 1$ and so this falls within the range of \Cref{dist}.
\end{proof}

\begin{fact}
    \label{log-fact}
    For $a \in (0, 1)$ we have:
    $\log \Big(\frac{1}{1-a}\Big) \geq a.$
\end{fact}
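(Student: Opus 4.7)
The plan is to prove the inequality $\log\bigl(\tfrac{1}{1-a}\bigr) \ge a$ for $a \in (0,1)$ by reducing it to the classical bound $e^x \ge 1 + x$, which holds for all $x \in \mathbb{R}$ by convexity of the exponential. Setting $x = -a$ yields $e^{-a} \ge 1 - a$. Since $a \in (0,1)$, both sides are strictly positive, so I can take logarithms of both sides (a monotone operation on positive reals) to obtain $-a \ge \log(1-a)$. Multiplying through by $-1$ reverses the inequality and, after rewriting $-\log(1-a) = \log\bigl(\tfrac{1}{1-a}\bigr)$, gives exactly the claimed bound.

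As a sanity check / alternative route, I would note that the function $h(a) = \log\bigl(\tfrac{1}{1-a}\bigr) - a$ satisfies $h(0) = 0$ and
\begin{equation*}
h'(a) \;=\; \frac{1}{1-a} - 1 \;=\; \frac{a}{1-a} \;\ge\; 0 \quad \text{for } a \in [0,1),
\end{equation*}
so $h$ is nondecreasing on $[0,1)$, hence $h(a) \ge h(0) = 0$. A third route is to expand $\log\bigl(\tfrac{1}{1-a}\bigr) = \sum_{k=1}^{\infty} \tfrac{a^k}{k}$ (valid for $a \in [0,1)$); the $k=1$ term contributes exactly $a$, and all remaining terms are nonnegative, giving the bound immediately.

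There is essentially no substantive obstacle here: the statement is a one-line calculus fact, and the only subtlety is ensuring $1 - a > 0$ (so that the logarithm is defined and monotone), which is guaranteed by the hypothesis $a \in (0,1)$. I would present the shortest version — the $e^{-a} \ge 1 - a$ route — since it requires no calculus or series manipulation and makes the assumption on $a$ transparent.
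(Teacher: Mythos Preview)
Your proposal is correct. The paper's proof is precisely your third route: it expands $-\log(1-a) = a + \tfrac{a^2}{2} + \tfrac{a^3}{3} + \ldots$ and observes that dropping all but the first term gives the bound. Your preferred route via $e^{-a} \ge 1-a$ is equally valid and arguably cleaner, since it avoids invoking the convergence of the logarithmic series on $(0,1)$; the paper's series argument has the minor advantage of making the slack in the inequality explicit (namely $\sum_{k\ge 2} a^k/k$), though that is not used anywhere.
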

\begin{proof}
    This follows by using the Taylor series expansion of $\log(1-z)$ for $z \in (0,1)$. Specifically: 
    $$\log \Big(\frac{1}{1-a}\Big) = -\log (1-a) = a + \frac{a^2}{2} + \frac{a^3}{3} + \ldots \geq a.$$
\end{proof}

\section{Detailed Version and Proof of Theorem~\ref{thm3-diag}}
\label{pf-2}
\begin{theorem}[\textbf{Detailed Version of \Cref{thm3-diag}}]
    \label{thm3-diag-det}
    Let $\bm{Q} = \bm{\Lambda}$ be diagonal. Suppose $\|\bm{x}_{*}\|_\infty \leq B$ and our initialization $\bm{x}_0$ is sampled randomly such that ${x}_0^{(i)} \underset{\textup{iid}}{\sim} \textup{Unif}[-2B, 2B]$ $\forall$ $i \in [d]$. Fix some $p \in (0,1)$. Choose 
    $$\alpha = \mathcal{O}\Bigg(B \times {\max\Bigg(\sqrt{{\frac{\log (1/p)}{d + {\log(1/p)}}}}, \frac{1}{\sqrt{\kappa}}\Bigg)} \log^{-1/2}\Big(\frac{\sqrt{d \kappa} B}{\varepsilon}\Big)\Bigg),$$
    $$\beta_2 = 1 - \frac{\alpha^2}{72 B^2}, \delta_k = \frac{\beta_2^{{k}/{2}}}{\sqrt{2(1-\beta_2)}} \text{ and } \phi = B.$$ 
    Then with a probability of at least $p$ over the randomness of $\bm{x}_0$, $f(\bm{x}_{{K}^{*}}) < \frac{\varepsilon^2}{2}$ in 
    \begin{equation*}
        K^{*} = \mathcal{O}\Bigg(\kappa_\textup{Adam} {\log\Big(\frac{\sqrt{d \kappa} B}{\varepsilon}\Big) {\log\Big(\log\Big(\frac{\sqrt{d \kappa} \|\bm{x}_{0} - \bm{x}_{*}\|_\infty}{\varepsilon}\Big)\Big)}}\Bigg)
    \end{equation*}
    iterations of {Adam}, where $\kappa_\textup{Adam} := \min\Big(\frac{d}{\log (1/p)} + 1, \kappa\Big)$.
\end{theorem}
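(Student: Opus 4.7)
When $\bm{Q} = \bm{\Lambda}$ is diagonal, $\bm{U} = \bm{\text{I}}$ so the basis changes behind \cref{eq:12-dec29} are trivial: $\bar{\bm{Q}} = \bm{\text{I}}$, $\widehat{\bm{Q}} = \bm{\Lambda}^{-1/2}$, $\mu_1 = \mu_2 = 1$, and $\widehat{g}_k^{(i)} = x_k^{(i)} - x_*^{(i)}$. The matrix recursion \cref{eq:12-dec29} decouples into $d$ independent scalar recursions
\begin{equation*}
\widehat{g}_{k+1}^{(i)} = \Bigl(1 - \alpha / \sqrt{\widehat{v}_k^{(i)} + \max\bigl((\widehat{g}_0^{(i)})^2, \phi_i^2\bigr) \delta_k^2}\Bigr) \widehat{g}_k^{(i)}, \qquad i \in [d],
\end{equation*}
where $\phi_i := \phi/\lambda_i$. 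This decoupling is the whole reason we can beat the dimension dependence inherited from \Cref{thm3}: the proof of \Cref{thm3} bounds $\max_j \widehat{w}_k^{(j)}$ by $\sum_j \widehat{v}_k^{(j)} + \max_j \max((\widehat{g}_0^{(j)})^2, \phi_j^2) \delta_k^2$, paying a factor of $d$ inside $\widehat{\kappa}$, whereas here no such summation is needed and the per-coordinate analogue of $\widehat{\kappa}$ collapses to $1$.

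\textbf{Per-coordinate two-stage analysis.} The plan is to rerun the proof of \Cref{lem1-sep30} on each scalar recursion separately. Per-coordinate monotonicity $|\widehat{g}_{k+1}^{(i)}| \leq |\widehat{g}_k^{(i)}|$ yields the first-stage bound $\widehat{v}_k^{(i)} \leq (\widehat{g}_0^{(i)})^2/(1-\beta_2)$; feeding this back through the recursion produces the second-stage bound
\begin{equation*}
\widehat{v}_k^{(i)} + \max\bigl((\widehat{g}_0^{(i)})^2, \phi_i^2\bigr) \delta_k^2 \leq \frac{2 \beta_2^{k+1}}{1 - \beta_2} \max\bigl((\widehat{g}_0^{(i)})^2, \phi_i^2\bigr).
\end{equation*}
With $\phi = B$ and $\lambda_i \geq 1$ we have $\phi_i \leq B$, and the uniform initialization forces $|\widehat{g}_0^{(i)}| \leq 3B$, so $\max(|\widehat{g}_0^{(i)}|, \phi_i) \leq 3B$ uniformly in $i$. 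Hence the global choice $\beta_2 = 1 - \alpha^2/(72 B^2)$, $\delta_k = \beta_2^{k/2}/\sqrt{2(1-\beta_2)}$ yields the identical per-coordinate exponential decay $|\widehat{g}_{k+1}^{(i)}| \leq \exp(-2(\beta_2^{-(k+1)/2} - 1)) |\widehat{g}_0^{(i)}|$ for every $i$. Multiplying by $\lambda_i$, summing over $i$ and taking square roots gives the corresponding bound on $\|\bar{\bm{g}}_{k+1}\|_2$. The ``descent maintained'' horizon, via a per-coordinate version of \Cref{lem1-oct1}, is governed by $h^2 := \min_i \max\bigl((\widehat{g}_0^{(i)})^2, \phi_i^2\bigr) \geq \max\bigl((\min_i |\widehat{g}_0^{(i)}|)^2,\, B^2/\kappa^2\bigr)$.

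\textbf{Random initialization and wrap-up.} For i.i.d.\ $x_0^{(i)} \sim \textup{Unif}[-2B, 2B]$ with $|x_*^{(i)}| \leq B$, $\mathbb{P}(|\widehat{g}_0^{(i)}| \leq tB) \leq t/2$ for $t \in [0,1]$; independence gives $\mathbb{P}(\min_i |\widehat{g}_0^{(i)}| > tB) \geq (1 - t/2)^d \geq \exp(-dt/(2-t))$. Setting this $\geq p$ and inverting yields $\min_i |\widehat{g}_0^{(i)}| > 2B \log(1/p)/(d + \log(1/p))$ with probability at least $p$, so $h \geq B \max(2\log(1/p)/(d + \log(1/p)),\, 1/\kappa)$. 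Imposing $K^* \leq \tilde{K}$ exactly as in the derivation leading to \cref{eq:15-dec10-old} yields a step-size constraint $\alpha^4 \lesssim h^2 B^2 / \log(\|\bar{\bm{g}}_0\|_2/\varepsilon)$, which is satisfied by the stated $\alpha$; plugging this into $K^* = \mathcal{O}(\log\log(\|\bar{\bm{g}}_0\|_2/\varepsilon)/\log(1/\beta_2)) = \mathcal{O}(B^2 \log\log(\cdot)/\alpha^2)$ and using $\|\bar{\bm{g}}_0\|_2 \leq 3\sqrt{d\kappa}\,B$ produces the announced complexity with $\kappa_{\textup{Adam}} = 1/\max(\log(1/p)/(d + \log(1/p)),\, 1/\kappa) = \min(d/\log(1/p) + 1,\, \kappa)$. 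Nothing here is conceptually new: the analytical core is simply Lemmas~\ref{lem1-sep30}--\ref{lem1-oct1} applied scalar-wise, and the only substantively new step — and the one I expect to require care — is the elementary concentration bound $(1 - t/2)^d \geq \exp(-dt/(2-t))$ and the bookkeeping that verifies the inverse of the resulting lower bound on $h$ is exactly $\kappa_{\textup{Adam}}$.
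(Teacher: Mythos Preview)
Your proposal is correct and follows essentially the same route as the paper: decouple into $d$ scalar recursions, run the two-stage analysis per coordinate (the paper's \Cref{lem1-sep30-2}, mirroring \Cref{lem1-sep30}), bound the continuous-descent horizon per coordinate (the paper's \Cref{lem1-oct1-2}), establish the concentration bound for $\min_i |\widehat{g}_0^{(i)}|$ under uniform initialization (the paper's \Cref{lem5-jan3}, which is exactly your $(1-t/2)^d \geq \exp(-dt/(2-t))$ computation), and then assemble $K^*$ from $\alpha$ and $\beta_2$. The only cosmetic difference is that the paper finishes via $\|\bm{z}_{K^*}\|_\infty \leq \varepsilon/\sqrt{d\kappa}$ rather than aggregating to $\|\bar{\bm{g}}_{k}\|_2$ directly, but the resulting bound is identical.
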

The proof of \Cref{thm3-diag} is similar to that of \Cref{thm3} in \Cref{pf-1}.
\begin{proof}
In this case, $\bm{Q} = \bm{\Lambda}$ is diagonal. Recall Adam's update rule with $\beta_1 = 0$ from \cref{eq:58-oct15} (in the proof of \Cref{thm3}):
\begin{equation}
    x^{(i)}_{k+1} = x^{(i)}_{k} - \frac{\alpha {g}_k^{(i)}}{\sqrt{{\sum_{l=0}^{k} \beta_2^{k-l} \big({g}_l^{(i)}\big)^2}+ \max\Big(\big({g}_0^{(i)}\big)^2, \phi^2\Big)\delta_{k}^{2}}}.
\end{equation}
Here, ${g}_l^{(i)} = \lambda_i ({x}_l^{(i)} - x_{*}^{(i)})$; note that $\lambda_i = \bm{\Lambda}[i,i]$. Subtracting $x_{*}^{(i)}$ from both sides of the above equation, we get:
\begin{equation}
    \label{eq:36-dec13-1-1}
    (x^{(i)}_{k+1} - x_{*}^{(i)}) = \Bigg(1 - \frac{\alpha}{\sqrt{{\sum_{l=0}^{k} \beta_2^{k-l} \big({x}_l^{(i)} - x_{*}^{(i)}\big)^2}+ \max\Big(\big({x}_0^{(i)} - x_{*}^{(i)}\big)^2, \frac{\phi^2}{\lambda_i^2}\Big)\delta_{k}^{2}}}\Bigg) (x^{(i)}_{k} - x_{*}^{(i)}).
\end{equation}
Note that each coordinate's update rule is independent of the other coordinates. For ease of notation, let us define
\begin{equation}
    \bm{z}_{l} := \bm{x}_{l} - \bm{x}_{*}, \widehat{v}^{(i)}_k := \sum_{l=0}^{k} \beta_2^{k-l} \big({z}_l^{(i)}\big)^2 \text{ and } \phi_i := \frac{\phi}{\lambda_i}.
\end{equation}
With this notation, \cref{eq:36-dec13-1-1} can be rewritten as:
\begin{equation}
    \label{eq:36-dec13-1}
    z^{(i)}_{k+1} = \Bigg(1 - \frac{\alpha}{\sqrt{\widehat{v}^{(i)}_k + \max\Big(\big({z}_0^{(i)}\big)^2, \phi_i^2\Big)\delta_{k}^{2}}}\Bigg) z^{(i)}_{k}.
\end{equation}
Now as long as $\widehat{v}^{(i)}_k + \max\Big(\big({z}_0^{(i)}\big)^2, \phi_i^2\Big)\delta_{k}^{2} > {\alpha}^2$, \cref{eq:36-dec13-1} yields:
\begin{equation}
    \label{eq:36-dec13}
    |z^{(i)}_{k+1}| = \Bigg(1 - \frac{\alpha}{\sqrt{\widehat{v}^{(i)}_k + \max\Big(\big({z}_0^{(i)}\big)^2, \phi_i^2\Big)\delta_{k}^{2}}}\Bigg) |z^{(i)}_{k}|,
\end{equation}
i.e., descent is ensured in each iteration or we have continuous descent (similar to \Cref{thm3}). Choose $\alpha \leq \frac{\max(3 B, \phi)}{\sqrt{2}}$, $\beta_2 = 1 - \frac{\alpha^2}{8 \max(9 B^2, \phi^2)}$ and $\delta_k = \frac{1}{\sqrt{2(1-\beta_2)}} \beta_2^{\frac{k}{2}}$. From \Cref{lem1-oct1-2}, we have  continuous descent for coordinate $i \in [d]$ until at least:
\begin{flalign}
    \nonumber
    \tilde{K}_i & = \frac{\log\Big(\frac{(z_0^{(i)})^2}{\alpha^2} + \frac{\max(({z}_0^{(i)})^2, \phi_i^2)}{2(1-\beta_2) \alpha^2}\Big)}{\log \frac{1}{\beta_2}} 
    \\
    \label{eq:62-jan16}
    & = \frac{\log\Big(\frac{(z_0^{(i)})^2}{\alpha^2} + \frac{4 \max(({z}_0^{(i)})^2, \phi_i^2) \max(9 B^2, \phi^2)}{\alpha^4}\Big)}{\log \frac{1}{\gamma}}
\end{flalign}
iterations. The last equality in \cref{eq:62-jan16} follows by plugging in our choice of $\beta_2 = 1 - \frac{\alpha^2}{8 \max(9 B^2, \phi^2)}$. Also from \Cref{lem1-sep30-2}, we have for any $i \in [d]$ and $k \in \{1,\ldots,\tilde{K}_i\}$:
\begin{equation}
    |z^{(i)}_{k}| \leq \exp\Bigg(-2\Big(\beta_2^{-k/2} - 1\Big)\Bigg) |z^{(i)}_{0}|.
\end{equation}
Now in order to have $|z^{(i)}_{K_i^{*}}| \leq \varepsilon$, we can choose: 
\begin{equation}
    K_i^{*} = \frac{\log\Big(\frac{1}{4}\log^2\Big(\frac{e^{2}|z^{(i)}_{0}|}{\varepsilon}\Big)\Big)}{\log \frac{1}{\beta_2}}.
\end{equation}
Thus, we have $\|\bm{z}_{K^{*}}\|_\infty \leq \varepsilon$ in:
\begin{equation}
    \label{eq:65-jan16}
    K^{*} = \frac{\log\Big(\frac{1}{4}\log^2\Big(\frac{e^{2}\|\bm{z}_{0}\|_\infty}{\varepsilon}\Big)\Big)}{\log \frac{1}{\beta_2}}
\end{equation}
iterations. Now in order for this result to be applicable, let us ensure $K^{*} \leq \min_{i \in [d]}\tilde{K}_i$; for that, we must have:
\begin{equation}
    \frac{1}{4}\log^2\Big(\frac{e^{2}\|\bm{z}_{0}\|_\infty}{\varepsilon}\Big) \leq \frac{(z_0^{(i)})^2}{\alpha^2} + \frac{4 \max(({z}_0^{(i)})^2, \phi_i^2) \max(9 B^2, \phi^2)}{\alpha^4}.
\end{equation}
The above can be ensured by having:
\begin{equation}
    \frac{1}{4}\log^2\Big(\frac{e^{2}\|\bm{z}_{0}\|_\infty}{\varepsilon}\Big) \leq \frac{4 \max(({z}_0^{(i)})^2, \phi_i^2) \max(9 B^2, \phi^2)}{\alpha^4},
\end{equation}
and this gives us:
\begin{equation}
    \alpha \leq \mathcal{O}\Bigg(\Big({\min_{i \in [d]} \big(\max(({z}_0^{(i)})^2, \phi_i^2)\big) \max(B^2, \phi^2)} \Big)^{1/4} \log^{-1/2}\Big(\frac{\|\bm{z}_{0}\|_\infty}{\varepsilon}\Big)\Bigg).
\end{equation}
Note that this also satisfies our earlier condition on $\alpha$, namely $\alpha \leq \frac{\max(3 B, \phi)}{\sqrt{2}}$. From \Cref{lem5-jan3}, we have $\min_{i \in [d]} |{z}_0^{(i)}| > \Big(\frac{2 \log ({1}/{p})}{1 + \frac{\log ({1}/{p})}{d}}\Big)\frac{B}{d}$ with a probability of at least $p$. For simplicity of notation, let $\widetilde{p} = \frac{\log (1/p)}{1 + \frac{\log(1/p)}{d}}$.
Moreover, $\min_{i \in [d]} \phi_i = \frac{\phi}{\kappa}$ and $\|\bm{z}_0\|_\infty = \|\bm{x}_0 - \bm{x}_{*}\|_\infty \leq 3 B$. 
Thus, with a probability of at least $p$, we can pick:
\begin{equation}
    \label{eq:15-dec10-2}
    \alpha = \mathcal{O}\Bigg(\Bigg({\max\Big(\frac{\widetilde{p}^2 B^2}{d^2}, \frac{\phi^2}{\kappa^2}\Big) \max(B^2, \phi^2)} \Bigg)^{1/4} \log^{-1/2}\Big(\frac{B}{\varepsilon}\Big)\Bigg).
\end{equation}
Next, recalling that $\beta_2 = 1 - \frac{\alpha^2}{8 \max(9 B^2, \phi^2)}$, $\log \frac{1}{\beta_2} \geq \frac{\alpha^2}{8\max(9 B^2, \phi^2)}$ using \Cref{log-fact}. Plugging this in the expression of $K^{*}$ (\cref{eq:65-jan16}), we get:
\begin{equation}
    K^{*} = \mathcal{O}\Bigg(\frac{\max(B^2, \phi^2)}{\alpha^2}{\log\Big(\log\Big(\frac{\|\bm{z}_0\|_\infty}{\varepsilon}\Big)\Big)}\Bigg).
\end{equation}
Plugging in the value of $\alpha$ from \cref{eq:15-dec10-2} above, we get:
\begin{equation}
    \label{eq:17-dec10-2}
    K^{*} = \mathcal{O}\Bigg(\Bigg(\frac{\max\big(B, \phi\big)}{\max\big(\frac{\widetilde{p}B}{d}, \frac{\phi}{\kappa}\big)}\Bigg) \log\Big(\frac{B}{\varepsilon}\Big) {\log\Big(\log\Big(\frac{\|\bm{z}_{0}\|_\infty}{\varepsilon}\Big)\Big)}\Bigg),
\end{equation}
with a probability of at least $p$ over random initialization. 
Next, note that: $$f(\bm{x}_{{K}^{*}}) = \frac{1}{2}(\bm{x}_{{K}^{*}} - \bm{x}_{*})^\top \bm{\Lambda} (\bm{x}_{{K}^{*}} - \bm{x}_{*}) =\frac{1}{2}\sum_{i \in [d]} \lambda_i ({z}_{{K}^{*}}^{(i)})^2 \leq \frac{d \kappa}{2} \varepsilon^2,$$
where the last step follows because $\|\bm{z}_{K^{*}}\|_\infty \leq \varepsilon$ and $\lambda_i \leq \kappa$ $\forall$ $i \in [d]$. So if we wish to have $f(\bm{x}_{{K}^{*}}) \leq \frac{\varepsilon^2}{2}$, we can replace $\varepsilon$ by $\frac{\varepsilon}{\sqrt{d \kappa}}$ above, i.e., $f(\bm{x}_{{K}^{*}}) \leq \frac{\varepsilon^2}{2}$ in:
\begin{equation}
    K^{*} = \mathcal{O}\Bigg(\Bigg(\frac{\max\big(B, \phi\big)}{\max\big(\frac{\widetilde{p} B}{d}, \frac{\phi}{\kappa}\big)}\Bigg) \log\Big(\frac{\sqrt{d \kappa} B}{\varepsilon}\Big) {\log\Big(\log\Big(\frac{\sqrt{d \kappa} \|\bm{z}_{0}\|_\infty}{\varepsilon}\Big)\Big)}\Bigg)
\end{equation}
iterations, where recall that $\widetilde{p} = \frac{\log (1/p)}{1 + \frac{\log(1/p)}{d}}$. Finally, the theorem statement follows by setting $\phi = B$ in the analysis above.
\end{proof}

\begin{lemma}[Similar to \Cref{lem1-sep30}]
\label{lem1-sep30-2}
Suppose \cref{eq:36-dec13} holds, i.e., for each $i \in [d]$: $$|z^{(i)}_{k+1}| = \Bigg(1 - \frac{\alpha}{\sqrt{\widehat{v}^{(i)}_k + \max\Big(\big({z}_0^{(i)}\big)^2, \phi_i^2\Big)\delta_{k}^{2}}}\Bigg) |z^{(i)}_{k}|,$$
for $k = 0$ through to $k = \tilde{K}_i$, where $\widehat{v}^{(i)}_k + \max\Big(\big({z}_0^{(i)}\big)^2, \phi_i^2\Big) \delta_{k}^{2} > {\alpha}^2$ $\forall$ $k \in \{0,\ldots,\tilde{K}_i\}$. 
Suppose we set $\alpha \leq \frac{\max(3 B, \phi)}{\sqrt{2}}$, $\beta_2 = 1 - \frac{\alpha^2}{8 \max(9 B^2, \phi^2)}$ and $\delta_k = \frac{1}{\sqrt{2(1-\beta_2)}}\beta_2^{\frac{k}{2}}$ for $k \geq 0$. Then for any $k \leq \tilde{K}_i$, we have:
\begin{equation*}
    |z^{(i)}_{k+1}| \leq \exp\Bigg(-2\Big(\beta_2^{-(k+1)/2} - 1\Big)\Bigg) |z^{(i)}_{0}|.
\end{equation*}
\end{lemma}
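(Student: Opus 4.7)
The statement is the per-coordinate analog of \Cref{lem1-sep30}, and the plan is to mirror that two-stage bootstrap but in the scalar setting, where the decoupling across coordinates means that the roles played there by $\sigma_{\max}(\bar{\bm Q})$ and $\sigma_{\min}(\bar{\bm Q})$ collapse to $1$, and the role of $\widehat{\kappa} \sqrt{d} B$ collapses to $3B$ (using $|z_0^{(i)}| \leq 3B$). Also, because $\lambda_i \geq 1$ in the normalization used, $\phi_i = \phi/\lambda_i \leq \phi$, so the $\max(\cdot,\phi_i^2)$ term in the denominator is dominated by $\max(\cdot,\phi^2)$. The plan is to run the following two stages and then a geometric-sum unfolding.

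\textbf{Stage 1 (loose bound).} By hypothesis we have continuous descent on coordinate $i$ up to iteration $\tilde{K}_i$, so $|z^{(i)}_0|\geq |z^{(i)}_1|\geq \cdots \geq |z^{(i)}_{\tilde{K}_i+1}|$. Substituting the trivial bound $(z^{(i)}_l)^2 \leq (z^{(i)}_0)^2$ into $\widehat{v}^{(i)}_k = \sum_{l=0}^k \beta_2^{k-l}(z^{(i)}_l)^2$ yields $\widehat{v}^{(i)}_k \leq (z^{(i)}_0)^2/(1-\beta_2)$; combining with $\delta_k^2 \leq 1/(1-\beta_2)$ and the crude bound $\max((z^{(i)}_0)^2,\phi_i^2) \leq \max(9B^2,\phi^2)$ gives
\[
\widehat{v}^{(i)}_k + \max((z^{(i)}_0)^2,\phi_i^2)\delta_k^2 \;\leq\; \frac{2\max(9B^2,\phi^2)}{1-\beta_2}.
\]
Plugging into \cref{eq:36-dec13} produces the linear-rate recursion $|z^{(i)}_{k+1}|^2 \leq \gamma\,|z^{(i)}_k|^2$ with $\gamma := 1-\frac{\alpha\sqrt{1-\beta_2}}{\sqrt{2}\max(3B,\phi)}$, and thus $|z^{(i)}_{k+1}|^2 \leq \gamma^{k+1}|z^{(i)}_0|^2$.

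\textbf{Stage 2 (tight bound via bootstrap).} Using the choice $\beta_2 = 1 - \frac{\alpha^2}{8\max(9B^2,\phi^2)}$, a short calculation gives $\gamma = 1 - \frac{\alpha^2}{4\max(9B^2,\phi^2)}$ and the convenient identity $\beta_2-\gamma = 1-\beta_2$. Substituting the Stage-1 bound $(z^{(i)}_l)^2 \leq \gamma^l(z^{(i)}_0)^2$ into $\widehat{v}^{(i)}_k$ and evaluating the geometric sum yields
\[
\widehat{v}^{(i)}_k \;\leq\; \frac{\beta_2^{k+1}-\gamma^{k+1}}{\beta_2-\gamma}(z^{(i)}_0)^2 \;\leq\; \frac{\beta_2^{k+1}}{1-\beta_2}(z^{(i)}_0)^2.
\]
Combined with the $\delta_k^2 \leq \beta_2^{k+1}/(1-\beta_2)$ bound (valid since $\beta_2\geq 1/2$, which follows from $\alpha \leq \max(3B,\phi)/\sqrt{2}$), this upgrades the denominator inside \cref{eq:36-dec13} to be $\leq 2\max(9B^2,\phi^2)\beta_2^{k+1}/(1-\beta_2)$, giving the per-iteration decay
\[
|z^{(i)}_{k+1}| \;\leq\; \left(1 - \tfrac{\alpha\sqrt{1-\beta_2}\,\beta_2^{-(k+1)/2}}{\sqrt{2}\max(3B,\phi)}\right)|z^{(i)}_k|.
\]

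\textbf{Unfolding.} Apply $1-z \leq e^{-z}$ and telescope: the exponent becomes $-\frac{\alpha\sqrt{1-\beta_2}}{\sqrt{2}\max(3B,\phi)} \cdot \frac{\beta_2^{-(k+1)/2}-1}{1-\sqrt{\beta_2}}|z^{(i)}_0|$. Using $1-\sqrt{\beta_2}\leq 1-\beta_2$ (since $\sqrt{\beta_2}\geq \beta_2$) and the identity $\sqrt{8\max(9B^2,\phi^2)} = 2\sqrt{2}\max(3B,\phi)$, the constant prefactor simplifies to exactly $2$, yielding the claimed bound $|z^{(i)}_{k+1}| \leq \exp(-2(\beta_2^{-(k+1)/2}-1))|z^{(i)}_0|$.

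\textbf{Main obstacle.} There is no real obstacle — the argument is structurally identical to \Cref{lem1-sep30}, and the scalar (per-coordinate) nature makes it strictly simpler. The only small care needed is (i) verifying that $\beta_2\geq 1/2$ to turn $\delta_k^2$ into the clean $\beta_2^{k+1}/(1-\beta_2)$ form and (ii) checking that the constant $2$ in the exponent is tight, which relies on the precise choice $\beta_2 = 1-\frac{\alpha^2}{8\max(9B^2,\phi^2)}$ together with the algebraic identity $\sqrt{8\max(9B^2,\phi^2)}=2\sqrt{2}\max(3B,\phi)$ above. Both are direct.
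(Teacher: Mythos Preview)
Your proposal is correct and follows essentially the same two-stage bootstrap argument as the paper's own proof: a crude monotone bound on $\widehat v^{(i)}_k$ to get the linear rate $\gamma$, then substituting $(z^{(i)}_l)^2\le\gamma^l(z^{(i)}_0)^2$ back into $\widehat v^{(i)}_k$, exploiting $\beta_2-\gamma=1-\beta_2$, and telescoping with $1-z\le e^{-z}$ and $1-\sqrt{\beta_2}\le 1-\beta_2$ to extract the constant $2$. The only blemish is a typo in your unfolding step where $|z^{(i)}_0|$ accidentally appears inside the exponent rather than as the multiplicative factor outside; otherwise the argument matches the paper line for line.
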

\begin{proof}
    Similar to \Cref{lem1-sep30}, we employ a two-stage analysis procedure where we first obtain a loose bound for $|z^{(i)}_{k+1}|$ in terms of $|z^{(i)}_{0}|$, and then obtain a tighter bound by using this loose bound.
    \\
    \\
    \textbf{First Stage.} From \cref{eq:36-dec13}, we have $|z_0^{(i)}| \geq |z_1^{(i)}| \geq \ldots \geq |z_{\tilde{K}_i+1}^{(i)}|$. So for any $k \leq \tilde{K}_i$:
    \begin{equation}
    \widehat{v}^{(i)}_k = {\sum_{l=0}^k \beta_2^{k-l} \big(z_l^{(i)}\big)^2} \leq {\sum_{l=0}^k \beta_2^{k-l} \big(z_0^{(i)}\big)^2} \leq \frac{\big(z_0^{(i)}\big)^2}{1-\beta_2}.
    \end{equation}
    Further, we have chosen the ${\delta}_k$'s so that ${\delta}_k \leq {\delta}_0 \leq \frac{1}{\sqrt{1-\beta_2}}$ for all $k \geq 1$. Then:
    \begin{equation}
        \widehat{v}^{(i)}_k + \max\Big(\big({z}_0^{(i)}\big)^2, \phi_i^2\Big) \delta_{k}^{2} \leq \frac{2 \max\big(\big({z}_0^{(i)}\big)^2, \phi_i^2\big)}{1-\beta_2} \leq \frac{2 \max\big(9 B^2, \phi^2\big)}{1 - \beta_2},
    \end{equation}
    where the last step follows from the fact that $\max_{i \in [d]} \phi_i = \max_{i \in [d]} \frac{\phi}{\lambda_i} = \phi$ since $\min_{i \in [d]} \lambda_i = 1$, and because $\|\bm{z}_0\|_\infty = \|\bm{x}_0 - \bm{x}_{*}\|_\infty \leq 3 B$. 
    Plugging this into \cref{eq:36-dec13} gives us:
    \begin{equation}
        \label{eq:73-jan16}
        \big(z^{(i)}_{k+1}\big)^2 \leq \Bigg(1 - \frac{{\alpha} \sqrt{1-\beta_2}}{\sqrt{2} \max(3 B, \phi)}\Bigg)^2\big(z^{(i)}_{k}\big)^2 \leq \Bigg(1 - \frac{{\alpha} \sqrt{1-\beta_2}}{\sqrt{2} \max(3 B, \phi)}\Bigg) \big(z^{(i)}_{k}\big)^2.
    \end{equation}
    Define $\gamma := \Big(1 - \frac{{\alpha} \sqrt{1-\beta_2}}{\sqrt{2} \max(3 B, \phi)}\Big)$. Unfolding the recursion in \cref{eq:73-jan16} yields:
    \begin{equation}
        \label{eq:86-nov11-2}
        \big(z^{(i)}_{k+1}\big)^2 \leq \gamma^{k+1} \big(z^{(i)}_{0}\big)^2.
    \end{equation}
    This bound is loose but we shall obtain a tighter bound using this bound. 
    \\
    \\
    \noindent \textbf{Second Stage.} With our choice of $\beta_2 = 1 - \frac{\alpha^2}{8 \max(9 B^2, \phi^2)}$, we have $\gamma = 1 - \frac{\alpha^2}{4 \max(9 B^2, \phi^2)}$ and $\beta_2 - \gamma = {1 - \beta_2}$. 
    Now for any $k \leq \tilde{K}_i$, we have:
    \begin{flalign}
    \widehat{v}_k^{(i)} = \sum_{l=0}^{k} \beta_2^{k-l} \big(z_l^{(i)}\big)^2 & \leq \sum_{l=0}^{k} \beta_2^{k-l} \gamma^l \big(z_0^{(i)}\big)^2
    \\
    & = \Bigg(\frac{\beta_2^{k+1} - \gamma^{k+1}}{\beta_2 - \gamma}\Bigg) \big(z_0^{(i)}\big)^2
    \\
    & \leq \Big(\frac{\beta_2^{k+1}}{1 - \beta_2}\Big) \big(z_0^{(i)}\big)^2,
    \end{flalign}
    where the last step follows from the fact that $\beta_2 - \gamma = {1 - \beta_2}$. Note that with our constraint of $\alpha \leq \frac{\max(3 B, \phi)}{\sqrt{2}}$, we have: 
    \begin{equation}
        \label{eq:74-jan16-new}
        \beta_2 \geq \frac{1}{2}.
    \end{equation}
    Thus, we have $\delta_k = \frac{1}{\sqrt{2(1-\beta_2)}}\beta_2^{\frac{k}{2}} \leq \frac{1}{\sqrt{1-\beta_2}}\beta_2^{\frac{k+1}{2}}$ for all $k$. Hence,
    \begin{equation}
        \widehat{v}_k^{(i)} + \max\Big(\big({z}_0^{(i)}\big)^2, \phi_i^2\Big)  {\delta}_k^2 \leq \Big(\frac{2 \beta_2^{k+1}}{1-\beta_2}\Big) \max\Big(\big({z}_0^{(i)}\big)^2, \phi_i^2\Big) \leq \Big(\frac{2 \beta_2^{k+1}}{1-\beta_2}\Big)  \max\big(9 B^2, \phi^2\big),
    \end{equation}
    where the last step follows from the fact that $\|\bm{z}_0\|_\infty \leq 3 B$.
    Plugging this in \cref{eq:36-dec13}, we get:
    \begin{flalign}
        \label{eq:59-dec23}
        |z^{(i)}_{k+1}| & \leq \Bigg(1 - \frac{\alpha \sqrt{1-\beta_2} \beta_2^{-{(k+1)/2}}}{\sqrt{2} \max\big(3 B, \phi\big)} \Bigg) |z^{(i)}_{k}|.
    \end{flalign}
    \Cref{lem1-oct1-2} ensures that $\frac{\alpha \sqrt{1-\beta_2} \beta_2^{-{(k+1)/2}}}{\sqrt{2} \max(3 B, \phi)} \leq 1$ for all $k \leq \tilde{K}_i$; refer to its statement.
    Next, using the fact that $\exp(-z) \geq 1-z$ for $z \in \mathbb{R}$ above, we get:
    \begin{equation}
        \label{eq:32-nov28-2}
        |z^{(i)}_{k+1}| \leq \exp\Bigg(- \frac{\alpha \sqrt{1-\beta_2} \beta_2^{-{(k+1)/2}}}{\sqrt{2} \max\big(3 B, \phi\big)} \Bigg) |z^{(i)}_{k}|.
    \end{equation}    
    Unfolding the recursion in \cref{eq:32-nov28-2} gives us:
    \begin{flalign}
    |z^{(i)}_{k+1}|  & \leq \exp\Bigg(-\frac{\alpha \sqrt{1-\beta_2} \beta_2^{-{1/2}}}{\sqrt{2} \max\big(3 B, \phi\big)}
    \sum_{l=0}^{k}\big(\beta_2^{-1/2}\big)^l\Bigg)|z^{(i)}_{0}| 
    \\
    & = \exp\Bigg(-\frac{\alpha \sqrt{1-\beta_2}}{\sqrt{2} \max\big(3 B, \phi\big)} \Bigg(\frac{\beta_2^{-(k+1)/2} - 1}{1 - \sqrt{\beta_2}}\Bigg)\Bigg)|z^{(i)}_{0}|.
    \end{flalign}
    Recall that $\beta_2 = 1 - \frac{\alpha^2}{8 \max(9 B^2, \phi^2)}$. Using the fact that $\sqrt{\beta_2} \geq \beta_2$ above, we get:
    \begin{equation}
    \label{eq:63-dec23}
    |z^{(i)}_{k+1}| \leq \exp\Bigg(-2\Big(\beta_2^{-(k+1)/2} - 1\Big)\Bigg)|z^{(i)}_{0}|.
    \end{equation}
\end{proof}

\begin{lemma}[{Similar to \Cref{lem1-oct1}}]
    \label{lem1-oct1-2}
    In the setting of \Cref{lem1-sep30-2}, $\widehat{v}_{k}^{(i)} + \max\Big(\big({z}_0^{(i)}\big)^2, \phi_i^2\Big) \delta_k^{2} \geq \alpha^2$ until at least: $$\tilde{K}_i = \frac{\log\Big(\frac{(z_0^{(i)})^2}{\alpha^2} + \frac{\max(({z}_0^{(i)})^2, \phi_i^2)}{2(1-\beta_2) \alpha^2}\Big)}{\log \frac{1}{\beta_2}}$$
    iterations. Further, this value of $\tilde{K}_i$ also ensures the condition after \cref{eq:59-dec23} in the proof of \Cref{lem1-sep30-2}, i.e.,
    \begin{equation}
        \label{eq:84-jan16}
        \frac{\alpha \sqrt{1-\beta_2} \beta_2^{-{(k+1)/2}}}{\sqrt{2} \max(3 B, \phi)} \leq 1 \text{ } \forall \text{ } k \leq \tilde{K}_i,
    \end{equation}
    holds.
\end{lemma}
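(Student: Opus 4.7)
The plan is to mimic the strategy of Lemma~\ref{lem1-oct1}, exploiting the fact that the diagonal case decouples across coordinates so that we only need one-dimensional bookkeeping. First I would lower bound $\widehat{v}_k^{(i)}$ by retaining only the $l=0$ term in its definition, giving $\widehat{v}_k^{(i)} \geq \beta_2^k (z_0^{(i)})^2$. Plugging in the choice $\delta_k^2 = \frac{\beta_2^k}{2(1-\beta_2)}$ then yields
\begin{equation*}
\widehat{v}_k^{(i)} + \max\bigl((z_0^{(i)})^2,\phi_i^2\bigr)\delta_k^2 \;\geq\; \beta_2^k\left[(z_0^{(i)})^2 + \frac{\max\bigl((z_0^{(i)})^2,\phi_i^2\bigr)}{2(1-\beta_2)}\right].
\end{equation*}
Forcing the right-hand side to be at least $\alpha^2$ and taking logarithms (noting $\log \beta_2 < 0$) immediately produces the stated bound $\tilde{K}_i$, so the first claim follows by direct algebra.

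For the second claim, I would rewrite the target inequality $\tfrac{\alpha\sqrt{1-\beta_2}\,\beta_2^{-(k+1)/2}}{\sqrt{2}\,\max(3B,\phi)} \leq 1$ as the equivalent lower bound $\beta_2^{k+1} \geq \frac{\alpha^2(1-\beta_2)}{2\max(9B^2,\phi^2)}$. Then I would use the first claim together with $\beta_2 \geq 1/2$ (which holds by \cref{eq:74-jan16-new}) to get
\begin{equation*}
\beta_2^{k+1} \;\geq\; \tfrac{1}{2}\,\beta_2^k \;\geq\; \frac{\alpha^2}{2\bigl[(z_0^{(i)})^2 + \frac{\max((z_0^{(i)})^2,\phi_i^2)}{2(1-\beta_2)}\bigr]},
\end{equation*}
so it suffices to verify that the denominator on the right is at most $\frac{\max(9B^2,\phi^2)}{1-\beta_2}$. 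Multiplying through by $(1-\beta_2)$, this reduces to checking
\begin{equation*}
(1-\beta_2)(z_0^{(i)})^2 + \tfrac{1}{2}\max\bigl((z_0^{(i)})^2,\phi_i^2\bigr) \;\leq\; \max(9B^2,\phi^2),
\end{equation*}
which holds because $(1-\beta_2)\leq 1/2$, $(z_0^{(i)})^2 \leq 9B^2$ (since $\|\bm{z}_0\|_\infty \leq 3B$), and $\phi_i = \phi/\lambda_i \leq \phi$ (since $\lambda_{\min}(\bm{Q}) = 1$).

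The only mildly delicate point is the second part: one must make sure that the $\beta_2 \geq 1/2$ fact combined with the bound from the first part yields a cushion large enough to cover the factor $\beta_2^{-(k+1)/2}$ (which involves $k+1$ rather than $k$). The rest is bookkeeping with the definitions of $\beta_2$, $\delta_k$, and $\phi_i$, along with the uniform bound $\|\bm{z}_0\|_\infty \leq 3B$ coming from the random initialization assumption.
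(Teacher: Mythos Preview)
Your proposal is correct and follows essentially the same approach as the paper's proof: both lower bound $\widehat{v}_k^{(i)}$ by its $l=0$ term, substitute $\delta_k^2 = \beta_2^k/(2(1-\beta_2))$ to obtain the stated $\tilde{K}_i$, and then verify \cref{eq:84-jan16} by combining $\beta_2 \geq 1/2$ with the bound $(z_0^{(i)})^2 + \tfrac{\max((z_0^{(i)})^2,\phi_i^2)}{2(1-\beta_2)} \leq \tfrac{\max(9B^2,\phi^2)}{1-\beta_2}$ using $\|\bm{z}_0\|_\infty \leq 3B$ and $\phi_i \leq \phi$. The only cosmetic difference is that the paper chains the last bound through three displayed inequalities while you compress it into one line.
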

\begin{proof}
Note that for any $i \in [d]$, $\widehat{v}_{k}^{(i)} + \max\Big(\big({z}_0^{(i)}\big)^2, \phi_i^2\Big) \delta_k^{2} \geq \beta_2^{k} \big(z_0^{(i)}\big)^2 + \max\Big(\big({z}_0^{(i)}\big)^2, \phi_i^2\Big) \delta_k^{2}$. Plugging in our choice of $\delta_k$ from \Cref{lem1-sep30-2}, we get:
$$\widehat{v}_{k}^{(i)} + \max\Big(\big({z}_0^{(i)}\big)^2, \phi_i^2\Big) \delta_k^{2} \geq \Bigg(\big(z_0^{(i)}\big)^2 + \frac{\max\big(\big({z}_0^{(i)}\big)^2, \phi_i^2\big)}{{2(1-\beta_2)}}\Bigg)\beta_2^{{k}}.$$
So as long as 
\begin{equation}
    \label{eq:64-dec23}
    \Bigg(\big(z_0^{(i)}\big)^2 + \frac{\max\big(\big({z}_0^{(i)}\big)^2, \phi_i^2\big)}{{2(1-\beta_2)}}\Bigg)\beta_2^{{k}} \geq \alpha^2,
\end{equation}
we are good. Let us ensure that \cref{eq:84-jan16} is ensured by the above condition. Note that \cref{eq:84-jan16} is equivalent to:
\begin{equation*}
    \frac{2 \max(9 B^2, \phi^2) \beta_2^{k+1}}{(1-\beta_2)} \geq \alpha^2.
\end{equation*}
But per \cref{eq:74-jan16-new}, we have $\beta_2 \geq \frac{1}{2}$ and therefore, the above is ensured by:
\begin{equation}
    \label{eq:65-dec23}
    \frac{\max(9 B^2, \phi^2) \beta_2^{k}}{(1-\beta_2)} \geq \alpha^2.
\end{equation}
Now note that:
\begin{flalign}
    \label{eq:101-feb3}
    \Bigg(\big(z_0^{(i)}\big)^2 + \frac{\max\big(\big({z}_0^{(i)}\big)^2, \phi_i^2\big)}{{2(1-\beta_2)}}\Bigg) & \leq \Bigg(\frac{\big(z_0^{(i)}\big)^2}{2(1-\beta_2)} + \frac{\max\big(\big({z}_0^{(i)}\big)^2, \phi_i^2\big)}{{2(1-\beta_2)}}\Bigg)
    \\
    & \leq \frac{\max\big(\big({z}_0^{(i)}\big)^2, \phi_i^2\big)}{{(1-\beta_2)}}
    \\
    \label{eq:103-feb3}
    & \leq \frac{\max(9 B^2, \phi^2) \beta_2^{k}}{(1-\beta_2)}.
\end{flalign}
\Cref{eq:101-feb3} follows because $2(1-\beta_2) \leq 1$ (since $\beta_2 \geq \frac{1}{2}$). \Cref{eq:103-feb3} follows from the fact that $\max\Big(\big({z}_0^{(i)}\big)^2, \phi_i^2\Big) \leq \max\big(\|\bm{z}_0\|_\infty^2, \phi^2\big) \leq \max\big(9 B^2, \phi^2\big)$ for all $i \in [d]$. So \cref{eq:64-dec23} ensures \cref{eq:84-jan16}. Simplifying \cref{eq:64-dec23}, we get that till at least
\begin{equation}
    \tilde{K}_i = \frac{\log\Big(\frac{(z_0^{(i)})^2}{\alpha^2} + \frac{\max(({z}_0^{(i)})^2, \phi_i^2)}{2(1-\beta_2) \alpha^2}\Big)}{\log \frac{1}{\beta_2}}
\end{equation}
iterations, $\widehat{v}_{k}^{(i)} + \max\Big(\big({z}_0^{(i)}\big)^2, \phi_i^2\Big) \delta_k^{2} \geq \alpha^2$.
\end{proof}

\begin{lemma}
    \label{lem5-jan3}
    In the setting of \Cref{thm3-diag-det}, $\min_{i \in [d]} |{z}_0^{(i)}| > \Big(\frac{2 \log ({1}/{p})}{1 + \frac{\log ({1}/{p})}{d}}\Big)\frac{B}{d}$ with a probability of at least $p$.
\end{lemma}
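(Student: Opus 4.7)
My plan is to (i) obtain a per-coordinate small-ball estimate using the uniform density, (ii) combine the coordinates by independence, and (iii) convert the resulting product to an exponential form via a standard logarithm inequality, then invert to read off the claimed threshold.

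For step (i), I would observe that $z_0^{(i)} = x_0^{(i)} - x_{*}^{(i)}$ is uniformly distributed on an interval of length $4B$ (a translate of $[-2B, 2B]$) with density $\tfrac{1}{4B}$ on its support. Because $[-s, s]$ intersects any such support in a set of length at most $2s$,
\begin{equation*}
\mathbb{P}\big(|z_0^{(i)}| \leq s\big) \;\leq\; \frac{2s}{4B} \;=\; \frac{s}{2B} \qquad \text{for all } s \in [0, 2B],
\end{equation*}
uniformly in $x_{*}^{(i)} \in [-B, B]$, and hence $\mathbb{P}(|z_0^{(i)}| > s) \geq 1 - s/(2B)$. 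For step (ii), independence of the coordinates of $\bm{x}_0$ (and thus of $\bm{z}_0$) gives
\begin{equation*}
\mathbb{P}\Big(\min_{i \in [d]} |z_0^{(i)}| > s\Big) \;\geq\; \Big(1 - \frac{s}{2B}\Big)^{d}.
\end{equation*}

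For step (iii), I would substitute $s = 2 t B / d$ with $t \in [0, d)$ (which keeps $s \leq 2B$ and preserves validity of the small-ball bound), reducing the right-hand side to $(1 - t/d)^{d}$. Applying the elementary inequality $\log(1-x) \geq -x/(1-x)$ for $x \in (0,1)$ (a rearrangement of $e^{x} \geq 1+x$) and exponentiating gives
\begin{equation*}
\Big(1 - \frac{t}{d}\Big)^{d} \;\geq\; \exp\!\Big(-\frac{t}{1 - t/d}\Big).
\end{equation*}
Finally, choose $t$ so that the right side equals $p$: solving $t/(1 - t/d) = \log(1/p)$ yields $t^{*} = \log(1/p)/\big(1 + \log(1/p)/d\big)$, and substituting $t^{*}$ back into $s = 2 t B / d$ delivers exactly the threshold $\big(\tfrac{2 \log(1/p)}{1 + \log(1/p)/d}\big) \tfrac{B}{d}$ in the lemma.

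\textbf{Main obstacle.} There is no serious obstacle; the whole argument is elementary. The only sanity check is that the chosen $t^{*}$ lies in the valid range $[0, d)$ for every $p \in (0,1)$, which is immediate from $\log(1/p) < d + \log(1/p)$, giving $t^{*} < d$ and hence the corresponding $s < 2B$. I also note that this computation simultaneously produces the general tail bound $\mathbb{P}\big(\min_i |\widehat{g}_0^{(i)}| > 2tB/d\big) \geq \exp(-t/(1 - t/d))$ for $t \in [0, d)$ referenced as equation~(86-jan8) in the paper's text (specialized to the diagonal case, where $\bm{T} = \bm{I}$ and $\widehat{\bm{g}}_0 = \bm{z}_0$).
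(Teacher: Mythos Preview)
Your proposal is correct and follows essentially the same approach as the paper's proof: a per-coordinate uniform small-ball bound, independence to get $(1-t/d)^d$, the inequality $\log(1-x)\ge -x/(1-x)$ (the paper derives this via the Taylor series $-\log(1-x)=\sum_{j\ge1}x^j/j\le\sum_{j\ge1}x^j$), and then solving for $t$. Your use of the inequality $\mathbb{P}(|z_0^{(i)}|\le s)\le s/(2B)$ rather than equality is in fact slightly more careful than the paper, since the excluded interval around $x_*^{(i)}$ need not lie entirely inside $[-2B,2B]$.
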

\begin{proof}
    For any $i \in [d]$ and $t < d$, note that $\mathbb{P}\Big(|{z}_0^{(i)}| > \frac{2 t B}{d}\Big) = 1 - \frac{t}{d}$; this is because $|{z}_0^{(i)}| > \frac{2 t B}{d}$ is equivalent to ${x}_0^{(i)} \notin \big[x_{*}^{(i)} - \frac{2 t B}{d}, x_{*}^{(i)} + \frac{2 t B}{d}\big]$ which happens with a probability of $\frac{4B - \frac{4 t B}{d}}{4 B} = 1 - \frac{t}{d}$. Thus, $$\mathbb{P}\Big(\min_{i \in [d]} |{z}_0^{(i)}| > \frac{2 t B}{d}\Big) = \Big(1 - \frac{t}{d}\Big)^d = \exp\Big(d \log\Big(1 - \frac{t}{d}\Big)\Big).$$
    Now:
    \begin{equation*}
        d \log\Big(1 - \frac{t}{d}\Big) = - d \sum_{j=1}^\infty \frac{1}{j} \Big(\frac{t}{d}\Big)^j \geq - d \sum_{j=1}^\infty \Big(\frac{t}{d}\Big)^j = -\frac{t}{1 - \frac{t}{d}}.
    \end{equation*}
    Using this above, we get:
    \begin{equation}
        \label{eq:86-jan8}
        \mathbb{P}\Big(\min_{i \in [d]} |{z}_0^{(i)}| > \frac{2 t B}{d}\Big) \geq \exp\Bigg(-\frac{t}{1 - \frac{t}{d}}\Bigg).
    \end{equation}
    Choosing $t = \frac{\log ({1}/{p})}{1 + \frac{\log (1/p)}{d}}$ yields the desired result.
\end{proof}

\section{Proof of Theorem~\ref{lb-diag}}
\label{lb-diag-pf}
\begin{proof}
    In the proof of \Cref{thm3-diag}, recall that we defined $\bm{z}_{k} := \bm{x}_{k} - \bm{x}_{*}$. From \cref{eq:36-dec13-1} in the proof of \Cref{thm3-diag}, we have with $\delta_k = 0$:
    \begin{equation}
    z^{(i)}_{k+1} = z^{(i)}_{k} - \frac{\alpha z^{(i)}_{k}}{\sqrt{\widehat{v}^{(i)}_k}},
    \end{equation}
    where $\widehat{v}^{(i)}_k := \sum_{l=0}^{k} \beta_2^{k-l} \big({z}_l^{(i)}\big)^2$. Note that:
    $$\widehat{v}^{(i)}_k \geq (z_k^{(i)})^2.$$
    Thus:
    \begin{equation}
        |z^{(i)}_{k+1}| \geq |z^{(i)}_{k}| - \alpha.
    \end{equation}
    Solving the recursion above, we have for $k \geq 1$:
    \begin{equation}
        \label{eq:160-dec24}
        |z^{(i)}_{k}| \geq |z^{(i)}_{0}| - k \alpha.
    \end{equation}
    Let us consider the {best case}, when \cref{eq:160-dec24} holds with equality for all $i \in [d]$. Now, in order to achieve $|z^{(i)}_{K_i}| \leq \min_{j \in [d]} |z^{(j)}_{0}|$ for some $K_i \geq 1$ $\forall$ $i \in [d]$, we should set $\alpha = \min_{j \in [d]} |z^{(j)}_{0}|$ and in that case:
    \begin{equation}
       \max_{i \in [d]} K_i = \Theta\Bigg(\frac{\max_{j \in [d]} |z^{(j)}_{0}|}{\min_{j \in [d]} |z^{(j)}_{0}|}\Bigg) = \Theta\Bigg(\frac{\|\bm{z}_{0}\|_\infty}{\min_{j \in [d]} |z^{(j)}_{0}|}\Bigg).
    \end{equation}
    So in other words, $\|\bm{z}_{K'}\|_\infty \leq \min_{j \in [d]} |z^{(j)}_{0}|$ requires at least $K' = \Omega\Big(\frac{\|\bm{z}_{0}\|_\infty}{\min_{j \in [d]} |z^{(j)}_{0}|}\Big)$ iterations. Further, in the best case, $\|\bm{z}_{K'}\|_\infty \leq \min_{j \in [d]} |z^{(j)}_{0}|$ can imply $f(\bm{x}_{K'}) \leq \frac{1}{2}\big(\min_{j \in [d]} |z^{(j)}_{0}|\big)^2 = \frac{1}{2} \big(\min_{j \in [d]} |x^{(j)}_{0} - x^{(j)}_{*}|\big)^2$.
    \\
    Let us now bound $K' = \Omega\Big(\frac{\|\bm{z}_{0}\|_\infty}{\min_{j \in [d]} |z^{(j)}_{0}|}\Big)$ with constant probability. To that end, note that $\frac{\|\bm{z}_{0}\|_\infty}{\min_{j \in [d]} |z^{(j)}_{0}|} \geq  \frac{|z^{(d)}_{0}|}{\min_{j \in [d]} |z^{(j)}_{0}|}$. Next, 
    \begin{flalign}
        \nonumber
        \mathbb{P}\Big(|z^{(d)}_{0}| \geq 2B, \min_{j \in [d]} |z^{(j)}_{0}| \leq \frac{B}{d}\Big) & = \mathbb{P}\Big(|z^{(d)}_{0}| \geq 2B, \min_{j \in [d-1]} |z^{(j)}_{0}| \leq \frac{B}{d}\Big) 
        \\
        \label{eq:93-jan5}
        & = \mathbb{P}\Big(|z^{(d)}_{0}| \geq 2B\Big) \mathbb{P}\Big(\min_{j \in [d-1]} |z^{(j)}_{0}| \leq \frac{B}{d}\Big).
    \end{flalign}
    Similar to \Cref{lem5-jan3}, we have $\mathbb{P}\big(|z^{(d)}_{0}| \geq 2B\big) = \frac{1}{2}$ and $\mathbb{P}\big(\min_{j \in [d-1]} |z^{(j)}_{0}| \leq \frac{B}{d}\big) = 1 - \big(1 - \frac{1}{2 d}\big)^{d-1} \geq 1 - \exp\big(-\frac{1}{2}\big(1 - \frac{1}{d}\big)\big)$. Plugging this into \cref{eq:93-jan5} gives us:
    \begin{equation}
        \mathbb{P}\Big(|z^{(d)}_{0}| \geq 2B, \min_{j \in [d]} |z^{(j)}_{0}| \leq \frac{B}{d}\Big) \geq \frac{1}{2}\Big(1 - \exp\Big(-\frac{1}{2}\Big(1 - \frac{1}{d}\Big)\Big)\Big).
    \end{equation}
    Thus, $\frac{\|\bm{z}_{0}\|_\infty}{\min_{j \in [d]} |z^{(j)}_{0}|} \geq  \frac{|z^{(d)}_{0}|}{\min_{j \in [d]} |z^{(j)}_{0}|} \geq 2d$ with a probability of at least $\frac{1}{2}\Big(1 - \exp\big(-\frac{1}{2}\big(1 - \frac{1}{d}\big)\big)\Big)$. Hence, $K' = \Omega\Big(\frac{\|\bm{z}_{0}\|_\infty}{\min_{j \in [d]} |z^{(j)}_{0}|}\Big) = \Omega(d)$ with constant probability. Finally, note that this analysis holds for any diagonal $\bm{Q}$.
\end{proof}

\section{Proof of Theorem~\ref{prop:bar-kappa}}
\label{prop:bar-kappa-pf}
The proof relies on Gershgorin's theorem which we state next.
\begin{theorem}[\textbf{Gershgorin's Theorem}]
    \label{thm-gershgorin}
    Suppose $\bm{M} \in \mathbb{C}^{n \times n}$. For each $i \in [n]$, define $D_i := \Big\{z \in \mathbb{C}: \big|z - M[i,i]\big| \leq \sum_{j \neq i} \big|M[i,j]\big|\Big\}$. Then each eigenvalue of $\bm{M}$ lies within one of the $D_i$'s.
\end{theorem}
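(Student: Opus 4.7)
The plan is to prove Gershgorin's theorem directly by selecting, for each eigenvalue $\lambda$, a specific coordinate of the associated eigenvector that will be used to pin $\lambda$ inside one of the disks $D_i$. Let $\bm{v} \in \mathbb{C}^n \setminus \{\bm{0}\}$ be an eigenvector corresponding to $\lambda$, so $\bm{M} \bm{v} = \lambda \bm{v}$. Let $i^{*} \in [n]$ be an index achieving the maximum modulus component of $\bm{v}$, i.e., $|v_{i^{*}}| = \max_{j \in [n]} |v_j|$. Since $\bm{v} \neq \bm{0}$, we have $|v_{i^{*}}| > 0$, so we may divide by it later.

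The key step is to write out the $i^{*}$-th coordinate of the eigenvalue equation and isolate the diagonal contribution. Specifically, from $\sum_{j \in [n]} M[i^{*}, j] v_j = \lambda v_{i^{*}}$, I would rearrange to get $(\lambda - M[i^{*}, i^{*}]) v_{i^{*}} = \sum_{j \neq i^{*}} M[i^{*}, j] v_j$. Taking moduli on both sides and using the triangle inequality along with the defining property $|v_j| \leq |v_{i^{*}}|$ for all $j \in [n]$ gives
\begin{equation*}
|\lambda - M[i^{*}, i^{*}]| \, |v_{i^{*}}| \leq \sum_{j \neq i^{*}} |M[i^{*}, j]| \, |v_j| \leq |v_{i^{*}}| \sum_{j \neq i^{*}} |M[i^{*}, j]|.
\end{equation*}
Dividing through by $|v_{i^{*}}| > 0$ yields $|\lambda - M[i^{*}, i^{*}]| \leq \sum_{j \neq i^{*}} |M[i^{*}, j]|$, which by definition means $\lambda \in D_{i^{*}}$. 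Since $\lambda$ was an arbitrary eigenvalue of $\bm{M}$, each eigenvalue lies in some $D_i$, completing the proof.

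There is no serious obstacle here; the only subtlety is the selection of the index $i^{*}$ at which $|v_{i^{*}}|$ is maximal, which is exactly what makes the triangle-inequality bound tight enough to absorb the off-diagonal sum into $|v_{i^{*}}|$ itself. Without this choice of index, one cannot factor $|v_{i^{*}}|$ out of the right-hand side and divide it off, so it is essential to emphasize that $|v_{i^{*}}| > 0$ guarantees the division is valid and that the maximality bound $|v_j| \leq |v_{i^{*}}|$ holds for every $j$.
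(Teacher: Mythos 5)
Your proof is correct and complete: choosing the index $i^{*}$ of the maximal-modulus entry of the eigenvector, isolating the diagonal term in the $i^{*}$-th row of $\bm{M}\bm{v} = \lambda \bm{v}$, applying the triangle inequality together with $|v_j| \leq |v_{i^{*}}|$, and dividing by $|v_{i^{*}}| > 0$ is precisely the standard argument for Gershgorin's theorem. Note that the paper itself gives no proof of this statement --- it is quoted as a classical result and used as a black box in the proof of Theorem~\ref{prop:bar-kappa} --- so there is nothing to compare against; your argument is the canonical one and fills that gap correctly.
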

\noindent Now we are ready to prove \Cref{prop:bar-kappa}.
\begin{proof}
    Let $\bm{R} = \bm{D}^{-1} \bm{Q}$. Note that $\bm{R} = \bm{D}^{-1/2} \big(\bm{D}^{-1/2} \bm{Q} \bm{D}^{-1/2}\big) \bm{D}^{1/2}$. Thus, $\bm{R}$ is similar to $\bm{D}^{-1/2} \bm{Q} \bm{D}^{-1/2}$ and so they have the same eigenvalues. Hence, $\mu_1 = \lambda_\text{min}(\bm{R})$ and $\mu_2 = \lambda_\text{max}(\bm{R})$. Note that $R[i,j] = \frac{Q[i,j]}{q_i}$ and in particular, $R[i,i]=1$. Now since $\bm{Q}$ is $\nu$-diagonally-dominant, we have:
    \begin{equation}
        \label{eq:jan1-101}
        \sum_{j \neq i} |R[i,j]| = \sum_{j \neq i} \frac{|Q[i,j]|}{q_i} \leq \nu,
    \end{equation}
    for all $i \in [d]$. Applying Gershgorin's theorem (\Cref{thm-gershgorin}) on $\bm{R}$ and making use of \cref{eq:jan1-101}, we conclude that:
    \begin{equation}
        \mu_1 = \lambda_\text{max}(\bm{R}) \leq (1 + \nu) \text{ and } \mu_2 = \lambda_\text{min}(\bm{R}) \geq (1 - \nu).
    \end{equation}
    Therefore, $$\overline{\kappa} = \frac{\mu_2}{\mu_1} \leq \frac{1 + \nu}{1 - \nu}.$$
    Next, we will apply Gershgorin's theorem on $\bm{Q}$. Specifically, we get:
    \begin{equation}
        \lambda_\text{max}(\bm{Q}) \leq (1 + \nu) q_\text{max} \text{ and } \lambda_\text{min}(\bm{Q}) \geq (1 - \nu) q_\text{min}.
    \end{equation}
    Thus,
    \begin{equation}
        \kappa = \frac{\lambda_\text{max}(\bm{Q})}{\lambda_\text{min}(\bm{Q})} \leq \Big(\frac{1+\nu}{1-\nu}\Big)\Big(\frac{q_\text{max}}{q_\text{min}}\Big) \leq \Big(\frac{1+\nu}{1-\nu}\Big) \kappa_\text{diag}.
    \end{equation}
\end{proof}

\section{Proof of Proposition~\ref{bad-kappa-bar}}
\label{bad-kappa-bar-pf}
\begin{proof}
For 
\begin{equation*}
{\bm{Q}} = \begin{bmatrix}
            b & b-1 \\
            b-1 & b 
            \end{bmatrix},
\end{equation*}
we have $\lambda_\text{max}(\bm{Q}) = 2b-1$ and $\lambda_\text{min}(\bm{Q}) = 1$. So for $b \to \infty$, $\kappa = \mathcal{O}(b)$. Also, it can be verified that:
\begin{equation}
    \bm{D}^{-1/2} \bm{Q} \bm{D}^{-1/2} = \frac{1}{b} \bm{Q}.
\end{equation}
Thus:
\begin{equation}
    \overline{\kappa} = \frac{\lambda_\text{max}(\bm{D}^{-1/2} \bm{Q} \bm{D}^{-1/2})}{\lambda_\text{min}(\bm{D}^{-1/2} \bm{Q} \bm{D}^{-1/2})} = \frac{\lambda_\text{max}(\bm{Q})/b}{\lambda_\text{min}(\bm{Q})/b} = \kappa.
\end{equation}
\end{proof}

\section{Proof of Theorem~\ref{thm-fixed-pt}}
\label{pf-thm-fixed-pt}
\begin{proof}
From \cref{eq:10-nov13}, we have:
\begin{equation}
    \label{eq:47-dec20}
    \bm{x}_{k+1} = \bm{x}_k - \alpha {\bm{V}}_k {\bm{g}}_k,
\end{equation}
where ${\bm{g}}_k = \bm{Q} (\bm{x}_k - \bm{x}_{*})$ is the gradient of $f$ at $\bm{x}_k$, $\bm{V}_k$ is a diagonal matrix whose $i^{\text{th}}$ diagonal element is $\frac{1}{\sqrt{{v}_{k}^{(i)} + \max(({g}_0^{(i)})^2, \phi^2) \delta_k^{2}}}$  with: 
\begin{equation}
    \label{eq:52-nov18}
    {v}_k^{(i)} =  \beta_2 {v}_{k-1}^{(i)} + \big({{g}_k^{(i)}}\big)^2.
\end{equation}
Moreover, let us define $r_i := \max(|{g}_0^{(i)}|, \phi)$ for ease of notation. Then the $i^{\text{th}}$ diagonal element of $\bm{V}_k$ is $\frac{1}{\sqrt{{v}_{k}^{(i)} + r_i^2 \delta_k^{2}}}$.
\\
\\
Subtracting $\bm{x}_{*}$ from both sides of \cref{eq:47-dec20} and pre-multiplying by $\bm{Q}$ on both sides, we get:
\begin{equation}
    \bm{g}_{k+1} = \bm{g}_k - \alpha \bm{Q} {\bm{V}}_k {\bm{g}}_k = \Big(\bm{\text{I}} - \alpha \bm{Q} {\bm{V}}_k \Big) {\bm{g}}_k.
\end{equation}
Define $h({\bm{g}}_k) := f({\bm{Q}}^{-1} {\bm{g}}_k + \bm{x}_{*}) = f(\bm{x}_k)$.  So if $\overline{\bm{g}}$ is a fixed point for $h(.)$, then $\overline{\bm{x}} = {\bm{Q}}^{-1} \overline{\bm{g}} + \bm{x}_{*}$ is a corresponding fixed point for $f(.)$. Note that: $$h({\bm{g}}_k) = \frac{1}{2} {\bm{g}}_k^{\top} ({\bm{Q}}^{-1})^{\top} \bm{Q} ({\bm{Q}}^{-1}) {\bm{g}}_k = \frac{1}{2} {\bm{g}}_k^{\top} {\bm{Q}}^{-1} {\bm{g}}_k.$$ Now if a fixed point exists, it must be true that:
\small
\begin{equation}
    \label{eq:75-oct6}
    \lim_{k \to \infty} (h({\bm{g}}_k) - h({\bm{g}}_{k+1})) = 0 \implies \lim_{k \to \infty} \frac{1}{2} {\bm{g}}_k^{\top} \underbrace{\Big({\bm{Q}}^{-1} - (\bm{\text{I}} - \alpha {\bm{Q}} {\bm{V}}_k)^\top {\bm{Q}}^{-1} (\bm{\text{I}} - \alpha {\bm{Q}} {\bm{V}}_k)\Big)}_{:=\bm{Z}_k} {\bm{g}}_k = 0.
\end{equation}
\normalsize
So we must have $\lim_{k \to \infty} {\bm{g}}_k^{\top} \bm{Z}_k {\bm{g}}_k = 0$. A trivial solution of the above equation is $\lim_{k \to \infty} {\bm{g}}_k = \vec{\bm{0}}$. Thus, $\overline{\bm{g}} = \vec{\bm{0}}$ is a fixed point for $h(.)$; this implies $\overline{\bm{x}} = \bm{x}_{*}$ is a fixed point for $f(.)$. Let us now investigate the non-trivial case of $\lim_{k \to \infty} {\bm{g}}_k \neq \vec{\bm{0}}$. To that end, let us first simplify $\bm{Z}_k$. We have:
\begin{flalign}
    \label{eq:76-oct6}
    \bm{Z}_k &= \alpha(2 {\bm{V}}_k - \alpha {\bm{V}}_k{\bm{Q}}{\bm{V}}_k).
\end{flalign}
Using this in \cref{eq:75-oct6}, we get:
\begin{flalign}
    \label{eq:78-oct6}
    \lim_{k \to \infty} {\bm{g}}_k^{\top} {\bm{V}}_k (2 {\bm{V}}_k^{-1} - \alpha \bm{Q}) {\bm{V}}_k {\bm{g}}_k = 0.
\end{flalign}
Let $\bm{u}_k = {\bm{V}}_k {\bm{g}}_k$. Then, \cref{eq:78-oct6} is equivalent to:
\begin{equation}
    \label{eq:79-oct6}
    2 \lim_{k \to \infty} \bm{u}_k^\top {\bm{V}}_k^{-1} \bm{u}_k = \alpha \lim_{k \to \infty} \bm{u}_k^\top \bm{Q} \bm{u}_k.
\end{equation}
Here, we have used the fact that $\bm{u}_k^\top = {\bm{g}}_k^\top {\bm{V}}_k$ because ${\bm{V}}_k$ is diagonal and hence symmetric.

{Suppose $\lim_{k \to \infty} |{{g}}_k^{(i)}| = \overline{g}^{(i)} \geq 0$. Also, let $\overline{\bm{g}} = [\overline{g}^{(1)}, \ldots, \overline{g}^{(d)}]$.} From \cref{eq:52-nov18}, recall that ${v}_k^{(i)} =  \beta_2 {v}_{k-1}^{(i)} + \big({{g}_k^{(i)}}\big)^2$. Thus, $\lim_{k \to \infty} {v}_k^{(i)} = \frac{(\overline{g}^{(i)})^2}{1 - \beta_2}$. Further, since $\lim_{k \to \infty} \delta_k = \delta$, we have:
\begin{equation}
    \lim_{k \to \infty} {\bm{V}}_k = \begin{bmatrix}
                \frac{1}{\sqrt{\frac{(\overline{g}^{(1)})^2}{1 - \beta_2} + r_1^2 \delta^2}} & 0 & \ldots & 0\\
                0 & \frac{1}{\sqrt{\frac{(\overline{g}^{(2)})^2}{1-\beta_2} + r_2^2 \delta^2}} & \ldots & 0\\
                \vdots & \vdots & \ddots & \vdots\\
                0 & 0 & \ldots & \frac{1}{\sqrt{\frac{(\overline{g}^{(d)})^2}{1-\beta_2} + r_d^2 \delta^2}}
                \end{bmatrix}.
\end{equation}
Thus:
\begin{equation}
    \lim_{k \to \infty} |{u}_k^{(i)}| = \frac{\overline{g}^{(i)}}{\sqrt{\frac{(\overline{g}^{(i)})^2}{1-\beta_2} + r_i^2 \delta^2}}.
\end{equation}
{
Also note that:
\begin{equation*}
    \lim_{k \to \infty} {\bm{V}}_k^{-1} = \begin{bmatrix}
            {\sqrt{\frac{(\overline{g}^{(1)})^2}{1-\beta_2} + r_1^2 \delta^2}} & 0 & \ldots & 0\\
                0 & {\sqrt{\frac{(\overline{g}^{(2)})^2}{1-\beta_2} + r_2^2 \delta^2}} & \ldots & 0\\
                \vdots & \vdots & \ddots & \vdots\\
                0 & 0 & \ldots & {\sqrt{\frac{(\overline{g}^{(d)})^2}{1-\beta_2} + r_d^2 \delta^2}}
                \end{bmatrix}.
\end{equation*}
}
Thus the LHS of \cref{eq:79-oct6} is:
\begin{flalign}
    2 \lim_{k \to \infty} \bm{u}_k^\top {\bm{V}}_k^{-1} \bm{u}_k & = 2 \lim_{k \to \infty} \sum_{i=1}^d |{u}_k^{(i)}|^2 {\sqrt{\frac{(\overline{g}^{(i)})^2}{1-\beta_2} + r_i^2 \delta^2}}
    \\
    \label{eq:84-oct6}
    & = 2 \sum_{i=1}^d \frac{(\overline{g}^{(i)})^2}{\sqrt{\frac{(\overline{g}^{(i)})^2}{1-\beta_2} + r_i^2 \delta^2}}.
\end{flalign}
But the RHS of \cref{eq:79-oct6} satisfies the lower bound:
\begin{equation}
    \label{eq:85-oct6}
    \alpha \lim_{k \to \infty} \bm{u}_k^\top \bm{Q} \bm{u}_k \geq \alpha \lambda_{\min}(\bm{Q}) \lim_{k \to \infty} \|\bm{u}_k\|_2^2 = \alpha \sum_{i=1}^d \frac{(\overline{g}^{(i)})^2}{{\frac{(\overline{g}^{(i)})^2}{1-\beta_2} + r_i^2 \delta^2}},
\end{equation}
where we have used the fact that $\lambda_{\min}(\bm{Q}) = 1$. Combining the results of \cref{eq:84-oct6} and \cref{eq:85-oct6}, we get:
\begin{equation}
    \label{eq:87-oct6}
    \sum_{i=1}^d \frac{(\overline{g}^{(i)})^2}{\sqrt{\frac{(\overline{g}^{(i)})^2}{1-\beta_2} + r_i^2 \delta^2}} \geq \frac{\alpha}{2} \sum_{i=1}^d \frac{(\overline{g}^{(i)})^2}{{\frac{(\overline{g}^{(i)})^2}{1-\beta_2} + r_i^2 \delta^2}}. 
\end{equation}
If $\delta = 0$, \cref{eq:87-oct6} reduces to:
\begin{equation}
    \sum_{i=1}^d |{\overline{g}^{(i)}}| \geq \frac{\alpha d \sqrt{1 - \beta_2}}{2} \implies \|\overline{\bm{g}}\|_1 \geq \frac{\alpha d \sqrt{1 - \beta_2}}{2}.
\end{equation}
Let us now consider the case of $\delta > 0$. Note that:
\begin{equation}
    \sum_{i=1}^d \frac{(\overline{g}^{(i)})^2}{\sqrt{\frac{(\overline{g}^{(i)})^2}{1 - \beta_2} + r_i^2 \delta^2}} \leq \Bigg(\max_{j \in [d]} \sqrt{\frac{(\overline{g}^{(j)})^2}{1-\beta_2} + r_j^2 \delta^2} \Bigg) \Bigg(\sum_{i=1}^d \frac{(\overline{g}^{(i)})^2}{{\frac{(\overline{g}^{(i)})^2}{1-\beta_2} + r_i^2 \delta^2}}\Bigg).
\end{equation}
Using this in \cref{eq:87-oct6}, we get:
\begin{equation}
    \max_{j \in [d]} \sqrt{\frac{(\overline{g}^{(j)})^2}{1-\beta_2} + r_j^2 \delta^2} \geq \frac{\alpha}{2}.
\end{equation}
Let $\overline{R} := \max_{j \in [d]} r_j$. Now if $\delta < \frac{\alpha}{2 \overline{R}}$, then:
\begin{equation}
    \|\overline{\bm{g}}\|_\infty \geq \sqrt{(1 - \beta_2)\Big(\frac{\alpha^2}{4} - \overline{R}^2 \delta^2\Big)}. 
\end{equation}
{
Also, the RHS of \cref{eq:79-oct6} satisfies the upper bound:
\begin{equation}
    \label{eq:97-oct14}
    \alpha \lim_{k \to \infty} \bm{u}_k^\top \bm{Q} \bm{u}_k \leq \alpha \lambda_{\max}(\bm{Q}) \lim_{k \to \infty} \|\bm{u}_k\|_2^2 = \kappa \alpha \sum_{i=1}^d \frac{(\overline{g}^{(i)})^2}{{\frac{(\overline{g}^{(i)})^2}{1-\beta_2} + r_i^2 \delta^2}},
\end{equation}
where we have used the fact that $\lambda_{\max}(\bm{Q}) = \kappa$. Combining the results of \cref{eq:84-oct6} and \cref{eq:97-oct14}, we get:
\begin{equation}
    \label{eq:99-oct14}
    \sum_{i=1}^d \frac{(\overline{g}^{(i)})^2}{\sqrt{\frac{(\overline{g}^{(i)})^2}{1-\beta_2} + r_i^2 \delta^2}} \leq \frac{\kappa \alpha}{2} \sum_{i=1}^d \frac{(\overline{g}^{(i)})^2}{{\frac{(\overline{g}^{(i)})^2}{1-\beta_2} + r_i^2 \delta^2}}.
\end{equation}
Further, note that:
\begin{equation}
    \sum_{i=1}^d \frac{(\overline{g}^{(i)})^2}{\sqrt{\frac{(\overline{g}^{(i)})^2}{1-\beta_2} + r_i^2 \delta^2}} \geq \Bigg(\min_{j \in [d]} \sqrt{\frac{(\overline{g}^{(j)})^2}{1-\beta_2} + r_j^2 \delta^2} \Bigg) \Bigg(\sum_{i=1}^d \frac{(\overline{g}^{(i)})^2}{{\frac{(\overline{g}^{(i)})^2}{1-\beta_2} + r_i^2 \delta^2}}\Bigg).
\end{equation}
Using this in \cref{eq:99-oct14}, we get:
\begin{equation}
    \min_{j \in [d]} \sqrt{\frac{(\overline{g}^{(j)})^2}{1-\beta_2} + r_j^2 \delta^2} \leq \frac{\kappa \alpha}{2}.
\end{equation}
Let $\overline{r} := \min_{j \in [d]} r_j$. Now if $\delta > \frac{\kappa \alpha}{2 \overline{r}}$, the above equation is not possible and hence the case of $\lim_{k \to \infty} {\bm{g}}_k \neq \vec{\bm{0}}$ is not possible.
}
\end{proof}

\section{Detailed Version and Proof of Theorem~\ref{thm-adanorm-plc}}
\label{sec-smooth-plc-pf}
\begin{theorem}[\textbf{Adam: Per-Coordinate Smooth and PL}]
    \label{thm-adanorm-plc-full}
    Suppose our initialization is $\bm{x}_0$ and Assumptions \ref{asmp-3-jan19} and \ref{lower-bound} hold. Let $\Delta_0 := f(\bm{x}_0) - f^{*}$. Choose
    $$\alpha = \mathcal{O}\Bigg(\Bigg(\max\Bigg(\frac{1}{\kappa_{\textup{max},0}}, \frac{L_\textup{min}}{L_\textup{max}}\Bigg) \frac{\Delta_0^2}{\tilde{\mu} L_\textup{max}}\Bigg)^{1/4} \log^{-1/2}\Big(\frac{\Delta_0}{\varepsilon}\Big)\Bigg), \text{ } \beta_2 = 1 - \frac{\alpha^2 \tilde{\mu}}{16 \Delta_0},$$
    $$\delta_k = \frac{\beta_2^{{k}/{2}}}{\sqrt{2(1-\beta_2)}} \text{ and } \phi = \sqrt{2 L_\textup{min} \Delta_0}.$$
    Then $f(\bm{x}_{{K}^{*}}) - f^{*} \leq \varepsilon$ in 
    \begin{equation*}
        K^{*} = \mathcal{O}\Bigg(\sqrt{\frac{L_\textup{max}}{\tilde{\mu}}} \min \Bigg(\sqrt{\frac{L_\textup{max}}{L_\textup{min}}},\sqrt{{\kappa_{\textup{max},0}}}\Bigg) \log\Big(\frac{\Delta_0}{\varepsilon}\Big) {\log\Big(\log\Big(\frac{\Delta_0}{\varepsilon}\Big)\Big)}\Bigg)
    \end{equation*}
    iterations of {Adam}.
\end{theorem}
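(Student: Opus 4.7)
I mirror the two-stage analysis of \Cref{thm3-det}, now tracking $\Delta_k := f(\bm{x}_k) - f^{*}$ in place of $\|\bar{\bm{g}}_k\|_2$. Applying per-coordinate smoothness (\Cref{def-1-jan20}) to the Adam update with $w_k^{(i)} := v_k^{(i)} + \max((g_0^{(i)})^2,\phi^2)\delta_k^2$ gives the descent inequality
\begin{equation*}
\Delta_{k+1} \leq \Delta_k - \alpha \sum_{i=1}^d \frac{(g_k^{(i)})^2}{\sqrt{w_k^{(i)}}} + \frac{\alpha^2}{2}\sum_{i=1}^d \frac{L_i (g_k^{(i)})^2}{w_k^{(i)}}.
\end{equation*}
Under the continuous-descent analog $\alpha L_i \leq \sqrt{w_k^{(i)}}$ for all $i$ the second-order term is dominated, and the smoothness-dependent PL condition (\Cref{new-PLC}) applied via
\begin{equation*}
\sum_i \frac{(g_k^{(i)})^2}{\sqrt{w_k^{(i)}}} \geq \min_i \sqrt{\frac{2L_i}{w_k^{(i)}}} \sum_i \frac{(g_k^{(i)})^2}{\sqrt{2L_i}} \geq \frac{2\sqrt{\tilde{\mu}}\,\Delta_k}{\max_i\sqrt{w_k^{(i)}/L_i}}
\end{equation*}
yields the key per-iteration contraction $\Delta_{k+1} \leq \Delta_k\big(1 - \alpha\sqrt{\tilde{\mu}}/\max_i\sqrt{w_k^{(i)}/L_i}\big)$.

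\noindent\textbf{Two-stage control of $\max_i w_k^{(i)}/L_i$.} A one-dimensional descent lemma along the $i$-th axis gives the per-coordinate bound $(g_k^{(i)})^2 \leq 2L_i\Delta_k$; combined with $\phi = \sqrt{2L_\textup{min}\Delta_0}$ this yields $\max((g_0^{(i)})^2,\phi^2)/L_i \leq 2\Delta_0$ uniformly in $i$. Stage 1 uses only $\Delta_l \leq \Delta_0$ to bound $v_k^{(i)}/L_i \leq 2\Delta_0/(1-\beta_2)$, hence $\max_i w_k^{(i)}/L_i \leq \mathcal{O}(\Delta_0/(1-\beta_2))$, giving the loose geometric rate $\Delta_k \leq \gamma^k\Delta_0$ with $\gamma = 1 - \Theta(\alpha\sqrt{\tilde{\mu}(1-\beta_2)/\Delta_0})$. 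Calibrating $\beta_2 = 1 - \alpha^2\tilde{\mu}/(16\Delta_0)$ so that $\beta_2 - \gamma \geq 1-\beta_2$, Stage 2 uses Stage 1's decay to obtain $v_k^{(i)}/L_i \leq \mathcal{O}(\Delta_0\beta_2^{k+1}/(1-\beta_2))$, so with $\delta_k^2 = \beta_2^k/(2(1-\beta_2))$ one has $\max_i w_k^{(i)}/L_i \leq \mathcal{O}(\Delta_0\beta_2^k/(1-\beta_2))$. Plugging this back sharpens the per-iteration decay to
\begin{equation*}
\Delta_{k+1} \leq \Delta_k\exp\!\big(-\Theta(\alpha\sqrt{\tilde{\mu}(1-\beta_2)/\Delta_0})\cdot\beta_2^{-k/2}\big),
\end{equation*}
so that $K^{*} = \mathcal{O}(\log\log(\Delta_0/\varepsilon)/(1-\beta_2))$ iterations suffice to reach $\Delta_{K^{*}}\leq\varepsilon$.

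\noindent\textbf{Step-size window and the $\min(\sqrt{L_\textup{max}/L_\textup{min}},\sqrt{\kappa_{\textup{max},0}})$ dichotomy.} As in \Cref{lem1-oct1}, the continuous-descent condition $\alpha L_i \leq \sqrt{w_k^{(i)}}$ must persist throughout $k \leq K^{*}$, and is certified via two alternative lower bounds on $w_k^{(i)}$: the $\phi^2\delta_k^2$ contribution gives $w_k^{(i)} \geq L_\textup{min}\Delta_0\beta_2^k/(1-\beta_2)$, while \Cref{lower-bound} combined with $\delta_k^2 = \beta_2^k/(2(1-\beta_2))$ gives $w_k^{(i)} \geq L_i\Delta_0\beta_2^k/(\kappa_{\textup{max},0}(1-\beta_2))$. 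Keeping the better of the two forces $\alpha^2 L_i \leq \mathcal{O}(\Delta_0\beta_2^k/((1-\beta_2)\min(L_\textup{max}/L_\textup{min},\kappa_{\textup{max},0})))$; imposing $K^{*}\leq\tilde{K}$ as in \cref{eq:23-feb4} then dictates the stated choice of $\alpha$, and substituting into $K^{*} = \Theta(\Delta_0\log\log(\Delta_0/\varepsilon)/(\alpha^2\tilde{\mu}))$ produces the claimed complexity. The main obstacle is the dual role of the coordinate-dependent term $\max((g_0^{(i)})^2,\phi^2)$: it must simultaneously enable the $L_i$-normalization that sharpens Stage 2 and supply the two alternative lower bounds that generate the $\min$ dichotomy, mirroring Remarks~\ref{per-coordinate-eps-diag} and \ref{per-coordinate-eps-gen} in the quadratic case.
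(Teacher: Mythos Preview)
Your proposal is correct and follows essentially the same approach as the paper. The only cosmetic difference is that you track $\max_i w_k^{(i)}/L_i$ directly, whereas the paper first invokes the per-coordinate bound $(g_l^{(i)})^2 \leq 2L_i\Delta_l$ to write $w_k^{(i)} \leq 2L_i\big(\widehat{v}_k + \max(\bar{\Delta}_0^*,R)\delta_k^2\big)$ with $\widehat{v}_k := \sum_{l\leq k}\beta_2^{k-l}\Delta_l$, thereby reducing immediately to a single coordinate-free quantity before running the two-stage argument; the resulting contraction, the calibration $\beta_2 - \gamma = 1-\beta_2$, the $\tilde{K}$-window from the two lower bounds on $w_k^{(i)}$, and the final arithmetic are the same.
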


\begin{proof}
    For brevity, let $\bm{g}_k := \nabla f(\bm{x}_k)$. Since $f$ is $\{L_i\}_{i=1}^d$-smooth, we get using \Cref{lem11-jan18}:
    \begin{equation*}
        f(\bm{x}_{k+1}) \leq f(\bm{x}) - \alpha \sum_{i=1}^d \frac{(g_k^{(i)})^2}{\sqrt{v_k^{(i)} + \max\big(\big({g}_0^{(i)}\big)^2, \phi^2\big) \delta_{k}^{2}}} + \frac{\alpha^2}{2} \sum_{i=1}^d L_i \frac{(g_k^{(i)})^2}{{v_k^{(i)} + \max\big(\big({g}_0^{(i)}\big)^2, \phi^2\big) \delta_{k}^{2}}}.
    \end{equation*}
    Subtracting $f^{*}$ from both sides above, we get:
    \begin{multline}
        f(\bm{x}_{k+1}) - f^{*} \leq f(\bm{x}_k) - f^{*} \\ - \alpha \sum_{i=1}^d \frac{(g_k^{(i)})^2}{\sqrt{v_k^{(i)} + \max\big(\big({g}_0^{(i)}\big)^2, \phi^2\big) \delta_{k}^{2}}} \Bigg(1 - \frac{\alpha L_i}{2 \sqrt{v_k^{(i)} + \max\big(\big({g}_0^{(i)}\big)^2, \phi^2\big) \delta_{k}^{2}}}\Bigg).
    \end{multline}
    Now until $1 - \frac{\alpha L_i}{2 \sqrt{v_k^{(i)} + \max\big(\big({g}_0^{(i)}\big)^2, \phi^2\big) \delta_{k}^{2}}} \geq \frac{1}{2}$ for each $i \in [d]$, i.e.,
    \begin{equation}
        \label{eq:83-dec15-2}
        v_k^{(i)} + \max\big(\big({g}_0^{(i)}\big)^2, \phi^2\big) \delta_{k}^{2} \geq \alpha^2 L_i^2 \text{ for each } i \in [d],
    \end{equation}
    we have:
    \begin{flalign}
        \label{eq:84-dec15-old}
        f(\bm{x}_{k+1}) - f^{*} & \leq f(\bm{x}_k) - f^{*} - \frac{\alpha}{2} \sum_{i=1}^d \frac{(g_k^{(i)})^2}{\sqrt{v_k^{(i)} + \max\big(\big({g}_0^{(i)}\big)^2, \phi^2\big) \delta_{k}^{2}}}.
    \end{flalign}
    For ease of notation, let us define $\Delta_k := f(\bm{x}_k) - f^{*}$. With this, the above becomes:
    \begin{flalign}
        \label{eq:84-dec15}
        \Delta_{k+1} & \leq \Delta_k - \frac{\alpha}{2} \sum_{i=1}^d \frac{(g_k^{(i)})^2}{\sqrt{v_k^{(i)} + \max\big(\big({g}_0^{(i)}\big)^2, \phi^2\big) \delta_{k}^{2}}}.
    \end{flalign}
    Clearly, $v_k^{(i)} + \max\big(\big({g}_0^{(i)}\big)^2, \phi^2\big) \delta_{k}^{2} \geq \alpha^2 L_i^2$ holds for all $i \in [d]$ until at least: 
    \begin{equation}
        \label{eq:188-jan18}
        \max\big(\big({g}_0^{(i)}\big)^2, \phi^2\big) \delta_{k}^{2} \geq \alpha^2 L_i^2 \text{ } \forall \text{ } i \in [d].
    \end{equation}
    Next, recall that $v_k^{(i)} =  \sum_{l=0}^{k} \beta_2^{k-l} (g_l^{(i)})^2$. Also from \Cref{lem12-jan18}, we have $(g_l^{(i)})^2 \leq 2 L_i(f(\bm{x}_l) - f^{*}) = 2 L_i \Delta_l$. Using this, we get:
    \begin{equation}
        v_k^{(i)} \leq 2 L_i \sum_{l=0}^{k} \beta_2^{k-l} \Delta_l.
    \end{equation}
    From \Cref{lem12-jan18}, we also have $(g_0^{(i)})^2 \leq 2 L_i(f(\bm{x}_0) - \bar{f}^{*}_i(\bm{x}_0))$ where $\bar{f}^{*}_i(\bm{x}_0) = {\min}_{h \in \mathbb{R}} \text{ } f(\bm{x}_0 + h \bm{e}_i)$. Let $\bar{\Delta}_{i, 0} := f(\bm{x}_0) - \bar{f}^{*}_i(\bm{x}_0)$; note that $\bar{\Delta}_{i, 0} \leq \Delta_0$ for all $i \in [d]$. Then, 
    \begin{equation}
        \label{eq:191-jan19}
        (g_0^{(i)})^2 \leq 2 L_i \bar{\Delta}_{i, 0}.
    \end{equation}
    Plugging all of this into \cref{eq:84-dec15}, we get:
    \begin{equation}
        \Delta_{k+1} \leq \Delta_{k} - \sum_{i=1}^d \Bigg(\frac{\alpha}{2 \sqrt{\sum_{l=0}^{k} \beta_2^{k-l} \Delta_l + \max\big(\bar{\Delta}_{i, 0}, \frac{\phi^2}{2 L_i}\big) \delta_{k}^{2}}}\Bigg) \frac{(g_k^{(i)})^2}{\sqrt{2 L_i}}.
    \end{equation}
    Let $R = \max_{i \in [d]}\frac{\phi^2}{2 L_i} = \frac{\phi^2}{2 L_\text{min}}$ and $\bar{\Delta}^{*}_{0} = \max_{i \in [d]} \bar{\Delta}_{i, 0}$. Then the above can be bounded as:
    \begin{equation}
        \Delta_{k+1} \leq \Delta_{k} - \Bigg(\frac{\alpha}{2 \sqrt{\sum_{l=0}^{k} \beta_2^{k-l} \Delta_{l} + \max\big(\bar{\Delta}^{*}_{0}, R\big) \delta_{k}^{2}}}\Bigg) \sum_{i=1}^d \frac{(g_k^{(i)})^2}{\sqrt{2 L_i}}.
    \end{equation}
    Next, using the fact that $f$ satisfies the smoothness-dependent $\tilde{\mu}$-global-PL condition above, we get:
    \begin{flalign}
        \label{eq:86-dec15}
        \Delta_{k+1} \leq \Bigg(1 - \frac{\alpha \sqrt{({\tilde{\mu}}/{2})}}{\sqrt{{\sum_{l=0}^{k} \beta_2^{k-l} \Delta_l + \max\big(\bar{\Delta}^{*}_{0}, R\big) \delta_{k}^{2}}}}\Bigg)\Delta_k.
    \end{flalign}
    Now note that we will have descent in each iteration or {continuous descent} (similar to Theorems \ref{thm3-diag} and \ref{thm3}) until at least $\sqrt{\sum_{l=0}^{k} \beta_2^{k-l} \Delta_l + \max\big(\bar{\Delta}^{*}_{0}, R\big) \delta_{k}^{2}} \geq \alpha \sqrt{\frac{\tilde{\mu}}{2}}$. This will hold until at least:
    \begin{equation}
        \label{eq:89-dec15}
        \max\big(\bar{\Delta}^{*}_{0}, R\big) \delta_{k}^2 \geq \alpha^2 \Big(\frac{\tilde{\mu}}{2}\Big).
    \end{equation}
    But recall our earlier constraint in \cref{eq:188-jan18}, i.e., $\max\big(\big({g}_0^{(i)}\big)^2, \phi^2\big) \delta_{k}^{2} \geq \alpha^2 L_i^2$ for all $i \in [d]$. We shall show that \cref{eq:188-jan18} is stricter than the condition in \cref{eq:89-dec15}. Note that \cref{eq:188-jan18} can be rewritten as:
    \begin{equation}
        \label{eq:197-jan19}
        \alpha^2 \leq \max\Bigg(\frac{\big({g}_0^{(i)}\big)^2}{L_i^2}, \frac{\phi^2}{L_i^2}\Bigg) \delta_{k}^{2} \text{ } \forall \text{ } i \in [d].
    \end{equation}
    On the other hand, recalling that $R = \frac{\phi^2}{2 L_\text{min}}$, \cref{eq:89-dec15} can be rewritten as:
    \begin{equation}
        \label{eq:198-jan19}
        \alpha^2 \leq \Bigg(\frac{2 \bar{\Delta}^{*}_{0}}{\tilde{\mu}}, \frac{\phi^2}{\tilde{\mu} L_\text{min}}\Bigg) \delta_{k}^{2}.
    \end{equation}
    Using \cref{eq:191-jan19}, we get for any $i \in [d]$:
    \begin{equation}
        \label{eq:199-jan19}
        \frac{\big({g}_0^{(i)}\big)^2}{L_i^2} \leq \frac{2 \bar{\Delta}_{i, 0}}{L_i} \leq \frac{2 \bar{\Delta}^{*}_{0}}{\tilde{\mu}},
    \end{equation}
    where the last inequality follows because $\bar{\Delta}^{*}_{0} = \max_{i \in [d]} \bar{\Delta}_{i, 0}$ and $\tilde{\mu} \leq L_i$ $\forall$ $i \in [d]$. Also, 
    \begin{equation}
        \frac{\phi^2}{L_i^2} \leq \frac{\phi^2}{\tilde{\mu} L_\text{min}},
    \end{equation}
    because $\tilde{\mu} \leq L_\text{min} \leq L_i$ $\forall$ $i \in [d]$. Thus, \cref{eq:197-jan19} is stricter than \cref{eq:198-jan19} implying that \cref{eq:188-jan18} is stricter than \cref{eq:89-dec15}.
    \\
    \\
    For concise notation, let us define $\widehat{v}_k := \sum_{l=0}^{k} \beta_2^{k-l} \Delta_l$. With that, we have (from \cref{eq:86-dec15}):
    \begin{equation}
        \label{eq:90-dec15}
        \Delta_{k+1} \leq \Bigg(1 - \frac{\alpha \sqrt{({\tilde{\mu}}/{2})}}{\sqrt{{\widehat{v}_k + \max\big(\bar{\Delta}^{*}_{0}, R\big) \delta_{k}^{2}}}}\Bigg)\Delta_k,
    \end{equation}
    until at least $\max\big(\big({g}_0^{(i)}\big)^2, \phi^2\big) \delta_{k}^{2} \geq \alpha^2 L_i^2$ for all $i \in [d]$; note that this can be rewritten as:
    \begin{equation}
        \alpha^2 \leq \max\Bigg(\min_{i \in [d]}\frac{\big({g}_0^{(i)}\big)^2}{L_i^2}, \min_{i \in [d]} \frac{\phi^2}{L_i^2}\Bigg) \delta_{k}^{2} = \max\Bigg(\min_{i \in [d]}\frac{\big({g}_0^{(i)}\big)^2}{L_i^2}, \frac{\phi^2}{L_\text{max}^2}\Bigg) \delta_{k}^{2}.
    \end{equation}
    Suppose the above holds for the first $\tilde{K}$ iterations. 
    Let us choose $\phi \leq \sqrt{2 L_\textup{min} \Delta_0}$, $\alpha \leq \sqrt{\frac{\Delta_0}{\tilde{\mu}}}$, $\beta_2 = 1 - \frac{\alpha^2 \tilde{\mu}}{16 \Delta_0}$ and $\delta_k = \frac{1}{\sqrt{2(1-\beta_2)}} \beta_2^{\frac{k}{2}}$. Then from \Cref{lem2-dec15},
    \begin{equation}
        \tilde{K} = \frac{\log\Big(\max\Big(\min_{i \in [d]}\frac{({g}_0^{(i)})^2}{L_i^2}, \frac{\phi^2}{L_\textup{max}^2}\Big) \frac{1}{2 (1-\beta_2) \alpha^2}\Big)}{\log \frac{1}{\beta_2}} = \frac{\log\Big(\max\Big(\min_{i \in [d]}\frac{({g}_0^{(i)})^2}{L_i^2}, \frac{\phi^2}{L_\textup{max}^2}\Big) \frac{8 \Delta_0}{\alpha^4 \tilde{\mu}}\Big)}{\log \frac{1}{\beta_2}},
    \end{equation}
    where the last step follows by plugging in $\beta_2 = 1 - \frac{\alpha^2 \tilde{\mu}}{16 \Delta_0}$.
    Also from \Cref{lem1-dec15}, we have for any $k \leq \tilde{K}$:
    \begin{equation}
        \Delta_{k} \leq \exp\Bigg(-2\Big(\beta_2^{-\frac{k}{2}} - 1\Big)\Bigg) \Delta_{0}.
    \end{equation}
    Now in order to have $\Delta_{K^{*}} \leq \varepsilon$, we can choose: 
    \begin{equation}
        \label{eq:198-jan18}
        K^{*} = \frac{\log\big(\frac{1}{4}\log^2\big(\frac{e^2 \Delta_{0}}{\varepsilon}\big)\big)}{\log \frac{1}{\beta_2}}.
    \end{equation}
    Now in order for our result to be applicable, let us ensure $K^{*} \leq \tilde{K}$; this yields:
    \begin{equation}
        \label{eq:94-dec15-2}
        \alpha \leq \mathcal{O}\Bigg(\Bigg(\max\Bigg(\min_{i \in [d]}\frac{({g}_0^{(i)})^2}{L_i^2}, \frac{\phi^2}{L_\textup{max}^2}\Bigg) \frac{\Delta_0}{\tilde{\mu}}\Bigg)^{1/4} \log^{-1/2}\Big(\frac{\Delta_0}{\varepsilon}\Big)\Bigg).
    \end{equation}
    Per \cref{eq:230-jan19} (in the proof of \Cref{lem2-dec15}), $\max\Big(\min_{i \in [d]}\frac{({g}_0^{(i)})^2}{L_i^2}, \frac{\phi^2}{L_\textup{max}^2}\Big) \leq \frac{2 \Delta_0}{\tilde{\mu}}$. Hence, the constraint in \cref{eq:94-dec15-2} satisfies our earlier condition on $\alpha$, namely $\alpha \leq \sqrt{\frac{\Delta_0}{\tilde{\mu}}}$. From \Cref{lower-bound}, $({g}_0^{(i)})^2 \geq 2 \mu_{i,0} {\Delta}_{0}$. Using this in \cref{eq:94-dec15-2} and choosing $\phi = \sqrt{2 L_\textup{min} \Delta_0}$, the following is a valid range for $\alpha$:
    \begin{equation}
        \alpha \leq \mathcal{O}\Bigg(\Bigg(\max\Bigg(\min_{i \in [d]}\frac{\mu_{i,0}}{L_i^2}, \frac{L_\text{min}}{L_\textup{max}^2}\Bigg) \frac{\Delta_0^2}{\tilde{\mu}}\Bigg)^{1/4} \log^{-1/2}\Big(\frac{\Delta_0}{\varepsilon}\Big)\Bigg).
    \end{equation}
    Further, using \Cref{lower-bound}, we have that $\frac{\mu_{i,0}}{L_i} \geq \frac{1}{{\kappa}_{\textup{max},0}}$ for all $i \in [d]$. 
    Using this above along with the fact that $\min_{i \in [d]} \frac{1}{L_i} = \frac{1}{L_\text{max}}$, we can pick the following value of $\alpha$:
    \begin{equation}
        \label{eq:208-jan20}
        \alpha = \mathcal{O}\Bigg(\Bigg(\max\Bigg(\frac{1}{\kappa_{\text{max},0}}, \frac{L_\text{min}}{L_\textup{max}}\Bigg) \frac{\Delta_0^2}{\tilde{\mu} L_\textup{max}}\Bigg)^{1/4} \log^{-1/2}\Big(\frac{\Delta_0}{\varepsilon}\Big)\Bigg).
    \end{equation}
    Next, recalling that $\beta_2 = 1 - \frac{\alpha^2 \tilde{\mu}}{16 \Delta_0}$, $\log \frac{1}{\beta_2} \geq \frac{\alpha^2 \tilde{\mu}}{16 \Delta_0}$ using \Cref{log-fact}. Plugging this in the expression of $K^{*}$ (\cref{eq:198-jan18}), we get:
    \begin{equation}
        K^{*} = \mathcal{O}\Bigg(\frac{\Delta_0}{\alpha^2 \tilde{\mu}}{\log\Big(\log\Big(\frac{\Delta_0}{\varepsilon}\Big)\Big)}\Bigg).
    \end{equation}
    Plugging in the value of $\alpha$ from \cref{eq:208-jan20}, we get:
    \begin{equation}
        \label{eq:96-dec15-2}
        K^{*} = \mathcal{O}\Bigg(\sqrt{\frac{L_\text{max}}{\tilde{\mu}}} \min \Bigg(\sqrt{{\kappa_{\text{max},0}}}, \sqrt{\frac{L_\text{max}}{L_\text{min}}}\Bigg) \log\Big(\frac{\Delta_0}{\varepsilon}\Big) {\log\Big(\log\Big(\frac{\Delta_0}{\varepsilon}\Big)\Big)}\Bigg).
    \end{equation}
\end{proof}

\begin{lemma}[Similar to \Cref{lem1-sep30}]
\label{lem1-dec15}
Suppose \cref{eq:90-dec15} holds: $$\Delta_{k+1} \leq \Bigg(1 - \frac{\alpha \sqrt{({\tilde{\mu}}/{2})}}{\sqrt{{\widehat{v}_k + \max\big(\bar{\Delta}^{*}_{0}, R\big)  \delta_{k}^{2}}}}\Bigg)\Delta_k,$$
for $k = 0$ through to $k = \tilde{K}$, where $\alpha^2 \leq \max\Big(\min_{i \in [d]}\frac{({g}_0^{(i)})^2}{L_i^2}, \frac{\phi^2}{L_\textup{max}^2}\Big) \delta_{k}^{2}$ for all $k \in \{0,\ldots,\tilde{K}\}$. 
Suppose we set $\phi \leq \sqrt{2 L_\textup{min} \Delta_0}$, $\alpha \leq \sqrt{\frac{\Delta_0}{\tilde{\mu}}}$, $\beta_2 = 1 - \frac{\alpha^2 \tilde{\mu}}{16 \Delta_0}$ and $\delta_k = \frac{1}{\sqrt{2(1-\beta_2)}} \beta_2^{\frac{k}{2}}$ for $k \geq 0$. Then for any $k \leq \tilde{K}$, we have:
\begin{equation*}
    \Delta_{k+1} \leq \exp\Bigg(-2\Big(\beta_2^{-(k+1)/2} - 1\Big)\Bigg) \Delta_{0}.
\end{equation*}
\end{lemma}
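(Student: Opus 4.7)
I will mirror the two-stage template of Lemma~\ref{lem1-sep30}: first extract a crude geometric bound $\Delta_k \le \gamma^k \Delta_0$ with $\gamma$ only slightly smaller than $\beta_2$, then recycle that bound inside the definition of $\widehat v_k$ to obtain the sharp, doubly-exponential rate claimed.

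\textbf{Stage 1 (loose bound).} The lemma's hypothesis immediately gives $\Delta_{k+1}\le\Delta_k$ for $k\le\tilde K$, hence $\Delta_l\le\Delta_0$ throughout. Since $\widehat v_k=\sum_{l=0}^k\beta_2^{k-l}\Delta_l$, this yields $\widehat v_k\le\Delta_0/(1-\beta_2)$. The chosen $\delta_k$ satisfies $\delta_k\le\delta_0=1/\sqrt{2(1-\beta_2)}$, and the hypothesis $\phi\le\sqrt{2L_{\textup{min}}\Delta_0}$ forces $R=\phi^2/(2L_{\textup{min}})\le\Delta_0$; together with $\bar\Delta^*_0\le\Delta_0$ this gives
\begin{equation*}
    \widehat v_k+\max(\bar\Delta^*_0,R)\,\delta_k^2 \le \frac{2\Delta_0}{1-\beta_2}.
\end{equation*}
Substituting into the lemma's recursion yields $\Delta_{k+1}\le\gamma\,\Delta_k$ with $\gamma := 1-\alpha\sqrt{\tilde\mu(1-\beta_2)/(8\Delta_0)}$, and hence $\Delta_l\le\gamma^l\Delta_0$ for every $l\le\tilde K+1$.

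\textbf{Stage 2 (refined bound).} Plugging $\Delta_l\le\gamma^l\Delta_0$ back into $\widehat v_k$ produces
\begin{equation*}
    \widehat v_k \le \sum_{l=0}^{k}\beta_2^{k-l}\gamma^l \Delta_0 = \Big(\frac{\beta_2^{k+1}-\gamma^{k+1}}{\beta_2-\gamma}\Big)\Delta_0.
\end{equation*}
The choice $\beta_2=1-\alpha^2\tilde\mu/(16\Delta_0)$ is calibrated precisely so that $\gamma=1-\alpha^2\tilde\mu/(8\Delta_0)$ obeys $\beta_2-\gamma=1-\beta_2$, giving $\widehat v_k\le \beta_2^{k+1}\Delta_0/(1-\beta_2)$. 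The constraint $\alpha\le\sqrt{\Delta_0/\tilde\mu}$ forces $\beta_2\ge 1/2$, so the chosen $\delta_k$ also satisfies $\delta_k^2\le \beta_2^{k+1}/(1-\beta_2)$; combining these, the full denominator in the recursion is bounded by $2\beta_2^{k+1}\Delta_0/(1-\beta_2)$. Inserting this and applying $1-z\le e^{-z}$ yields the per-iteration decay
\begin{equation*}
    \Delta_{k+1} \le \exp\!\Big(-\frac{\alpha\sqrt{\tilde\mu(1-\beta_2)}\,\beta_2^{-(k+1)/2}}{2\sqrt{\Delta_0}}\Big)\Delta_k.
\end{equation*}

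\textbf{Telescoping and main obstacle.} Unfolding this recursion introduces the geometric sum $\sum_{l=0}^{k}\beta_2^{-(l+1)/2}=(\beta_2^{-(k+1)/2}-1)/(1-\sqrt{\beta_2})$. Bounding $1-\sqrt{\beta_2}\le 1-\beta_2$ via $\sqrt{\beta_2}\ge\beta_2$, then substituting $1-\beta_2=\alpha^2\tilde\mu/(16\Delta_0)$, the prefactor collapses cleanly: $\alpha\sqrt{\tilde\mu}/(2\sqrt{\Delta_0(1-\beta_2)}) = 2$, delivering exactly the claimed $\Delta_{k+1}\le\exp(-2(\beta_2^{-(k+1)/2}-1))\Delta_0$. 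The delicate point of the argument is threading the parameter choices so that three things line up simultaneously: (i) $\max(\bar\Delta^*_0, R)\le\Delta_0$, which is what pins down the admissible range of $\phi$; (ii) the algebraic identity $\beta_2-\gamma=1-\beta_2$, which is the sole purpose of the specific constant $16$ inside $\beta_2$ and is what allows Stage~2 to cancel the $1/(1-\beta_2)$ factor left over from Stage~1; and (iii) the telescoping constant reducing to exactly $2$. Any looser calibration of $\beta_2$ would break one of (ii) or (iii) and degrade the final rate.
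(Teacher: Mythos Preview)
Your proof is correct and follows exactly the paper's two-stage argument: a crude geometric bound from the uniform $\Delta_l\le\Delta_0$ estimate, then a refined bound using $\Delta_l\le\gamma^l\Delta_0$ together with the calibration $\beta_2-\gamma=1-\beta_2$, and finally telescoping with $1-\sqrt{\beta_2}\le 1-\beta_2$. One small arithmetic slip: in Stage~1 your $\gamma$ should read $1-\alpha\sqrt{\tilde\mu(1-\beta_2)/(4\Delta_0)}$ (not $8\Delta_0$), which is exactly what makes your Stage~2 value $\gamma=1-\alpha^2\tilde\mu/(8\Delta_0)$ and the identity $\beta_2-\gamma=1-\beta_2$ come out correctly.
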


\begin{proof}
    Similar to \Cref{lem1-sep30}, we perform the analysis in two stages.
    \\
    \\
    \textbf{First Stage.} From \cref{eq:90-dec15}, $\Delta_0 \geq \Delta_1 \geq \ldots \geq \Delta_{\tilde{K}+1}$. So for any $k \leq \tilde{K}$:
    \begin{equation}
    \widehat{v}_k = {\sum_{l=0}^k \beta_2^{k-l} \Delta_l} \leq {\sum_{l=0}^k \beta_2^{k-l} \Delta_0} \leq \frac{\Delta_0}{1-\beta_2}.
    \end{equation}
    Further, we have chosen the ${\delta}_k$'s so that ${\delta}_k \leq {\delta}_0 \leq \frac{1}{\sqrt{1-\beta_2}}$ for all $k \geq 1$. Also, with our restriction on $\phi$, we have that $R = \frac{\phi^2}{2 L_\text{min}} \leq \Delta_0$. Moreover, $\bar{\Delta}^{*}_{0} \leq \Delta_0$ by definition.
    Thus:
    \begin{equation}
        \max\big(\bar{\Delta}^{*}_{0}, R\big) \leq \Delta_0,
    \end{equation}
    and hence
    \begin{equation}
        \widehat{v}_k + \max\big(\bar{\Delta}^{*}_{0}, R\big) \delta_{k}^{2} \leq \widehat{v}_k + \Delta_0 \delta_{k}^{2} \leq \frac{2 \Delta_0}{1-\beta_2}.
    \end{equation}
     Plugging this into \cref{eq:90-dec15} gives us:
    \begin{equation}
        \label{eq:204-jan18}
        \Delta_{k+1} \leq \Bigg(1 - \frac{{\alpha} \sqrt{\tilde{\mu} (1-\beta_2)}}{2 \sqrt{\Delta_0}}\Bigg) \Delta_{k}.
    \end{equation}
    Define $\gamma := \Big(1 - \frac{{\alpha} \sqrt{\tilde{\mu} (1-\beta_2)}}{2 \sqrt{\Delta_0}}\Big)$. Unfolding the recursion in \cref{eq:204-jan18} yields:
    \begin{equation}
        \label{eq:94-dec15}
        \Delta_{k+1} \leq \gamma^{k+1} \Delta_0.
    \end{equation}
    This bound is loose but we shall obtain a tighter bound using this bound. 
    \\
    \\
    \noindent \textbf{Second Stage.} With our choice of $\beta_2 = 1 - \frac{\alpha^2 \tilde{\mu}}{16 \Delta_0}$, we have $\gamma = 1 - \frac{\alpha^2 \tilde{\mu}}{8 \Delta_0}$ and $\beta_2 - \gamma = {1 - \beta_2}$. Now for any $k \leq \tilde{K}$, we have:
    \begin{flalign}
    \widehat{v}_k = \sum_{l=0}^{k} \beta_2^{k-l} \Delta_l & \leq \sum_{l=0}^{k} \beta_2^{k-l} \gamma^l \Delta_0
    \\
    & = \Bigg(\frac{\beta_2^{k+1} - \gamma^{k+1}}{\beta_2 - \gamma}\Bigg) \Delta_0
    \\
    & \leq \Big(\frac{\beta_2^{k+1}}{1-\beta_2}\Big) \Delta_0,
    \end{flalign}
    where the last step follows from the fact that $\beta_2 - \gamma = {1 - \beta_2}$. Note that with our constraint of $\alpha \leq \sqrt{\frac{\Delta_0}{\tilde{\mu}}}$, we have:
    \begin{equation}
        \label{eq:205-jan18-new}
        \beta_2 \geq \frac{1}{2}.
    \end{equation}
    Thus, we have $\delta_k = \frac{1}{\sqrt{2(1-\beta_2)}} \beta_2^{\frac{k}{2}} \leq \frac{1}{\sqrt{1-\beta_2}}\beta_2^{\frac{k+1}{2}} $ for all $k$. Hence:
    \begin{equation}
        \widehat{v}_k + \max\big(\bar{\Delta}^{*}_{0}, R\big) \delta_{k}^{2} \leq \widehat{v}_k + \Delta_0 \delta_{k}^{2} \leq \Big(\frac{2 \beta_2^{k+1}}{1-\beta_2}\Big) \Delta_0.
    \end{equation}
    Plugging this into \cref{eq:90-dec15}, we get:
    \begin{flalign}
        \label{eq:126-dec23}
        \Delta_{k+1} & \leq \Bigg(1 - \frac{\alpha \sqrt{\tilde{\mu} (1-\beta_2)} \beta_2^{-{(k+1)/2}}}{2 \sqrt{\Delta_0}}\Bigg) \Delta_k.
    \end{flalign}
    Our condition in \Cref{lem2-dec15} will ensure $\frac{\alpha \sqrt{\tilde{\mu} (1-\beta_2)} \beta_2^{-{(k+1)/2}}}{2 \sqrt{\Delta_0}} \leq 1$ for all $k \leq \tilde{K}$; refer to its statement. Next, using the fact that $\exp(-z) \geq 1-z$ for $z \in \mathbb{R}$ above, we get:
    \begin{equation}
        \label{eq:100-dec15}
        \Delta_{k+1} \leq \exp\Bigg(-\frac{\alpha \sqrt{\tilde{\mu} (1-\beta_2)} \beta_2^{-{(k+1)/2}}}{2 \sqrt{\Delta_0}}\Bigg) \Delta_k.
    \end{equation}
    Unfolding the recursion in \cref{eq:100-dec15} gives us:
    \begin{flalign}
    \Delta_{k+1}  & \leq \exp\Bigg(- \frac{\alpha \sqrt{\tilde{\mu}(1-\beta_2)} \beta_2^{-1/2}}{2 \sqrt{\Delta_0}}
    \sum_{l=0}^{k}\big(\beta_2^{-1/2}\big)^l\Bigg)\Delta_{0}
    \\
    & = \exp\Bigg(-\frac{\alpha \sqrt{\tilde{\mu}(1-\beta_2)}}{2 \sqrt{\Delta_0}} \Bigg(\frac{\beta_2^{-(k+1)/2} - 1}{1 - \sqrt{\beta_2}}\Bigg)\Bigg)\Delta_0.
    \end{flalign}
    Recall that $\beta_2 = 1 - \frac{\alpha^2 \tilde{\mu}}{16 \Delta_0}$. Using the fact that $\sqrt{\beta_2} \geq \beta_2$ above, we get:
    \begin{equation}
    \label{eq:130-dec23}
    \Delta_{k+1} \leq \exp\Bigg(-2\Big(\beta_2^{-(k+1)/2} - 1\Big)\Bigg)\Delta_{0}.
    \end{equation}
\end{proof}

\begin{lemma}[{Similar to \Cref{lem1-oct1}}]
    \label{lem2-dec15}
    In the setting of \Cref{lem1-dec15}, $\alpha^2 \leq \max\Big(\min_{i \in [d]}\frac{({g}_0^{(i)})^2}{L_i^2}, \frac{\phi^2}{L_\textup{max}^2}\Big) \delta_{k}^{2}$ until: $$\tilde{K} = \frac{\log\Big(\max\Big(\min_{i \in [d]}\frac{({g}_0^{(i)})^2}{L_i^2}, \frac{\phi^2}{L_\textup{max}^2}\Big) \frac{1}{2 (1-\beta_2) \alpha^2}\Big)}{\log \frac{1}{\beta_2}}$$
    iterations. Further, this value of $\tilde{K}$ also ensures the condition after \cref{eq:126-dec23} in the proof of \Cref{lem1-dec15}, i.e.,
    \begin{equation}
        \label{eq:215-jan18}
        \frac{\alpha \sqrt{\tilde{\mu} (1-\beta_2)} \beta_2^{-{(k+1)/2}}}{2 \sqrt{\Delta_0}} \leq 1 \text{ } \forall \text{ } k \leq \tilde{K},
    \end{equation}
    also holds.
\end{lemma}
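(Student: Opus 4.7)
The plan is to exploit the closed form $\delta_k^2 = \beta_2^k/(2(1-\beta_2))$ to reduce both claims to a single geometric inequality in $k$, and then handle the consistency condition~(\ref{eq:215-jan18}) by combining the first part with the lower bound $\beta_2 \geq 1/2$ from \cref{eq:205-jan18-new}. Write $M := \max\bigl(\min_{i \in [d]} (g_0^{(i)})^2/L_i^2,\, \phi^2/L_\textup{max}^2\bigr)$ for brevity. The first assertion is then just bookkeeping, while the second assertion reduces to a clean numerical inequality about $M$.

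For the first claim, substituting the value of $\delta_k$ turns the desired inequality $\alpha^2 \leq M \delta_k^2$ into $\beta_2^k \geq 2(1-\beta_2)\alpha^2/M$. Since $\log \beta_2 < 0$ and the left-hand side is monotonically decreasing in $k$, the inequality persists precisely for $k \leq \log\bigl(M/(2(1-\beta_2)\alpha^2)\bigr)/\log(1/\beta_2)$, which is exactly $\tilde K$ as stated.

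For the second claim, I would first rewrite \cref{eq:215-jan18} as $\beta_2^{k+1} \geq \alpha^2 \tilde\mu (1-\beta_2)/(4 \Delta_0)$. For any $k \leq \tilde K$, the first part gives $\beta_2^k \geq 2(1-\beta_2)\alpha^2/M$, and multiplying by $\beta_2 \geq 1/2$ (from \cref{eq:205-jan18-new}) yields $\beta_2^{k+1} \geq (1-\beta_2)\alpha^2/M$. It therefore suffices to prove the parameter-only inequality $M \leq 4 \Delta_0/\tilde\mu$. This is the one place where the proof really uses the hyperparameter choices, and it is the main (minor) obstacle: I will bound each term inside the $\max$ defining $M$ separately. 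By \Cref{lem12-jan18} and \Cref{lower-bound}, $(g_0^{(i)})^2 \leq 2 L_i \bar\Delta_{i,0} \leq 2 L_i \Delta_0$, and since $L_i \geq \tilde\mu$ we get $\min_i (g_0^{(i)})^2/L_i^2 \leq 2\Delta_0/\tilde\mu$ (this is the same calculation as \cref{eq:199-jan19}). For the second term, the choice $\phi \leq \sqrt{2 L_\textup{min} \Delta_0}$ gives $\phi^2/L_\textup{max}^2 \leq 2 L_\textup{min} \Delta_0/L_\textup{max}^2 \leq 2\Delta_0/\tilde\mu$ using $\tilde\mu \leq L_\textup{min} \leq L_\textup{max}$. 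Combining the two bounds yields $M \leq 2\Delta_0/\tilde\mu \leq 4\Delta_0/\tilde\mu$, which finishes the proof.

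In summary, the proof is essentially mechanical once one notes that $\delta_k^2$ decays geometrically at rate $\beta_2$ and that the chosen $\phi$ and the PL-like bound on $(g_0^{(i)})^2$ both conspire to keep $M$ at most $2\Delta_0/\tilde\mu$. The only step that requires any care is the last one, and that is really a sanity check on the hyperparameter settings of \Cref{thm-adanorm-plc-full} rather than a separate argument.
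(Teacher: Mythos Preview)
Your proposal is correct and follows essentially the same route as the paper's proof: substitute the closed form of $\delta_k$ to obtain the geometric threshold $\tilde K$, rewrite \cref{eq:215-jan18} as a lower bound on $\beta_2^{k+1}$, use $\beta_2 \geq 1/2$ from \cref{eq:205-jan18-new}, and reduce everything to the parameter inequality $M \leq 2\Delta_0/\tilde\mu$, which is verified term-by-term exactly as in \cref{eq:230-jan19}. The only quibble is that the citation of \Cref{lower-bound} is unnecessary for the upper bound on $(g_0^{(i)})^2$; \Cref{lem12-jan18} (together with $\bar\Delta_{i,0}\leq\Delta_0$ and $\tilde\mu\leq L_i$ from \Cref{new-PLC}) already suffices.
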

\begin{proof}
Plugging in our choice of $\delta_k$ from \Cref{lem1-dec15} into our requirement of $$\alpha^2 \leq \max\Big(\min_{i \in [d]}\frac{({g}_0^{(i)})^2}{L_i^2}, \frac{\phi^2}{L_\textup{max}^2}\Big) \delta_{k}^{2},$$ we conclude that as long as 
\begin{equation}
    \label{eq:130-dec23-new}
    \max\Bigg(\min_{i \in [d]}\frac{({g}_0^{(i)})^2}{L_i^2}, \frac{\phi^2}{L_\textup{max}^2}\Bigg) \Big(\frac{\beta_2^{{k}}}{{2(1-\beta_2)}}\Big) \geq {\alpha^2},
\end{equation}
we are good. 
Let us verify that \cref{eq:130-dec23-new} ensures the condition in \cref{eq:215-jan18}. To that end, note that \cref{eq:215-jan18} is equivalent to:
\begin{equation}
    \label{eq:217-jan18}
    \frac{4 \Delta_0 \beta_2^{k+1}}{\tilde{\mu} (1-\beta_2)} \geq \alpha^2.
\end{equation}
But from \cref{eq:205-jan18-new}, we have $\beta_2 \geq \frac{1}{2}$ and therefore, the above is ensured by:
\begin{equation}
    \label{eq:132-dec23}
    \frac{2 \Delta_0 \beta_2^{k}}{\tilde{\mu} (1-\beta_2)} \geq \alpha^2.
\end{equation}
From \cref{eq:199-jan19}, recall that $\frac{({g}_0^{(i)})^2}{L_i^2} \leq \frac{2 \bar{\Delta}^{*}_{0}}{\tilde{\mu}}$ for any $i \in [d]$. Since $\bar{\Delta}^{*}_{0} \leq \Delta_0$, we have:
\begin{equation}
    \min_{i \in [d]} \frac{\big({g}_0^{(i)}\big)^2}{L_i^2} \leq \frac{2 \Delta_{0}}{\tilde{\mu}}.
\end{equation}
Also, with our restriction of $\phi \leq \sqrt{2 L_\textup{min} \Delta_0}$ from \Cref{lem1-dec15}, we have:
\begin{equation}
    \frac{\phi^2}{L_\textup{max}^2} \leq \frac{2 L_\textup{min} \Delta_0}{L_\textup{max}^2} \leq \frac{2 \Delta_0}{L_\textup{max}} \leq \frac{2 \Delta_0}{\tilde{\mu}}.
\end{equation}
Therefore,
\begin{equation}
    \label{eq:230-jan19}
    \max\Bigg(\min_{i \in [d]}\frac{({g}_0^{(i)})^2}{L_i^2}, \frac{\phi^2}{L_\textup{max}^2}\Bigg) \leq \frac{2 \Delta_0}{\tilde{\mu}}.
\end{equation}
and so:
\begin{equation}
    \max\Bigg(\min_{i \in [d]}\frac{({g}_0^{(i)})^2}{L_i^2}, \frac{\phi^2}{L_\textup{max}^2}\Bigg) \Big(\frac{\beta_2^{{k}}}{{2(1-\beta_2)}}\Big) \leq \frac{\Delta_0 \beta_2^k}{\tilde{\mu} (1-\beta_2)}.
\end{equation}
Hence, the condition in \cref{eq:130-dec23-new} is stricter than the one in \cref{eq:132-dec23}. In other words, the condition in \cref{eq:130-dec23-new} ensures \cref{eq:215-jan18}. 
Finally, note that \cref{eq:130-dec23-new} holds till
\begin{equation}
    \tilde{K} = \frac{\log\Big(\max\Big(\min_{i \in [d]}\frac{({g}_0^{(i)})^2}{L_i^2}, \frac{\phi^2}{L_\textup{max}^2}\Big) \frac{1}{2 (1-\beta_2) \alpha^2}\Big)}{\log \frac{1}{\beta_2}}
\end{equation}
iterations.
\end{proof}

\begin{lemma}[\textbf{\enquote{Descent Lemma} for Per-Coordinate Smoothness}]
    \label{lem11-jan18}
    Suppose $f$ is $\{L_i\}_{i=1}^d$-smooth. Then for any $\bm{x}, \bm{y} \in \mathbb{R}^d$, we have:
    \begin{equation*}
        f(\bm{y}) \leq f(\bm{x}) + \nabla f(\bm{x})^\top (\bm{y} - \bm{x}) + \frac{1}{2}\sum_{i=1}^d {L_i} \big({y}^{(i)} - {x}^{(i)}\big)^2.
    \end{equation*}
\end{lemma}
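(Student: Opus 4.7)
The plan is to telescope $f(\bm{y}) - f(\bm{x})$ along a coordinate-by-coordinate path and apply the standard one-dimensional descent lemma on each segment, exploiting the fact that per-coordinate smoothness (\Cref{def-1-jan20}) makes $\nabla f(\cdot)^{(i)}$ insensitive to changes in coordinates other than $i$. Concretely, I would define the intermediate points $\bm{z}_0 := \bm{x}$ and $\bm{z}_i := \bm{z}_{i-1} + (y^{(i)} - x^{(i)})\bm{e}_i$ for $i \in [d]$, so that $\bm{z}_d = \bm{y}$ and $\bm{z}_{i-1}$ agrees with $\bm{x}$ on coordinate $i$ (only the first $i-1$ coordinates have been updated). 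Then
\begin{equation*}
f(\bm{y}) - f(\bm{x}) = \sum_{i=1}^{d} \big(f(\bm{z}_i) - f(\bm{z}_{i-1})\big).
\end{equation*}

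For each summand, I would consider the one-dimensional slice $g_i(t) := f(\bm{z}_{i-1} + t\bm{e}_i)$. Its derivative is $g_i'(t) = \nabla f(\bm{z}_{i-1} + t\bm{e}_i)^{(i)}$, and per-coordinate smoothness directly gives $|g_i'(t) - g_i'(s)| \leq L_i |t-s|$, so $g_i$ is $L_i$-smooth on $\mathbb{R}$. The standard one-dimensional descent lemma then yields, for $t_i := y^{(i)} - x^{(i)}$,
\begin{equation*}
f(\bm{z}_i) - f(\bm{z}_{i-1}) = g_i(t_i) - g_i(0) \leq g_i'(0)\, t_i + \tfrac{L_i}{2} t_i^2 = \nabla f(\bm{z}_{i-1})^{(i)} (y^{(i)} - x^{(i)}) + \tfrac{L_i}{2}(y^{(i)} - x^{(i)})^2.
\end{equation*}

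The key step is then to replace $\nabla f(\bm{z}_{i-1})^{(i)}$ by $\nabla f(\bm{x})^{(i)}$. This is where per-coordinate smoothness is used a second time: applying \Cref{def-1-jan20} to the pair $(\bm{x}, \bm{z}_{i-1})$ at coordinate $i$ gives
\begin{equation*}
\big|\nabla f(\bm{z}_{i-1})^{(i)} - \nabla f(\bm{x})^{(i)}\big| \leq L_i |z_{i-1}^{(i)} - x^{(i)}| = 0,
\end{equation*}
since by construction $\bm{z}_{i-1}$ and $\bm{x}$ share the $i$-th coordinate. Substituting and summing over $i \in [d]$ yields exactly
\begin{equation*}
f(\bm{y}) - f(\bm{x}) \leq \sum_{i=1}^{d} \nabla f(\bm{x})^{(i)} (y^{(i)} - x^{(i)}) + \tfrac{1}{2}\sum_{i=1}^{d} L_i (y^{(i)} - x^{(i)})^2 = \nabla f(\bm{x})^\top (\bm{y} - \bm{x}) + \tfrac{1}{2}\sum_{i=1}^{d} L_i (y^{(i)} - x^{(i)})^2.
\end{equation*}

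There is no real obstacle beyond recognizing that the per-coordinate smoothness bound in \Cref{def-1-jan20} depends only on $|y^{(i)} - x^{(i)}|$ (not on $\|\bm{y}-\bm{x}\|$), which is precisely what makes the \enquote{frozen coordinate} argument for $\nabla f(\bm{z}_{i-1})^{(i)} = \nabla f(\bm{x})^{(i)}$ work. Everything else is a routine combination of the standard 1D descent lemma with telescoping, so the proof should be short.
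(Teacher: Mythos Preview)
Your proof is correct but takes a different route from the paper. The paper integrates along the straight-line path $\bm{x}_\tau = \bm{x} + \tau(\bm{y}-\bm{x})$ via the fundamental theorem of calculus, bounding each term $|\nabla f(\bm{x}_\tau)^{(i)} - \nabla f(\bm{x})^{(i)}|$ by $L_i|x_\tau^{(i)} - x^{(i)}| = L_i \tau |y^{(i)}-x^{(i)}|$ and then integrating $\tau$ from $0$ to $1$. Your approach instead walks along an axis-aligned path and telescopes, applying the one-dimensional descent lemma on each leg. The distinctive step in your argument is the observation that \Cref{def-1-jan20} forces $\nabla f(\bm{z}_{i-1})^{(i)} = \nabla f(\bm{x})^{(i)}$ \emph{exactly} whenever $z_{i-1}^{(i)} = x^{(i)}$, i.e., that the $i$-th partial is completely insensitive to the other coordinates. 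This is a genuine structural consequence of the paper's (rather strong) definition that the straight-line proof does not surface; it effectively shows the function is separable at the gradient level. Both arguments are short and elementary, and both yield the same constant $\tfrac{1}{2}$; the paper's version is closer to the textbook global-smoothness proof, while yours makes the separability implied by per-coordinate smoothness more visible.
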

\begin{proof}
    Let $\bm{x}_\tau = \bm{x} + \tau(\bm{y} - \bm{x})$ for $\tau \in [0,1]$. Using the fundamental theorem of calculus, we have:
    \begin{flalign}
        \big|f(\bm{y}) - \big(f(\bm{x}) + \nabla f(\bm{x})^\top (\bm{y} - \bm{x}) \big)\big| &= \Big|\int_{0}^1 \big(\nabla f(\bm{x}_\tau) - \nabla f(\bm{x})\big)^\top (\bm{y} - \bm{x}) d\tau\Big|
        \\
        & \leq \int_{0}^1 \sum_{i=1}^d \Big|\Big(\nabla f(\bm{x}_\tau)^{(i)} - \nabla f(\bm{x})^{(i)}\Big) ({y}^{(i)} - {x}^{(i)})\Big| d\tau
        \\
        & \leq \int_{0}^1 \sum_{i=1}^d L_i \big|{x}_\tau^{(i)} - x^{(i)}\big| \big|{y}^{(i)} - {x}^{(i)}\big| d\tau
        \\
        & \leq \int_{0}^1 \sum_{i=1}^d L_i \big({y}^{(i)} - {x}^{(i)}\big)^2 \tau d \tau
        \\
        & = \frac{1}{2}\sum_{i=1}^d {L_i} \big({y}^{(i)} - {x}^{(i)}\big)^2 
    \end{flalign}
    Thus,
    \begin{equation}
        f(\bm{y}) \leq f(\bm{x}) + \nabla f(\bm{x})^\top (\bm{y} - \bm{x}) + \frac{1}{2}\sum_{i=1}^d {L_i} \big({y}^{(i)} - {x}^{(i)}\big)^2.
    \end{equation}
\end{proof}

\begin{lemma}
    \label{lem12-jan18}
    Suppose $f: \mathbb{R}^d \xrightarrow{} \mathbb{R}$ is $\{L_i\}_{i=1}^d$-smooth. Then for any $\bm{x} \in \mathbb{R}^d$ and $i \in [d]$, we have:
    \begin{equation*}
        {(\nabla f(\bm{x})^{(i)})^2} \leq 2 L_i \big(f(\bm{x}) - \bar{f}^{*}_i(\bm{x})\big),
    \end{equation*}
    where $\bar{f}^{*}_i(\bm{x}) = {\min}_{h \in \mathbb{R}} \text{ } f(\bm{x} + h \bm{e}_i)$. As a result, we also have:
    \begin{equation*}
        {(\nabla f(\bm{x})^{(i)})^2} \leq 2 L_i \big(f(\bm{x}) - {f}^{*}\big),
    \end{equation*}
    where $f^{*} = \min_{\bm{z} \in \mathbb{R}^d} f(\bm{z})$.
\end{lemma}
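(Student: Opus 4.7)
The plan is to prove this by applying the per-coordinate descent lemma (\Cref{lem11-jan18}) along a single coordinate direction, and then minimizing the resulting quadratic upper bound. This mimics the classical derivation of $\|\nabla f(\bm{x})\|_2^2 \leq 2 L (f(\bm{x}) - f^{*})$ for standard $L$-smoothness, except that we restrict the comparison point to lie on the line through $\bm{x}$ in the direction $\bm{e}_i$ so that only $L_i$ (rather than the global smoothness constant) appears.

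First, I would apply \Cref{lem11-jan18} with $\bm{y} = \bm{x} + h \bm{e}_i$ for an arbitrary $h \in \mathbb{R}$. Because $\bm{y} - \bm{x} = h \bm{e}_i$ is supported only on the $i^{\text{th}}$ coordinate, the sum $\sum_{j=1}^d L_j (y^{(j)} - x^{(j)})^2$ collapses to $L_i h^2$, and the inner product $\nabla f(\bm{x})^\top (\bm{y} - \bm{x})$ collapses to $h \nabla f(\bm{x})^{(i)}$. This yields
\begin{equation*}
f(\bm{x} + h \bm{e}_i) \leq f(\bm{x}) + h \, \nabla f(\bm{x})^{(i)} + \tfrac{L_i}{2} h^2.
\end{equation*}

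Next, I would minimize the right-hand side over $h \in \mathbb{R}$. The minimizer is $h^{\star} = -\nabla f(\bm{x})^{(i)} / L_i$, and plugging back in gives the upper bound $f(\bm{x}) - (\nabla f(\bm{x})^{(i)})^2 / (2 L_i)$. Since $\bar{f}^{*}_i(\bm{x}) = \min_{h \in \mathbb{R}} f(\bm{x} + h \bm{e}_i)$ is the minimum of the left-hand side over $h$, it must also be bounded above by this quantity, yielding
\begin{equation*}
\bar{f}^{*}_i(\bm{x}) \leq f(\bm{x}) - \frac{(\nabla f(\bm{x})^{(i)})^2}{2 L_i},
\end{equation*}
which is exactly the first claim after rearranging. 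The second claim then follows immediately by noting that $\bar{f}^{*}_i(\bm{x}) \geq \min_{\bm{z} \in \mathbb{R}^d} f(\bm{z}) = f^{*}$, since the one-dimensional infimum over the line $\{\bm{x} + h \bm{e}_i : h \in \mathbb{R}\}$ cannot go below the infimum over all of $\mathbb{R}^d$.

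There is no real obstacle here — the argument is essentially one line of calculus applied to a coordinate-restricted descent bound. The only thing worth being careful about is the direction of the inequality $\bar{f}^{*}_i(\bm{x}) \geq f^{*}$ (a restricted infimum is at least as large as the unrestricted one), which is what allows the stronger per-coordinate bound to imply the weaker global one.
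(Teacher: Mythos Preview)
Your proposal is correct and matches the paper's own proof essentially line for line: apply \Cref{lem11-jan18} with $\bm{y} = \bm{x} + h\bm{e}_i$, minimize the resulting one-dimensional quadratic over $h$, and then use $f^{*} \leq \bar{f}^{*}_i(\bm{x})$ for the second inequality. The only cosmetic difference is that the paper names the quadratic upper bound $g$ before minimizing, whereas you minimize it directly.
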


\begin{proof}
    From \Cref{lem11-jan18}, we have for any $\bm{x}, \bm{y}$:
    \begin{equation}
        \label{eq:225-jan18}
        f(\bm{y}) \leq g(\bm{y}) := f(\bm{x}) + \sum_{i=1}^d \nabla f(\bm{x})^{(i)} \big({y}^{(i)} - {x}^{(i)}\big) + \frac{1}{2}\sum_{i=1}^d {L_i} \big({y}^{(i)} - {x}^{(i)}\big)^2.
    \end{equation}
    Let $\bar{f}^{*}_i(\bm{x}) := {\min}_{h \in \mathbb{R}} \text{ } f(\bm{x} + h \bm{e}_i)$ and $\bar{g}^{*}_i(\bm{x}) := {\min}_{h \in \mathbb{R}} \text{ } g(\bm{x} + h \bm{e}_i)$; note that $\bar{f}^{*}_i(\bm{x}) \leq \bar{g}^{*}_i(\bm{x})$ $\forall$ $\bm{x} \in \mathbb{R}^d$. Since $g(\bm{x} + h \bm{e}_i) = f(\bm{x}) + \nabla f(\bm{x})^{(i)} h + \frac{{L_i} h^2}{2}$,
    we obtain: 
    \begin{equation}
        \bar{g}^{*}_i(\bm{x}) = f(\bm{x}) - \frac{\big(\nabla f(\bm{x})^{(i)}\big)^2}{2 L_i}.
    \end{equation}
    Thus, 
    \begin{equation}
        \bar{f}^{*}_i(\bm{x}) \leq f(\bm{x}) - \frac{\big(\nabla f(\bm{x})^{(i)}\big)^2}{2 L_i}.
    \end{equation}
    Rearranging this, we get:
    \begin{equation}
        {(\nabla f(\bm{x})^{(i)})^2} \leq 2 L_i \big(f(\bm{x}) - \bar{f}^{*}_i(\bm{x})\big).
    \end{equation}
    The other bound follows from the fact that $f_i^{*} \leq \bar{f}^{*}_i(\bm{x})$ for all $\bm{x} \in \mathbb{R}^d$.
\end{proof}

\section{Proof of Corollary~\ref{cor1-jan20}}
\label{cor1-jan20-pf}
\begin{proof}
For any $\bm{x}, \bm{y} \in \mathbb{R}^d$, we have for each $i \in [d]$:
\begin{equation}
    |\nabla f(\bm{y})^{(i)} - \nabla f(\bm{x})^{(i)}| \leq L_i |y^{(i)} - x^{(i)}|.
\end{equation}  
Thus:
\begin{flalign}
    \|\nabla f(\bm{y}) - \nabla f(\bm{x})\|_2^2 = \sum_{i=1}^d |\nabla f(\bm{y})^{(i)} - \nabla f(\bm{x})^{(i)}|^2 & \leq \sum_{i=1}^d L_i^2 |y^{(i)} - x^{(i)}|^2
    \\
    & \leq L_\text{max}^2 \sum_{i=1}^d |y^{(i)} - x^{(i)}|^2
    \\
    & = L_\text{max}^2 \|\bm{y} - \bm{x}\|_2^2.
\end{flalign}
Hence, $\|\nabla f(\bm{y}) - \nabla f(\bm{x})\|_2 \leq L_\text{max} \|\bm{y} - \bm{x}\|_2$. So $f$ is $L_\text{max}$-smooth.
\\
\\
We also have:
\begin{equation}
    \sqrt{2 \tilde{\mu}} \big(f(\bm{x}) - f^{*}\big) \leq \sum_{i=1}^d \frac{(\nabla f(\bm{x})^{(i)})^2}{\sqrt{2 L_i}} \text{ } \forall \text{ } \bm{x} \in \mathbb{R}^d.
\end{equation}
Note that $\frac{(\nabla f(\bm{x})^{(i)})^2}{\sqrt{2 L_i}} \leq \frac{(\nabla f(\bm{x})^{(i)})^2}{\sqrt{2 L_\text{min}}}$. Using this above, we get:
\begin{equation}
    \sqrt{2 \tilde{\mu}} \big(f(\bm{x}) - f^{*}\big) \leq \sum_{i=1}^d \frac{(\nabla f(\bm{x})^{(i)})^2}{\sqrt{2 L_\text{min}}} \leq \frac{\|\nabla f(\bm{x})\|_2^2}{\sqrt{2 L_\text{min}}} \implies \|\nabla f(\bm{x})\|_2^2 \geq 2 \sqrt{\tilde{\mu} L_\text{min}}.
\end{equation}
Hence, $f$ is $\sqrt{\tilde{\mu} L_\text{min}}$-PL.
\end{proof}

\end{document}